\documentclass[11pt, a4paper, copyright, gdm]{google}
\usepackage[utf8]{inputenc} % allow utf-8 input
\usepackage[T1]{fontenc}    % use 8-bit T1 fonts
\usepackage{url}            % simple URL typesetting
\usepackage{booktabs}       % professional-quality tables
\usepackage{nicefrac}       % compact symbols for 1/2, etc.
\usepackage{microtype}      % microtypography
\usepackage{xcolor}         % colors
\usepackage{amsmath, amssymb, amsthm, amsfonts}
\usepackage{mathtools}
\usepackage{bm}
\usepackage[skins,breakable,theorems]{tcolorbox}
\usepackage{booktabs}
\usepackage[all]{nowidow}
\usepackage{subcaption}
\hypersetup{
    breaklinks=true,
    citecolor=cyan
}
\usepackage{titletoc} % Table of Contents for Appendix
\usepackage{algorithm}
\usepackage{algpseudocode}
\usepackage{xspace}
\usepackage{pgfplots}
\usepgfplotslibrary{groupplots}
\pgfplotsset{compat=1.18}
\usepackage{listings}
\definecolor{codegreen}{rgb}{0,0.6,0}
\definecolor{codegray}{rgb}{0.5,0.5,0.5}
\definecolor{codepurple}{rgb}{0.58,0,0.82}
\definecolor{backcolour}{rgb}{0.95,0.95,0.92}
\definecolor{colorHO}{HTML}{1F618D}   % Deep Royal Blue
\definecolor{colorFFT}{HTML}{B03A2E}  % Deep Crimson
\definecolor{colorDEFT}{HTML}{1E8449} % Emerald Green
\definecolor{colorBase}{HTML}{7F8C8D} % Slate Gray

\definecolor{coreColor}{HTML}{1E88E5}   % Vibrant Blue for Core (Stable/Foundational)
\definecolor{slackColor}{HTML}{00B159}  % Emerald Green for Slack (Plastic/Active)
\definecolor{danger}{HTML}{E53935}      % Crimson for Constraints/Cuts
\definecolor{layerBgColor}{HTML}{FFF9C4} % Soft Pastel Yellow for Layer Backgrounds

\usepackage{tikz}
\usetikzlibrary{shapes, shapes.misc, shapes.geometric,  arrows.meta, positioning, fit, calc, decorations.pathreplacing, shadows, backgrounds, patterns}
\usetikzlibrary{positioning, fit, backgrounds, decorations.pathreplacing, arrows.meta, calc}

\newcommand{\norm}[1]{\left\lVert#1\right\rVert}
\newcommand{\normshort}[1]{\lVert#1\rVert}
\newcommand{\abs}[1]{\left\lvert#1\right\rvert}
\newcommand{\inner}[2]{\left\langle#1, #2\right\rangle}

\newcommand{\bVect}{\boldsymbol{b}}

\newcommand{\eVect}{\boldsymbol{e}}
\newcommand{\gVect}{\boldsymbol{g}}

\newcommand{\sVect}{\boldsymbol{s}}
\newcommand{\uVect}{\boldsymbol{u}}
\newcommand{\vVect}{\boldsymbol{v}}
\newcommand{\wVect}{\boldsymbol{w}}
\newcommand{\xVect}{\boldsymbol{x}}
\newcommand{\yVect}{\boldsymbol{y}}
\newcommand{\zVect}{\boldsymbol{z}}
\newcommand{\zeroVect}{\boldsymbol{0}}

\newcommand{\muVect}{\boldsymbol{\mu}}
\newcommand{\sigmaVect}{\boldsymbol{\sigma}}
\newcommand{\betaVect}{\boldsymbol{\beta}}
\newcommand{\gammaVect}{\boldsymbol{\gamma}}
\newcommand{\thetaVect}{\boldsymbol{\theta}}
\newcommand{\calA}{\mathcal{A}}
\newcommand{\calB}{\mathcal{B}}
\newcommand{\calC}{\mathcal{C}}
\newcommand{\calD}{\mathcal{D}}
\newcommand{\calF}{\mathcal{F}}
\newcommand{\calH}{\mathcal{H}}
\newcommand{\calI}{\mathcal{I}}
\newcommand{\calJ}{\mathcal{J}}
\newcommand{\calL}{\mathcal{L}}
\newcommand{\calM}{\mathcal{M}}
\newcommand{\calN}{\mathcal{N}}
\newcommand{\calO}{\mathcal{O}}
\newcommand{\calS}{\mathcal{S}}

\newcommand{\calV}{\mathcal{V}}
\newcommand{\calX}{\mathcal{X}}
\newcommand{\R}{\mathbb{R}}
\newcommand{\E}{\mathbb{E}}
\newcommand{\normPDF}{\phi}
\newcommand{\normCDF}{\Phi}
\newcommand{\Amat}{\boldsymbol{A}}

\newcommand{\Cmat}{\boldsymbol{C}}
\newcommand{\Emat}{\boldsymbol{E}}

\newcommand{\Gmat}{\boldsymbol{G}}

\newcommand{\Imat}{\boldsymbol{I}}

\newcommand{\Kmat}{\boldsymbol{K}}
\newcommand{\Mmat}{\boldsymbol{M}}
\newcommand{\Wmat}{\boldsymbol{W}}

\newcommand{\Wall}{\boldsymbol{W}_{\text{joint}}}
\newcommand{\Rmat}{\boldsymbol{R}}

\newcommand{\Lambdamat}{\boldsymbol{\Lambda}}
\newcommand{\ReLU}{\texttt{ReLU}\xspace}

\DeclareMathOperator{\Var}{Var}
\DeclareMathOperator{\Cov}{Cov}
\DeclareMathOperator{\sign}{sign}

\DeclareMathOperator{\argmin}{argmin}
\DeclareMathOperator{\argmax}{argmax}
\DeclareMathOperator{\diag}{diag}

\newcommand{\emc}[1]{{\textbf{\color[rgb]{0,.3,.6}#1}}}
\theoremstyle{definition}
\newtheorem{definition}{Definition}[section]
\newtheorem{theorem}{Theorem}[section]
\newtheorem{axiom}{Axiom}
\newtheorem{lemma}[theorem]{Lemma}
\newtheorem{proposition}[theorem]{Proposition}
\newtheorem{corollary}[theorem]{Corollary}
\newtheorem{assumption}{Assumption}

\tcolorboxenvironment{theorem}{
    colback=yellow!5!white,
    colframe=yellow!50!black,
    boxrule=0.5mm,
    arc=2mm
}

\tcolorboxenvironment{lemma}{
    colback=yellow!5!white,
    colframe=yellow!50!black,
    boxrule=0.5mm,
    arc=2mm
}

\tcolorboxenvironment{proposition}{
    colback=yellow!5!white,
    colframe=yellow!50!black,
    boxrule=0.5mm,
    arc=2mm
}

\tcolorboxenvironment{corollary}{
    colback=yellow!5!white,
    colframe=yellow!50!black,
    boxrule=0.5mm,
    arc=2mm
}

\tcolorboxenvironment{assumption}{
    colback=blue!5!white,
    colframe=blue!50!black,
    boxrule=0.5mm,
    arc=2mm
}

\tcolorboxenvironment{axiom}{
    colback=red!5!white,
    colframe=red!50!black,
    boxrule=0.5mm,
    arc=2mm
}

\newtcolorbox{insight}[2][]{
    colback=green!5!white,
    colframe=green!50!black,
    title=\textbf{#2},
    #1
}

\newtcolorbox{alertbox}[2][]{
    colback=yellow!5!white,
    colframe=red!50!black,
    title=\textbf{#2},
    #1
}

\uselogo{} 

\title{Hilbert Operator for Progressive Encoding (HOPE)\\[1ex]
\normalfont\fontsize{14}{18}\selectfont A Mathematical Framework for Deconstructing Learned Representations in Deep Networks}

\correspondingauthor{hmobahi@google.com, peterbartlett@google.com}
\reportnumber{}
\author[1]{Hossein Mobahi}
\author[1,2]{Peter L. Bartlett}
\affil[1]{\thepa{}{}}
\affil[2]{University of California, Berkeley}

\begin{abstract}
Deep neural networks encode complex representations, but deconstructing this internal knowledge remains a challenge. Given the link between learning and compression, network compression offers a promising lens to analyze this knowledge. However, standard compression heuristics often suffer from scale symmetries and architectural biases. To resolve these, we introduce Hilbert Operator for Progressive Encoding (HOPE), a mathematical framework to gradually deconstruct the representations in trained network weights.

HOPE shifts network compression from the discrete domain into a Hilbert space of continuous functions. By modeling individual neurons as rank-1 Hilbert-Schmidt operators, HOPE unifies pruning and neuron merging as low-rank subspace projection. Extending this formulation, HOPE introduces macro block eviction to encompass multi-layer structures like entire residual pathways under the same unified metric. This unified approach enables unbiased architectural decisions across layers with different types and sizes. HOPE is a data-free and hyperparameter-free framework. We present proof-of-concept experiments in model compression and fine-tuning to highlight the practical potential of our theory.
\end{abstract}

\begin{document}

\maketitle

\section{Introduction}
\label{sec:intro}
While deep neural networks learn complex representations, deconstructing this knowledge from numerical weights remains challenging. In this work, we use model compression as a measurable proxy task to study these internal representations objectively. Given the fundamental link between compression and learning \cite{rissanen1978modeling, hinton1993keeping}, which has recently been reinforced by demonstrating that LLMs are general-purpose compressors \cite{Deletang2023} and amortized algorithmic predictors \cite{Genewein2026}, we believe compression provides a promising lens to study this issue objectively. Deconstructing opaque networks through capacity reduction has foundational roots \cite{mozer1989skeletonization}. Viewed through a modern information-theoretic lens, learning is essentially the systematic discarding of task-irrelevant \textbf{noise} to isolate generalizable \textbf{core} patterns \cite{tishby1999information, shwartz2017opening}.

Consequently, we posit that \textit{progressive compression} is an effective tool for achieving this separation. Because core invariants resist pruning significantly longer than malleable slack, iteratively reducing capacity with minimal distortion naturally peels away the periphery to expose the network's universal feature space \cite{he2026demystifyingpruningworksrepresentation, nguyen2026depth, wang2026understanding}. This post-training deconstruction mirrors the network's learning dynamics: the ``coarse-graining'' dimensionality reduction that gradient descent originally used to build them \cite{dandi2025computational}. Indeed, theoretical analyses confirm that networks learn by incrementally adding effective units to model increasingly complex functions \cite{Zhangetal2025}. While biologically inspired paradigms like \cite{behrouz2026languagemodelsneedsleep} achieve this core-slack segregation by expanding a network to consolidate memories, we demonstrate that this separation can be accomplished efficiently through data-free compression.

Despite compression's promise, the opacity of deep networks poses hurdles for objectively identifying and reducing capacity. Simple heuristics such as magnitude-based pruning (e.g. norm of raw weights) often fail because these magnitudes are typically optimization artifacts rather than importance indicators \cite{Scholl2021, Tanaka2020PruningNN, hooker2019what}, as also long observed in network sparsification \cite{lecun1989optimal, hassibi1993optimal, frankle2019lottery}.

To address the shortcomings of magnitude based notions of capacity, we propose to transition from the physical parameter space to the function space, specifically the underlying transformation a neuron applies to its input. This perspective treats the entire neuron, rather than individual weight matrices, as the atomic unit of the network. Note that this transformation must be analyzed over the relevant data manifold rather than the entire input space. For instance, characterizing a neuron's behavior within a dog-versus-cat image classification task requires isolating the data manifold specific to those two classes, while discarding irrelevant classes and non-natural images. While one might attempt to approximate the input manifold to each neuron empirically by passing a finite dataset through the network, such finite-dimensional approximations tether the evaluation to specific samples. This reliance on explicit data sets can render the resulting architecture brittle to distribution shifts and disproportionately degrade performance on long-tail features \cite{hooker2019what}. Furthermore, empirical approaches that rely on continuous activation matching or curvature approximations \cite{molchanov2016pruning, luo2017thinet} incur a severe computational penalty, as they require numerically evaluating neural activations across the dataset \textit{repeatedly during every iteration} of the progressive model encoding.

To avoid both the drawbacks of weight-based parameterization and the pitfalls of empirical data dependence, we introduce the Hilbert Operator for Progressive Encoding (HOPE) framework. The core philosophy of HOPE is that evaluation and analysis of neurons must occur in the function space, the resulting compression must be executed discretely on the network's parameters. By lifting the parameter vectors and matrices of individual neurons into continuous functions, HOPE models each neuron as a rank-1 Hilbert-Schmidt operator. This abstraction unifies pruning and merging under a single theoretical paradigm: an optimal low-rank projection within a functional tensor space, where distances are measured by $\norm{ \cdot}_{\calH}$. Once the optimal reductions (pruning or merging) are computed in this pure functional space, the framework projects the resulting continuous operators back onto the network parameters to execute the compression.

A key advantage of HOPE is that it operates entirely \textbf{data-free} for networks utilizing Batch Normalization (BN).\footnote{This data-free approach relies on global BN statistics. For architectures lacking BN, HOPE easily adapts: it requires only a simple, \textbf{one-time} calibration pass over a small data batch to capture the necessary pre-activation statistics.} While prior research has successfully exploited BN moving averages to generate synthetic images for data-free knowledge distillation \cite{yin2020dreaming, micaelli2019zero}, HOPE leverages this information in a different manner. Instead of generating synthetic spatial data or relying on massive real datasets to drive input signals, HOPE applies the Maximum Entropy principle directly to the empirical BN statistics embedded within the trained model checkpoint. This yields a continuous surrogate for the local input distribution at each neuron, allowing the framework to \textbf{analytically} evaluate the integrals required for Hilbert space norms $\norm{ \cdot}_{\calH}$. This fully continuous, analytical perspective largely mitigates scaling symmetries inherent in raw parameters, and thus enables unbiased capacity measurements across heterogeneous layers without requiring a single real or synthetic data sample.

The primary value of HOPE lies in establishing a rigorous, \textbf{hyperparameter free} (and also data free in presence of BN) mathematical framework for the progressive encoding of trained deep networks. We present proof-of-concept applications for model compression and fine-tuning experiments to empirically validate these theoretical capabilities, rather than to establish exhaustive, large-scale benchmarks.

\paragraph{Paper Organization.} We begin with a literature review in Section~\ref{sec:related_works} and formally define a neuron within our framework in Section~\ref{sec:neuron}. To enable data-free evaluation, Section~\ref{sec:neural_signal_distribution} constructs a surrogate distribution constrained by Batch Normalization (BN) statistics. In Section~\ref{sec:capacity}, we lift discrete neurons into continuous Hilbert-Schmidt operators. Section~\ref{sec:layer_transition_costs} then derives a cost functional $\calJ$ that captures the distortion induced by pruning or merging via subspace projection. Building on this, Section~\ref{sec:parent_neuron} derives the optimal parent neuron in function space for a given pair of merged neurons, detailing how this continuous representation is mapped back to the network's discrete parameters. Section~\ref{sec:block_eviction} extends this projection metric to the macro level to evict larger residual blocks. Section~\ref{sec:rate_distortion} introduces a rate-distortion-inspired objective that balances the distortion cost $\calJ$ against the resulting reduction in parameter count; this objective is used to greedily select the next optimal compression action. Section~\ref{sec:progressive_encoder} presents the encoding process as a loop over greedy selection of compression actions. Finally, Section~\ref{sec:experiments} provides a proof-of-concept evaluation of our framework for model compression and fine-tuning.

\section{Related Works}
\label{sec:related_works}
\noindent \textbf{Pruning and Parameter-Space Methods.} Network compression has historically relied on parameter-space pruning, evolving from early Taylor-expansion techniques \cite{lecun1989optimal, hassibi1993optimal} to modern unstructured \cite{han2015learning, frankle2019lottery} and structured \cite{he2017channel, wen2016learning} approaches. HOPE directly compares against structured baselines utilizing $L_1$-norms \cite{li2016pruning} and BN scaling \cite{liu2017learning}. To capture functional importance, methods ranging from early skeletonization via error derivatives \cite{mozer1989skeletonization} to modern empirical activation tracking \cite{molchanov2016pruning, molchanov2019importance, singh2020woodfisher, luo2017thinet} rely on dataset passes; however, this introduces computational bottlenecks and brittleness to distribution shifts \cite{hooker2019what}. Conversely, parameter-centric heuristics suffer from ``scale symmetry'' \cite{blalock2020state, renda2020comparing, badrinarayanan2015understanding, dinh2017sharp}, where overparameterized many-to-one mappings \cite{neyshabur2015search} mean raw magnitudes often reflect optimization artifacts rather than true importance \cite{Scholl2021, Tanaka2020PruningNN, hooker2019what}. Although methods like \cite{lee2021layer} can partially mitigate global scale symmetries, they remain bound to parameter-space magnitude heuristics, which are vulnerable to within-neuron scaling artifacts. By migrating evaluation to a continuous Hilbert space, HOPE circumvents both empirical data bottlenecks and parameter-space artifacts. This shift from structural to functional mappings aligns with the Platonic Representation Hypothesis \cite{huh2024platonic}, which posits that diverse networks converge to a shared statistical model of reality, treating weight spaces as mere shadows. HOPE operationalizes this intra-model, defining identity via continuous operators rather than superficial parameters.

\noindent \textbf{Neuron Alignment and Model Merging.} Some recent methods on merging networks include permutation invariances \cite{entezari2021role, ainsworth2022git}, Optimal Transport \cite{singh2020model}, alignment strategies \cite{tatro2020optimizing}, or feature zipping \cite{stoica2023zipit}. Other approaches perform data-free neuron merging by evaluating pairwise parameter similarities \cite{SrinivasBabu2015} or clustering weights via adaptive scalar hashing \cite{Yvinecetal2021}. However, these methods typically rely on combinatorial matching or parameter averaging. HOPE advances this paradigm by unifying pruning and merging under a single continuous operation: optimal low-rank projection in a functional tensor space, penalizing distortion via the Hilbert-Schmidt norm. This geometric approach mirrors using subspace embeddings to capture hierarchical and compositional representations \cite{Moreira2025}. Merging acts as the inverse to ``feature splitting'' \cite{bricken2023towards} or computation in superposition \cite{Hanni2024}, where overparameterized models fragment concepts across correlated sub-features, a phenomenon rooted in foundational vector space arithmetic \cite{mikolov2013efficient, pennington2014glove} and the Linear Representation Hypothesis \cite{park2023linear, engels2024notall}. By projecting a rank-2 neuron pair into an optimal rank-1 Hilbert-Schmidt parent, HOPE reconsolidates this distributed knowledge, reclaiming capacity while preserving the network's underlying linear geometry.

\noindent \textbf{Macro Architecture and Layer Pruning.} To reduce network's depth, standard methods employ stochastic depth \cite{huang2016deep}, heuristic layer dropping \cite{fan2019reducing}, Neural Architecture Search \cite{zoph2016neural, liu2018darts}, or dynamic routing gates \cite{veit2018convolutional, wang2018skipnet}. However, these approaches decouple macro-architectural decisions from granular feature selection, relying on separate optimization phases or custom hyperparameters. HOPE unifies these scales by formalizing block eviction as a macroscopic function subspace projection. Mapping this architectural deletion to the identical capacity cost $\calJ$ allows macro-reductions to compete against granular pruning and merging within a single, hyperparameter-free decision engine.

\noindent \textbf{Data-Free Compression and Maximum Entropy Surrogates.} Bypassing the original training data often involves inverting BN statistics to generate synthetic inputs \cite{lopes2017data, cai2020zeroq, nagel2019data, micaelli2019zero, yin2020dreaming} or conserving discrete synaptic flow at initialization \cite{Tanaka2020PruningNN}. Rather than generating explicit samples, HOPE elevates these empirical statistics via the Maximum Entropy principle \cite{jaynes1957information} to construct a continuous analytical surrogate, similar to some task-agnostic pruning of LLMs \cite{Ma2023}. Paralleling theoretical analyses of infinite-width networks and Gaussian Processes \cite{neal1996bayesian, lee2018deep, jacot2018neural, yang2019tensor}, this framework allows infinite-dimensional integrals to be resolved without a single forward pass.

\noindent \textbf{Parameter Plasticity, Continual Learning, and Representation Deconstruction.}
HOPE conceptually bridges progressive compression with transfer learning \cite{yosinski2014how, hu2022lora, houlsby2019parameter} and the Stability-Plasticity dilemma of continual learning \cite{grossberg1987competitive, kirkpatrick2017overcoming, zenke2017continual}. Drawing parallels to Complementary Learning Systems in cognitive neuroscience \cite{mcclelland1995why, kumaran2016what}, and aligning with the growing recognition across deep learning \cite{kong2026reasoning}, we hypothesize that the learned representation must be explicitly segregated into a universal core of invariants and a peripheral slack of malleable volume in order to allow learning without forgetting. While previous frameworks attempt to protect foundational knowledge against representational drift using computationally massive $\calO(N^3)$ orthogonal matrix projections \cite{saha2021gradient, zeng2019continuous, yang2025loranull, huggingface2026osf} or dataset-dependent quadratic penalties \cite{kirkpatrick2017overcoming}, HOPE provides a purely data-free alternative. By computing neuron capacity in $\calO(N)$ time, its progressive pruning and merging peel away slack and expose core abstractions, computationally mirroring how awake biological circuits rapidly decorrelate co-activated neurons to maintain network stability \cite{andrei2023rapid}. This perspective is corroborated by recent literature utilizing capacity reduction to deconstruct hierarchies. For example, targeted parameter removal has been used to explain robustness \cite{he2026demystifyingpruningworksrepresentation}, uncover heavy-tailed synaptic backbones \cite{nguyen2026depth}, and demonstrate hierarchical learning through progressive feature compression \cite{wang2026understanding}. Collectively, these works support the premise that HOPE's neuron capacity may efficiently identify foundations versus plastic slack, laying the groundwork for transfer learning and downstream algorithmic interpretability \cite{bau2017network, morcos2018on, olah2020zoom, neyshabur2017exploring}.

\section{The Neuron}
\label{sec:neuron}
Scale invariances are inherently relational and that is why isolated parameters fail to capture them. The minimal architectural unit where these symmetries fully manifest is the neuron in its entirety (encapsulating its incoming weights, BN parameters, non-linear activation, and outgoing weights). In this section we discuss how these within-neuron scale symmetries can be factored out, while deferring global scale symmetries to Section~\ref{sec:capacity}. For clarity, we derive our theoretical framework under the assumption of a fully connected architecture. However, this formalism seamlessly extends to convolutional networks (see Appendix~\ref{sec:conv_nets}). Indeed, the architectures evaluated in Section~\ref{sec:experiments} rely on this adaptation.

Consider a neuron $i$ with input weights $\wVect_{\text{raw},i} \in \R^n$ and output weights $\wVect_{\text{out},i} \in \R^c$. Let the neuron be subject to learnable affine BN parameters $(\gamma_i, \beta_i)$ and empirical moving dataset statistics $(\mu_i, \sigma_i^2)$, where $\mu_i \triangleq \E_{\calX}[\wVect_{\text{raw},i}^T \xVect]$ and $\sigma_i^2 \triangleq \Var_{\calX}(\wVect_{\text{raw},i}^T \xVect)$. To capture the signal reaching the non-linearity, we absorb the normalization operations into a set of \textit{effective} parameters:
\begin{equation}
\label{eq:eff_params}
\wVect_{\text{in},i}^{\text{eff}} \,\triangleq\, \gamma_i/\sqrt{\sigma_i^2 + \epsilon} \wVect_{\text{raw},i} \qquad , \qquad b_i \,\triangleq\, \beta_i - \gamma_i \mu_i/\sqrt{\sigma_i^2 + \epsilon} \,,
\end{equation}
where $\epsilon > 0$ is a small numerical stability constant. The neuron's end-to-end signal mapping, capturing its functional contribution to the subsequent layer for an input $\xVect$, is defined by the continuous function:
\begin{equation}
f_i(\xVect) = \Psi(y_i) \wVect_{\text{out},i} \qquad, \qquad y_i \triangleq  (\wVect_{\text{in},i}^{\text{eff}})^T \xVect + b_i \,,
\end{equation}
where $\Psi(\cdot)$ denotes a PH-1 activation function defined as below:
\begin{insight}{Positively Homogeneous of degree 1 (PH-1) Functions}
An activation function $\Psi: \R \rightarrow \R$ is Positively Homogeneous of degree 1 (PH-1) if it satisfies the scaling property $\Psi(c z) = c \Psi(z)$ for all $z \in \R$ and all scalars $c \geq 0$. Examples include \ReLU, \texttt{Leaky ReLU}, \texttt{PReLU}, and the linear functions.
\end{insight}

Throughout the HOPE framework, we formally designate this continuous function $f_i$ as the \textit{neuron}. Operating at this atomic level mitigates two primary sources of scaling symmetry: \textit{normalization invariance} (from BN) and \textit{re-parameterization invariance} (from cross-layer weight resharding). We detail the former below, and defer the latter to Section~\ref{sec:capacity}.

\subsection{Mitigating Normalization Invariance}
Normalization invariance arises from BN's standardizing mechanics. Scaling raw input weights $\wVect_{\text{raw},i}$ by a constant factor $\lambda > 0$ increases the pre-activation variance by $\lambda^2$. The subsequent BN layer divides by the standard deviation, canceling $\lambda$ before the non-linearity. Because downstream output remains unchanged, raw weight magnitudes can be deceptive.

HOPE mitigates this failure mode by evaluating capacity through effective parameters ($\wVect_{\text{in},i}^{\text{eff}}$ and $b_i$). While HOPE uses raw weights $\wVect_{\text{raw},i}$ alongside BN statistics to construct the data-constrained surrogate dataset $P_{\calX}$ (Section~\ref{sec:neural_signal_distribution}), their utility ends there. The framework then evaluates the neuron's continuous-functional impact on this surrogate.

Physically, the non-linear activation processes the normalized, shifted signal, not the raw parameter projection. Thus, when computing the continuous integral to compute Hilbert space norms or inner product, HOPE defines the pre-activation signal as $y_i = (\wVect_{\text{in},i}^{\text{eff}})^T \xVect + b_i$. Because $\wVect_{\text{in},i}^{\text{eff}}$ divides by the empirical standard deviation $\sigma_i$, any magnitude inflation is canceled. By evaluating the signal impacting the activation function, HOPE guarantees the capacity criterion reflects functional utility rather than scale artifacts.

\section{The Neural Signal Distribution}
\label{sec:neural_signal_distribution}
Since HOPE operates in a data-free regime, the true input distribution $P^*_{\calX}$ (encompassing both initial data and subsequent layer activations) is inaccessible. However, because we model neurons as continuous Hilbert-Schmidt operators, evaluating their inner products requires integrating over the data distribution. To resolve this, we invoke the Maximum Entropy principle \cite{jaynes1957information} to construct a Gaussian surrogate constrained by BN statistics. While a Gaussian approximation may seem overly idealistic, we explain below in two steps why it aligns closely with modern neural architectures.

\noindent \textbf{Step 1: Gaussianity of Pre-Activation.} While a neuron's true input distribution $P^*_{\calX}$ often lies on a complex, highly non-Gaussian manifold, each neurons observes its input $\xVect$ only through 1-dimensional linear projections $y = \sum_{j=1}^{n} w_j x_j$. As the fan-in dimension $n$ grows, by the Central Limit Theorem and the Diaconis-Freedman effect \cite{diaconis1984asymptotics}, these aggregated signals converge to a Gaussian distribution. Consequently, neurons remain oblivious to the complex data manifold; from their perspective the pre-activation $y_i$ is Gaussian. Although nonlinear activations (e.g., ReLU) disrupt this Gaussianity, subsequent high-dimensional linear transformations recursively smooth the signals back into Gaussian pre-activations across layers.

\noindent \textbf{Step 2: Gaussian Surrogate for the Input $\xVect$.}
While neurons are oblivious to the true shape of the data manifold and perceive their input as a Gaussian signal $y$. Based on this observation, for theoretical convenience, we substitute the complex true data with a tractable surrogate. For architectural consistency, the surrogate distribution must satisfy the same observational bottleneck: its 1D linear projections must remain Gaussian. By definition, if every linear combination of a random vector is Gaussian, the vector itself must be multivariate Gaussian. Thus, to construct a surrogate distribution aligned with a world where every linear observer (neuron) sees a Gaussian, that surrogate is necessarily a multivariate Gaussian, $P_{\calX} = \calN(\hat{\muVect}_x, \hat{\Sigma}_x)$. 

\begin{alertbox}{Architectural Context}
Throughout this work, we contextualize our framework within standard modern vision architectures, specifically the ResNet-50 (V1) architecture. The canonical computational block follows the sequence: convolution, followed by BN, followed by a ReLU activation $\text{Conv} \to \text{BN} \to \text{ReLU}$. Consequently, the input vector $\xVect$ presented to any internal convolutional layer is the output of a preceding ReLU activation.
\end{alertbox}

\begin{insight}{Post-ReLU Support Paradox}
Using a multivariate Gaussian surrogate $P_{\calX}$ whose support is the entire $\R^n$ might seem problematic, given that post-ReLU inputs are non-negative $\xVect \ge \zeroVect$. However, the purpose of $P_{\calX}$ is to model the inner product $\inner{f_i}{f_j}_{\calH}$, not the true input distribution. Because $\inner{f_i}{f_j}_{\calH} = \E_{\xVect \sim P_{\calX}} [\Psi(y_i) \Psi(y_j)] \cdot \inner{\wVect_{\text{out},i}}{\wVect_{\text{out},j}}_{\R^c}$, this integral reduces to a 2D subspace defined by the pre-activations $y_i$ and $y_j$. In high dimensions, the Central Limit Theorem and the Diaconis-Freedman effect ensure that projecting high-dimensional vectors into a low-dimensional subspace rapidly converges to a bivariate Gaussian distribution. Since the integration depends solely on this 2D slice, relaxing the ambient non-negativity constraint yields a tractable surrogate while maintaining an accurate asymptotic approximation of the bivariate distribution.
\end{insight}

\noindent \textbf{Optimal Surrogate.} To define the optimal parameters $(\hat{\muVect}_x, \hat{\Sigma}_x)$ of the unknown data distribution, we can incorporate the two empirical constraints $\xVect \in \R^n$: $\E[\wVect_{\text{raw}, i}^T \xVect] = \mu_i$ and $\Var(\wVect_{\text{raw}, i}^T \xVect) = \sigma_i^2$ imposed by BN for any $i$. We define a shared surrogate for each layer. Since the empirical means $\muVect_{\text{BN}} \in \R^n$ represent the 1D shadow of the dataset's center cast through the raw weights, to find the best mean for a layer we compute the optimal least-squares approximation of the dataset's center using the Moore-Penrose pseudo-inverse $\Wmat_{\text{raw}}^+$, which leads to $\hat{\muVect}_x = \Wmat_{\text{raw}}^+ \muVect_{\text{BN}}$. In underdetermined scenarios (e.g., when compressing layers where $n > c$), this pseudo-inverse yields the minimum-norm solution, which sets the unobserved orthogonal components of the data manifold to a zero mean. To find the optimal covariance matrix $\hat{\Sigma}_x$ for the layer, we maximize the differential entropy of the multivariate Gaussian, $H(\xVect) \propto \log \det(\Sigma_x)$ subject to the BN variance constraints $\wVect_{\text{raw}, i}^T \Sigma_x \wVect_{\text{raw}, i} = \sigma_i^2$ for $i \in \{1, \dots, c\}$. 
\textit{Conceptually}, applying Lagrange multipliers yields $\hat{\Sigma}_x = \left( \sum_{i=1}^c \lambda_i \wVect_{\text{raw}, i} \wVect_{\text{raw}, i}^T \right)^{-1}$, where $\lambda_i$ are optimized to satisfy variance equality constraints. However, inverting this covariance matrix is computationally expensive. Fortunately, the framework bypasses the computationally expensive need to compute and invert the full high-dimensional joint covariance matrix entirely: micro-operations evaluate pairwise merges restricted to a rank-2 subspace, allowing a closed-form solution via a pairwise neural kernel (Appendix~\ref{sec:cross-kernel}), while macro block eviction evaluates destruction using the $L_1$ cumulative distance of surviving capacities (Section~\ref{sec:block_eviction}).

\section{A Hilbert Functional Perspective on Neurons}
\label{sec:capacity}
To mitigate parameter shape bias and facilitate the identification of dead neurons, we transition from discrete parameter analysis to a continuous function formulation. By embedding each neuron into a Hilbert space\footnote{See Appendix~\ref{sec:hilbert} for a brief introduction to Hilbert spaces.} $\calH$ and treating it as a rank-1 Hilbert-Schmidt operator, our framework evaluates the actual function the neuron computes, effectively \textit{abstracting away its physical matrix shape}. Furthermore, by integrating this continuous function over the analytically derived surrogate $P_{\calX}$, HOPE identifies dead neurons via a closed-form expectation, bypassing the need for computationally expensive empirical forward passes over a dataset. Together, these features shift the evaluation criterion from raw parameter counts to functional capacity, quantified as the norm of the neuron's function in $\calH$.

We model each neuron as a rank-one \textbf{Hilbert operator} $f_i \in \calH$. This way, a neuron's identity is not defined \textbf{by its evaluation on a single input point $\xVect$}, but its continuous behavior over \textbf{the entire surrogate distribution} $\calX$. We quantify the capacity of the neuron by $\norm{f_i}_\calH$.

We now present this formally. We define our space of neural functions as $\calH \triangleq L_2(\calX, P_{\calX}; \R^c)$, the space of square-integrable functions mapping $\calX$ to the $c$-dimensional output space. Define $g_i : \calX \rightarrow \R$ as $g_i(\xVect) \triangleq \Psi \left( (\wVect_{\text{in},i}^{\text{eff}})^T \xVect + b_i \right)$. We embed $g_i$ as an element of the scalar Hilbert space $\calH_{\text{in}} \triangleq L_2(\calX, P_{\calX}; \R)$. Furthermore, the scalar activation of a neuron is sent to the next layer by scaling with the finite-dimensional output weight vector $\wVect_{\text{out}, i} \in \calH_{\text{out}} \triangleq \R^c$. Since the output across all $c$ dimensions is confined to the one-dimensional subspace spanned by $\wVect_{\text{out}, i}$, this entire continuous landscape is embedded exclusively along a single vector direction. By taking the tensor product of the input function and the output vector, we construct a linear mapping across these spaces: $\calH \cong \calH_{\text{in}} \otimes \calH_{\text{out}}$. Thus the vector-valued function $f_i : \calX \rightarrow \calH_{\text{out}}$, i.e. the neuron, is an element within this tensor product space $f_i \triangleq g_i \otimes \wVect_{\text{out}, i}$. Since this element is constructed from the outer product of one input function and one output vector, each individual neuron $f_i$ is a rank-1 Hilbert-Schmidt operator. See Figure~\ref{fig:tensor_synthesis} for a visualization. This tensor formulation is fundamental for defining the merging operation in HOPE as we will discuss in Sections~\ref{sec:layer_transition_costs} and \ref{sec:parent_params}.

\noindent \textbf{Hilbert-Schmidt Inner Product and Capacity.} Because our neurons are defined as rank-1 operators residing in the tensor $\calH \cong \calH_{\text{in}} \otimes \calH_{\text{out}}$, we must evaluate their geometric relationship using the inner product defined on this composite space: $\inner{f_i}{f_j}_{\calH} = \inner{g_i \otimes \wVect_{\text{out},i}}{g_j \otimes \wVect_{\text{out},j}}_{\calH} = \inner{g_i}{g_j}_{\calH_{\text{in}}} \cdot \inner{\wVect_{\text{out},i}}{\wVect_{\text{out},j}}_{\calH_{\text{out}}} = \E_{\xVect \sim P_{\calX}} [ \Psi(y_i) \Psi(y_j) ] \cdot \inner{\wVect_{\text{out},i}}{\wVect_{\text{out},j}}_{\R^c}$.
We define the \emc{capacity} of a neuron as its Hilbert-Schmidt norm $\normshort{f_i}_{\calH} = \sqrt{\inner{f_i}{f_i}_{\calH}}$, which we will later use to decide what neuron to prune and which macro block to evict.

\begin{figure}[tb]
    \centering
    \scalebox{1.0}{
    \begin{tikzpicture}[font=\small]
        
        % ==========================================
        % PANEL 1: The Input Phase (Landscape)
        % ==========================================
        \begin{scope}[shift={(0,0)}]
            % Titles
            \node[align=center] at (2.5, 5.5) {\textbf{Input}};
            \node[align=center, text=black!70] at (2.5, 5.0) {Function};
            \node[align=center] at (2.5, 4.5) {$g_i \in \calH_{\text{in}}$};
            
            \begin{axis}[
                width=6cm, height=6cm,
                view={35}{35},
                hide axis,
                colormap/viridis,
                domain=-2:2,
                y domain=-2:2,
                samples=25,
                zmin=0, zmax=1.2
            ]
            % A continuous scalar field (Gaussian used for visual appeal)
            \addplot3[surf, faceted color=black!60, opacity=0.9] {exp(-x^2-y^2)};
            \end{axis}
        \end{scope}

        % Tensor Product Operator
        \node at (5.2, 2.5) {\Large $\otimes$};

        % ==========================================
        % PANEL 2: The Output Phase (Vector)
        % ==========================================
        \begin{scope}[shift={(5.5, 0)}]
            % Titles
            \node[align=center] at (2.5, 5.5) {\textbf{Output}};
            \node[align=center, text=black!70] at (2.5, 5.0) {Vector};
            \node[align=center] at (2.5, 4.5) {$\wVect_{\text{out}, i} \in \calH_{\text{out}}$};
            
            \begin{axis}[
                width=6cm, height=6cm,
                view={35}{25},
                axis lines=center,
                ticks=none,
                xmin=0, xmax=2,
                ymin=0, ymax=2,
                zmin=0, zmax=2,
                enlargelimits=0.1
            ]
            % Axis labels
            \node at (2,0,0) [right] {$x_1$};
            \node at (0,2,0) [right] {$x_2$};
            \node at (0,0,2) [above] {$x_3$};
            
            % Save 3D coordinates to the 2D canvas
            \coordinate (Origin2) at (0,0,0);
            \coordinate (Vector2) at (1.0, 0.8, 1.2);
            \end{axis}
            
            % Draw the fixed hardware weight vector OUTSIDE the axis to guarantee top layer
            \draw[->, thick, red!80!black] (Origin2) -- (Vector2) node[above left, text=black] {$\wVect_{\text{out}, i}$};
        \end{scope}

        % Equals Sign
        \node at (10.6, 2.5) {\Large $=$};

        % ==========================================
        % PANEL 3: Tensor Synthesis (Rank-1 Subspace)
        % ==========================================
        \begin{scope}[shift={(10.8, 0)}]
            % Titles
            \node[align=center] at (2.5, 5.5) {\textbf{Tensor}};
            \node[align=center, text=black!70] at (2.5, 5.0) {Rank-1 Operator};
            \node[align=center] at (2.5, 4.5) {$f_i \in \calH$};
            
            \begin{axis}[
                width=6cm, height=6cm,
                view={35}{25},
                axis lines=center,
                ticks=none,
                xmin=0, xmax=2,
                ymin=0, ymax=2,
                zmin=0, zmax=2,
                enlargelimits=0.1
            ]
            % 1. The 1D subspace (infinite line/track)
            \draw[dashed, thick, gray] (0,0,0) -- (2.0, 1.6, 2.4) node[pos=0.8, right, align=left] {};
            
            % 2. The functional landscape projected onto the line (Enlarged Colored Beads)
            \addplot3 [
                domain=0.1:1.4,
                samples=30,
                samples y=1,
                only marks,
                mark=*,
                mark size=3pt,
                scatter,
                point meta={exp(-15*(x-0.7)^2)}, % Simulates the Gaussian bump from Panel 1
                colormap/viridis
            ] (
                {x*1.5}, 
                {x*1.2}, 
                {x*1.8}
            );

            % Axis labels
            \node at (2,0,0) [right] {$x_1$};
            \node at (0,2,0) [right] {$x_2$};
            \node at (0,0,2) [above] {$x_3$};
            
            % Explanatory text inside the plot
            \node[align=center, font=\tiny, text=black!80] at (1.0, 0.8, 2.2) {Image of landscape\\restricted to line};

            % Save 3D coordinates to the 2D canvas
            \coordinate (Origin3) at (0,0,0);
            \coordinate (Vector3) at (1.0, 0.8, 1.2);
            \end{axis}
            
            % 3. The Weight Vector drawn OUTSIDE the axis to guarantee it sits on top of the beads
            \draw[->, thick, red!80!black] (Origin3) -- (Vector3) node[above left, text=red!80!black] {$\wVect_{\text{out}, i}$};
        \end{scope}
        
    \end{tikzpicture}
    }
    \caption{\small Visualizing the rank-1 tensor of a neuron. \textbf{Left:} The input phase computes a continuous (infinite-dimensional) scalar landscape $g_i(\xVect)$. \textbf{Middle:} The output phase defines a single, finite-dimensional weight vector $\wVect_{\text{out}, i}$. \textbf{Right:} The tensor product binds them. The entire function's landscape (represented by the colored points) is mapped along the 1D subspace spanned by $\wVect_{\text{out}, i}$, which enforces the definition of a rank-1 operator.}
    \label{fig:tensor_synthesis}
\end{figure}

\noindent \textbf{Kernel Formulation.}
We define the kernel of two neurons $i,j$ as $K(i, j) \triangleq \E_{\xVect \sim P_{\calX}} [ \Psi(y_i) \Psi(y_j) ]$. Under this, the capacity of a neuron simplifies to $\normshort{f_i}_{\calH} = \normshort{\wVect_{\text{out},i}}_2 \cdot \sqrt{K(i, i)}$ and the inner product of two neurons becomes $\inner{f_i}{f_j}_{\calH} = \inner{\wVect_{\text{out},i}}{\wVect_{\text{out},j}}_{\R^c} K(i,j)$. Closed form expression for these kernels are provided at the end of this section, and their derivation is presented in Appendix~\ref{sec:kernel} when $\Psi$ is the $\ReLU$ activation function.

\noindent \textbf{Neuron Scale Invariance.} In networks with PH-1 activations, scaling $\wVect_{\text{in},i}^{\text{eff}}$ and $b_i$ of a neuron by $\lambda>0$ and $\wVect_{\text{out},i}$ by $1/\lambda$ alters weight magnitudes without changing the downstream function. This symmetry confounds raw magnitude-based criteria. HOPE mitigates this issue by defining capacity as the Hilbert norm: $\normshort{f_i}_{\calH} = \normshort{\wVect_{\text{out},i}}_2 \sqrt{K(i, i)}$. Because positive homogeneity scales the kernel $K$ by $\lambda$, the opposing factors cancel. Consequently, HOPE guarantees an invariant capacity score regardless of weight resharding.

\noindent \textbf{Neuron Shape Invariance.} 
By definition, the neuron's functional capacity $\|f_i\|_{\calH} = \|\wVect_{\text{out},i}\|_2 \sqrt{K(i, i)}$ depends on the input space $\calX$ solely through the kernel term $K(i,i) \triangleq \E_{\xVect \sim P_{\calX}} [ \Psi^2(y_i) ]$. Rather than counting discrete incoming parameters $n$ or computing weight magnitudes such as $\|\wVect_{\text{in},i}^\text{eff}\|$ or $\|\wVect_{\text{raw},i}\|$, this formulation abstracts away the physical dimensionality of the input tensor $\xVect \in \R^n$.\footnote{While increasing the fan-in $n$ naturally inflates the pre-activation variance $\Var_{\calX}(\wVect_{\text{raw},i}^T \xVect)$, this scaling artifact is then mitigated by the absorbed BN parameters. Because  $\wVect_{\text{in},i}^{\text{eff}} \triangleq (\gamma_i/\sqrt{\sigma_i^2 + \epsilon}) \wVect_{\text{raw},i}$, the variance of the pre-activation signal $y_i$ is bounded entirely by the learned scale $\gamma_i^2$. Consequently, the expected activation energy $\E_{\xVect \sim P_{\calX}} [ \Psi^2(y_i) ]$ remains decoupled from the physical width of the input tensor.} Furthermore, when paired with the layer-wise magnitude neutrality axiom, this function-based approach ensures the neuron's evaluation is entirely invariant to both its input and output dimensions.\footnote{Although capacity $\|f_i\|_\calH$ scales with the output dimension $c$ via $\|\wVect_{\text{out},i}\|_2$, HOPE mitigates this downstream. As shown in Section~\ref{sec:layer_transition_costs}, compression is governed by the distortion cost $\calJ$. Derived axiomatically, $\calJ$ normalizes a neuron's capacity against the layer's total capacity. Since all neurons in a layer share the same output space $\R^c$, this emergent normalization factors out $c$.}

\noindent \textbf{Neuron Merging via Hilbert-Schmidt Projection.} Similar to pruning, merging also reduces the network's neuron count by one, but it can provide a higher-fidelity reduction when the selected pair have a strong cosine similarity in $\calH$. We define the merger through an optimal Hilbert-Schmidt projection. Since neurons $i,j$ are each rank-1 operators, their joint contribution $[f_i, f_j]$ spans a 2-dimensional subspace in $\calH$ consisting of operators of rank at most 2. Merging them into a single parent neuron is defined as finding the optimal rank-1 approximation of this tensor subspace.

\begin{insight}{Self-Kernel of \ReLU Neurons}
Let $\normPDF$ and $\normCDF$ be the standard Normal PDF and CDF respectively. Then:
\begin{equation}
    K(i, i) = (\gamma_i^2 + \beta_i^2)\normCDF\left(\frac{\beta_i}{|\gamma_i|}\right) + \beta_i|\gamma_i|\normPDF\left(\frac{\beta_i}{|\gamma_i|}\right)
\end{equation}
\end{insight}

\begin{alertbox}{Cross-Kernel of \ReLU Neurons}
For brevity, let:
\begin{equation}
\label{eq:rho_hat_maintext}
\rho_{\text{eff}} \triangleq \frac{\inner{\wVect_{\text{in},i}^{\text{eff}}}{\wVect_{\text{in},j}^{\text{eff}}}}{\norm{\wVect_{\text{in},i}^{\text{eff}}}_2 \norm{\wVect_{\text{in},j}^{\text{eff}}}_2} \quad,\quad \kappa \triangleq \left( \frac{\rho_{\text{eff}}}{1 - \rho_{\text{eff}}^2} \right) \left( \frac{|\gamma_i|}{\norm{\wVect_{\text{in},i}^{\text{eff}}}_2} \right) \left( \frac{|\gamma_j|}{\norm{\wVect_{\text{in},j}^{\text{eff}}}_2} \right) \quad,\quad
\hat{\rho}_{ij} \triangleq \frac{2\kappa}{1 + \sqrt{1 + 4\kappa^2}} \,.
\end{equation}
Then the cross-kernel has the form\footnote{While the exact cross-kernel can be derived analytically, it requires evaluating a bivariate normal CDF for every neuron pair, which is computationally prohibitive for large networks. Instead, we approximate the kernel by assuming zero bias $\beta_i, \beta_j \approx 0$. This isolates the angular alignment, $\hat{\rho}_{ij}$, as the primary driver of redundancy and avoids costly CDF evaluations. Full derivations of both the exact and approximate kernels are in Appendix~\ref{sec:kernel}.}:
\begin{equation}
    K(i, j) \approx \frac{1}{\pi} \left( \sqrt{1 - \hat{\rho}_{ij}^2} + (\pi - \arccos \hat{\rho}_{ij}) \hat{\rho}_{ij} \right) \sqrt{K(i, i) K(j, j)}
\end{equation}
\end{alertbox}

\begin{figure}[tb]
\centering
\begin{minipage}{0.50\textwidth}
\centering
\resizebox{1.0\linewidth}{!}{
\scalebox{1.0}{
\begin{tikzpicture}[
	% Premium styles
    axis_line/.style={-latex, line width=1.8pt, draw=black!80},
    proj_line/.style={line width=1.8pt, draw={rgb,255:red,143; green,70; blue,255}, shorten >=2pt},
    L1_contour/.style={dashed, gray!60, thick},
    % Subspace styles
    subspace_beam/.style={line width=22pt, cyan!8, opacity=0.6},
    subspace_core/.style={dashed, thick, cyan!80!black},
    % Typography and label styles
    state_label/.style={text opacity=1, rounded corners=3pt, inner sep=4pt, font=\small},
    label_bg/.style={text opacity=1}, % Added for readability over grid
    % Thumbnail macro
    miniplot/.style={scale=0.40, every node/.style={transform shape}}
]
    % FUNCTION CARD THUMBNAILS 
    \newcommand{\thumbPureLinear}{
        \begin{tikzpicture}[miniplot]
            \draw[->, gray!80, very thin] (0,0) -- (1.2,0);
            \draw[->, gray!80, very thin] (0,0) -- (0,1.3);
            \draw[blue!80!black, thick, domain=0:1, samples=50] plot (\x, {\x});
        \end{tikzpicture}
    }
    \newcommand{\thumbHalfRipple}{
        \begin{tikzpicture}[miniplot]
            \draw[->, gray!80, very thin] (0,0) -- (1.2,0);
            \draw[->, gray!80, very thin] (0,0) -- (0,1.3);
            \draw[blue!80!black, thick, domain=0:1, samples=50] plot (\x, {\x + 0.2*sin(\x*360)});
        \end{tikzpicture}
    }
    \newcommand{\thumbHeavyRipple}{
        \begin{tikzpicture}[miniplot]
            \draw[->, gray!80, very thin] (0,0) -- (1.2,0);
            \draw[->, gray!80, very thin] (0,0) -- (0,1.3);
            \draw[blue!80!black, thick, domain=0:1, samples=100] plot (\x, {\x + 0.4*sin(\x*360)});
        \end{tikzpicture}
    }

    % THE HILBERT SPACE CANVAS

    % Background architectural grid
    \draw[step=0.5, gray!15, very thin] (0,0) grid (5.5,5.5);
    \draw[step=1.0, gray!25, thin] (0,0) grid (5.5,5.5);

    % L1 Capacity Contours (Constant E = ||f1|| + ||f2||)
    \draw[L1_contour] (5,0) -- (0,5) node[pos=0.07, sloped, above, font=\scriptsize, text=gray!80!black, label_bg, inner sep=1pt] {$E = 5$};
    \draw[L1_contour] (4,0) -- (0,4) node[pos=0.10, sloped, above, font=\scriptsize, text=gray!80!black, label_bg, inner sep=1pt] {$E = 4$};
    \draw[L1_contour] (3,0) -- (0,3) node[pos=0.13, sloped, above, font=\scriptsize, text=gray!80!black, label_bg, inner sep=1pt] {$E = 3$};
    \draw[L1_contour] (2,0) -- (0,2) node[pos=0.16, sloped, above, font=\scriptsize, text=gray!80!black, label_bg, inner sep=1pt] {$E = 2$};

    % The Glowing Rank-1 Subspace
    \draw[subspace_beam] (0,0) -- (5.2,5.2);
    \draw[subspace_core] (0,0) -- (5.2,5.2);
    \node[cyan!90!black, rotate=45, above, state_label] at (4.5, 4.5) {\textbf{Rank-1 Subspace} ($f_1=f_2$)};

    % Axes
    \draw[axis_line] (0,0) -- (5.8,0) node[below, font=\bfseries] {$f_1 \in \calH$};
    \draw[axis_line] (0,0) -- (0,5.8) node[above, font=\bfseries] {$f_2 \in \calH$};

    % X-axis Function Cards
    \draw[thick, gray!60] (1, 0.1) -- (1, -0.1) node[below=5pt] {\thumbPureLinear};
    \draw[thick, gray!60] (2.5, 0.1) -- (2.5, -0.1) node[below=5pt] {\thumbHalfRipple};
    \draw[thick, gray!60] (4, 0.1) -- (4, -0.1) node[below=5pt] {\thumbHeavyRipple};

    % Y-axis Function Cards
    \draw[thick, gray!60] (0.1, 1) -- (-0.1, 1) node[left=5pt] {\thumbPureLinear};
    \draw[thick, gray!60] (0.1, 2.5) -- (-0.1, 2.5) node[left=5pt] {\thumbHalfRipple};
    \draw[thick, gray!60] (0.1, 4) -- (-0.1, 4) node[left=5pt] {\thumbHeavyRipple};

    % THE STRUCTURAL TRANSITION PATHS

    % Coordinates for the transition
    \coordinate (PhiA) at (4.0, 1.0); % Initial state E=5
    \coordinate (PhiB) at (2.0, 2.0); % Target state E=4

    % J_proj Path (Blue, Straight Euclidean)
\draw[proj_line] (PhiA) -- (PhiB) 
        node[midway, below, sloped, state_label, label_bg, align=center] {
            {\scriptsize \color[RGB]{143, 70, 255}\boldmath $\int_0^1 \dot{\calJ}_{\text{proj}}\,dt$} \\ 
            {\scriptsize \color[RGB]{143, 70, 255}\boldmath $\propto D(\Phi_a,\tilde{\Phi}_b)$}
        };

	% THE CAPACITY COST VISUALIZATION
    
    % J_capacity (Detached background environmental drop)
    \draw[<->, line width=1.2pt, draw={rgb,255:red,34; green,139; blue,34}] (1.5, 3.5) -- (1.0, 3.0) 
        node[midway, sloped, above=2pt, state_label, label_bg, text={rgb,255:red,34; green,139; blue,34}, align=center] {
            \textbf{\boldmath $\int_0^1 \dot{\calJ}_{\text{capacity}}\, dt$} \\ 
            $\propto \Delta E = -1$
        };

    % STATE NODES

    % Initial State Sphere
    \shade[shading=ball, ball color=blue!70] (PhiA) circle (4pt);
    \node[state_label, label_bg, align=center, right=6pt] at (PhiA) {\textbf{\color{blue!80!black}Initial State $\Phi(0)$}};

    % Terminal State Sphere
    \shade[shading=ball, ball color=red!80] (PhiB) circle (4pt);
    \node[state_label, label_bg, align=center, above=6pt] at (PhiB) {\textbf{\color{red!80!black}Terminal State $\tilde{\Phi}(1)$}};

    % Origin Singularity
    \filldraw[black!80] (0,0) circle (3pt);
    \node[state_label, label_bg, below=2pt] at (0,0) {\textbf{Origin} ($E=0$)};

\end{tikzpicture}
}
} % End resizebox
\end{minipage}\hfill % \hfill pushes the two minipages apart
\begin{minipage}{0.50\textwidth}
\caption{\textbf{Visualization of Transition Costs of Merging in $\calH^2$.} The axes schematically represent the space of continuous functions $\calH$, illustrated as a smooth transition from linear to sinusoidal. During a merge operation, the layer transitions from an initial state $\Phi(0)=\Phi_a$ to a pre-deletion target $\Phi(1)=\tilde{\Phi}_b$ where $f_1 = f_2$. For the sake of illustration, suppose $E(t)\approx E_b$ and $c(\Phi(t))=c_0$ for $0 \leq t \leq 1$. Then $\int_0^1 \dot{\calJ}_{\text{capacity}} \,dt \approx -\frac{c_0}{E_b} \int_0^1 \dot{E}(t) \,dt = -\frac{c_0}{E_b}(E_b - E_a) = -\frac{c_0}{E_b} \Delta E \propto -\Delta E$, yielding a penalty proportional to $-\Delta E$. However, $\int_0^1 \dot{\calJ}_{\text{proj}} \,dt \approx \frac{c_0}{E_b} \int_0^1 \dot{s}(t) \,dt = \frac{c_0}{E_b}(s(1)-s(0)) \propto D(\Phi_a,\tilde{\Phi}_b)$, the Euclidean projection distance shown as the straight purple line.}

\label{fig:tuple_projection}
\end{minipage}
\end{figure}

\section{Layer Transition Costs}
\label{sec:layer_transition_costs}
Scale symmetries also manifest globally: shallow layers processing high-variance data often yield neurons with larger capacities than deeper layers operating on compressed latent representations \cite{hanin2018start, Tanaka2020PruningNN}. For a compression method to account for this bias, it must evaluate individual neurons within the context of their entire layer \cite{lee2021layer}. We formalize this global context via a \textit{layer state} $\Phi \triangleq (f_1, f_2, \dots, f_{N})$, where $N$ is the number of active neurons in the layer. A single compression step maps an initial state $\Phi_a$ to a reduced state $\Phi_b$, where $|\Phi_b|=N-1$. The entire compression process is thus a chain of discrete state transitions across various layers, guided by a cost $\calJ(\Phi_a, \Phi_b) > 0$ that quantifies the resulting model distortion. This section focuses on deriving this $\calJ$.

\subsection{Continuous-Time Relaxation}
Despite the conceptual clarity of these transitions, their discrete nature (where the architecture hops from state $\Phi_a$ to $\Phi_b$ via pruning or merging) presents a significant barrier to mathematical analysis. To bridge the gap between abstract analysis and algorithmic execution, we proceed in two steps. First, we perform a \textbf{continuous relaxation}: instead of an instantaneous jump, we define a continuous deformation $\Phi(t)$ over $t \in [0,1]$, interpolating between $\Phi(0) = \Phi_a$ and a pre-deletion target $\Phi(1) = \tilde{\Phi}_b$ (Figure~\ref{fig:tuple_projection}). This allows us to use differential equations to express the infinitesimal cost of shrinking a layer's capacity. Second, to compute the total transition cost, we integrate this differential cost with respect to $t$. Our objective is to resolve this integral into an expression that depends only on the physically realizable endpoints ($\Phi_a$ and $\tilde{\Phi}_b$), bypassing the need to evaluate fictitious intermediate states along the continuous path. However, since this integral generally lacks a closed-form solution, and numerical evaluation incurs computationally prohibitive runtime overhead, we instead derive a \textbf{closed-form upper bound} on the analytically intractable integral.

\noindent \textbf{Layer Capacity.} To develop a layer cost $\calJ$, we first extend the single-neuron capacity, $\|f_i\|$, to define a layer capacity $E(\Phi)$ for state $\Phi$, where $E(f_1)=\|f_1\|$. A natural requirement is that $E(\Phi)$ remains invariant to arbitrary neuron partitioning. Assuming $E(\Phi)$ is a symmetric, separable, and homogeneous functional of individual capacities, this condition uniquely determines $E(\Phi) = \sum_{k=1}^N \normshort{f_k}_{\calH}$ (by Lemma~\ref{prop:unique_l1}). For some intuition, suppose that $E(\Phi) = \left(\sum \normshort{f_k}_{\calH}^p\right)^{1/p}$. Partitioning a neuron $f_0$ into $M$ fractions $f_0/M$ yields $M^{(1-p)/p} \normshort{f_0}_\calH$. Capacity invariance for any $M$ requires $(1-p)/p = 0$, yielding $p=1$.

\noindent \textbf{Axiomatic Cost Formulation.} To ensure a well-posed definition of $\calJ$, we introduce the following natural axioms: \textit{1. Magnitude Neutrality:} $\calJ$ must be scale invariant: $\forall k > 0 \,;\, \calJ(k\, \Phi_a , k\, \Phi_b) = \calJ(\Phi_a , \Phi_b)$. \textit{2. Connectivity Preservation:} $\calJ$ must establish an asymptotic barrier preventing layer extinction: $\lim_{E(\Phi_b) \rightarrow 0^+} \calJ = \infty$. \textit{3. Infinitesimal Capacity Dependence:} $\calJ$ must be additive along continuous paths and be driven by the reduction in layer capacity:
$\calJ(\Phi_a, \Phi_b) = \int_{0}^{1} -\xi(\Phi(t)) \dot{E}(t) dt$, where $\dot{E}(t) \triangleq dE(\Phi(t))/dt$ and $\xi(\Phi(t)) > 0$ is a state-dependent density function. While Axioms 1 and 2 define boundaries of the theory, Axiom 3 acts as an idealized analytical tool modeling a continuous capacity drain $\dot{E}(t) < 0$. This allows us to deduce the fundamental shape of the cost function. Under these premises, we can prove (By Theorem~\ref{theorem:path_log_form}) that $\calJ$ must obey $\calJ_{\text{capacity}}(\Phi_a, \Phi_b) = \int_{0}^{1} -c(\Phi(t)) \frac{\dot{E}(t)}{E(\Phi(t))} dt$, where $\dot{E}(t) < 0$ (due to capacity reduction) and $c(\Phi(t)) > 0$ is a scale-invariant factor (i.e., $c(k \Phi) = c(\Phi)$ for any $\Phi \in \calH^N$ and $k > 0$).

\noindent \textbf{Piecewise Constant $c(\Phi(.))$.} To bridge continuous theory with discrete execution, we restrict $c(\Phi)$ to remain constant along any discrete state transition $\Phi_a \rightarrow \Phi_b$, e.g. $c(\Phi(t)) = c(\Phi_a)$ for $t \in [0,1]$. This allows us to factor $c(\Phi)$ out of the integral for both $\calJ_{\text{capacity}}$ and all subsequently derived cost functionals; for $\calJ_{\text{capacity}}$, this directly yields the analytical solution $\calJ_{\text{capacity}} = c(\Phi_a) \ln(\frac{E_a}{E_b})$. Upon reaching the terminal state, physically removing extinguished neurons causes $c(\Phi(t))$ to snap to a new value $c(\Phi_b)$. Consequently, $c(\Phi(t))$ acts as a globally piecewise constant function that remains locally constant during any integration step. While $\calJ_{\text{capacity}}$ is not yet the final objective used in our optimizer, confirming that it satisfies Axioms 1 and 2 ensures we are on track, while its derivation via integration inherently satisfies the idealized capacity dependence assumption.

\subsection{Bounding the Projection Cost}
\label{sec:bound_proj_cost}
While $\dot{\calJ}(t)$ is driven by the relative capacity reduction $-\dot{E}/E$, our framework needs to minimize projection error\footnote{Transitioning from $\calJ_{\text{capacity}}$ to $\calJ_{\text{proj}}$ ensures sensitivity to feature alignment. For instance, merging two orthogonal neurons introduces a severe subspace projection error while $\calJ_{\text{capacity}}$ evaluates this catastrophic alignment loss identically to a merge between two collinear (hence redundant) neurons due to their equivalent linear capacity reductions. Shifting to $\calJ_{\text{proj}}$ reorients the optimization objective from macroscopic reduction to minimizing distortion within the network's internal mapping.} (Section~\ref{sec:capacity}). We bridge the two by calibrating along an orthogonal trajectory where $ds = -dE$ translates the abstract capacity loss $-\dot{E}$ into a geometric speed $\dot{s}$. Here $s(t) = \int_{0}^{t} \normshort{\dot{\Phi}(\tau)}_{\calH^N} d\tau$ is the arc-length swept by $\Phi(t)$ through the space $\calH^N$. Because $\calH^N$ is isotropic, this substitution generalizes to any arbitrary deformation path, yielding $\dot{\calJ}_{\text{proj}}(t) = c(\Phi(t)) \frac{\dot{s}(t)}{E(\Phi(t))}$ (Definition~\ref{def:inf_weber}). This substitution shifts $\calJ$ from pure capacity loss to any distance traversed, meaning the strict $\dot{E}(t) < 0$ assumption from the idealized model is no longer required along the physical path.

The compression algorithm executes discrete leaps (e.g., snapping neurons $f_i$ and $f_j$ to a shared parent $f_p$). Evaluating the cost of this transition conceptually requires integrating $\dot{\calJ}_{\text{proj}} = c \cdot \dot{s} / E$ over the jump path. However, because runtime integration is computationally prohibitive, we seek a fast, closed-form proxy. Since underestimating this integral risks destructive jumps (e.g., removing orthogonal features) and breaching layer depletion barriers before the continuous cost can diverge, we derive a \textit{closed-form upper bound} to enforce cautious greedy optimization. We construct this bound by exploiting the inverse relationship between $\dot{\calJ}_{\text{proj}}$ and $E$ in $\dot{\calJ}_{\text{proj}} = c \cdot \dot{s} / E$.

For any arbitrary deformation path connecting $\Phi_a$ to $\Phi_b$, we can establish an upper bound on the integral cost by replacing the dynamic capacity $E(t)$ with a constant minimum, $E_{\min}$, allowing us to pull the denominator outside the integral. This yields a bounded fractional cost where the numerator is the path's total arc length, $\int_{0}^{1} \dot{s}(t) dt$. Because infinitely many curves in $\calH^N$ connect the two states, this establishes a family of valid upper bounds. To tighten this proxy cost, we minimize the numerator by selecting the path with the shortest arc length: the straight-line trajectory in $\calH^N$. This evaluates to the traversed Euclidean distance $D(\Phi_a, \tilde{\Phi}_b) \triangleq \normshort{\Phi_a - \tilde{\Phi}_b}_{\calH^N} = ( \sum_{k=1}^N \normshort{f_k^{(a)} - \tilde{f}_k^{(b)}}_{\calH}^2 )^\frac{1}{2}$.

Next, to complete this bound, we must safely approximate the denominator's minimum $E_{\min}$ along this chosen straight-line path. Because the straight-line geometric path acts as a secant across the space of functions (abandoning the strict $\dot{E}(t) < 0$ assumption), the capacity $E(t)$ can temporarily dip below the pre-deletion target $E(\tilde{\Phi}_b)$. To safely absorb this without breaking the integral bound, we introduce a safety buffer by evaluating the denominator at the true terminal state $E(\Phi_b)$. For highly correlated neuron pairs, $E(t) \geq E(\Phi_b)$ throughout the straight-line transition (Lemma~\ref{lemma:discontinuity_capacity_bound}).

Substituting the minimized numerator $D$ and evaluating the constant denominator as $E(\Phi_b)$ yields the final bound $\calJ_{\text{proj}}(\Phi_a, \Phi_b) \le c(\Phi_a) \frac{D(\Phi_a, \tilde{\Phi}_b)}{E(\Phi_b)} \equiv \calJ_{\text{bound}}(\Phi_a, \Phi_b)$ (see Theorem~\ref{thm:discrete_subspace}). Here, $\tilde{\Phi}_b \in \calH^N$ is the \textit{pre-deletion} target at $t=1$ (e.g., a duplicated parent $[f_p, f_p]$) but the $N$-dimensional structure remains intact. Conversely, $\Phi_b \in \calH^{N-1}$ is the true \textit{terminal state}: the layer after the extinguished neuron is dropped. This separation ensures no dimensional mismatch in the arguments of $D$, while the denominator only relies on the $(N-1)$-dimensional post-deletion capacity $E(\Phi_b)$.

\paragraph{Axiomatic Consistency of the Bounded Proxy.} While the continuous functional $\calJ_{\text{capacity}}$ was derived from our foundational axioms, the subsequent derivation of $\calJ_{\text{bound}}$ alters the underlying differential equation. Specifically, to bypass the expensive runtime integration, we introduced a surrogate curve and bounded the capacity denominator. Because these approximations manipulate the original differential equation, it is no longer guaranteed \textit{a priori} that the resulting closed-form proxy inherits the axiomatic properties of its continuous predecessor. However, we can prove that $\calJ_{\text{bound}}$ (and consequently $\calJ_{\text{final}}$) still  preserves the foundational axioms of Magnitude Neutrality and Connectivity Preservation (by Proposition~\ref{prop:bound_axioms}). However, the Infinitesimal Capacity Dependence assumption acts primarily as an analytical tool rather than a fundamental necessity, and is intentionally relaxed. Specifically, $\calJ_{\text{capacity}}$ relies on integration over a path characterized by a monotonic capacity drain $\dot{E}(t) < 0$. However, deriving the closed-form $\calJ_{\text{bound}}$ abandons this path integration in favor of a straight-line approximation evaluated at endpoints. Because this straight-line projection cuts directly across $\calH^N$, the intermediate capacity along the path may temporarily fluctuate, violating the assumption of monotonic decrease required by the original differential equation. Consequently, $\calJ_{\text{bound}}$ knowingly sacrifices the path-additivity required by the modeling assumption. This relaxation is necessary to translate abstract continuous theory into an efficient $\calO(1)$ evaluation of discrete state transitions.

\begin{insight}{Practical Notes}
\noindent \textbf{The Correlation Constraint.} The assumption $E(t) \geq E(\Phi_b)$ holds only for highly correlated neurons, but this poses no practical limitation. Because the projection error $D(\Phi_a, \Phi_b)$ vanishes for collinear candidates, the greedy optimizer naturally minimizes $\calJ_{\text{bound}}$ by actively selecting highly correlated pairs, inherently satisfying this requirement.

\vspace{0.15in}
\noindent \textbf{Locality of the Projection Error.} Evaluating $\calJ$ across a wide layer might seem computationally intractable. However, for reductions modifying only a small subset of neurons $\calS$ (e.g., pruning or merging), the cost restricts entirely to the perturbed subspace: $\calJ_{\text{bound}}(\Phi_a, \Phi_b) = c(\Phi_a) \sqrt{\sum_{k \in \calS} \normshort{f_k^{(a)} - f_k^{(b)}}_{\calH}^2}/E(\Phi_b)$ (Corollary~\ref{cor:locality}). This isolates the computation from the total architectural width, guaranteeing $\calO(1)$ execution time.
\end{insight}

\noindent\textbf{Choice of $c(\Phi)=N$.} We previously specified $c(\Phi)$ to be piecewise constant; we now propose a more specific definition: setting $c(\Phi)=N$ for each continuous piece. This is to avoid unfair removal of critical diversity from wide layers by the global optimizer before addressing obvious redundancies in narrow bottlenecks, which may occur as capacity $E(\Phi)$ intrinsically scales with layer width. To mitigate this width bias, we normalize $\calJ$ using the \textit{average feature capacity}\footnote{Consider a mean-field assumption where each active neuron contributes an average capacity $\bar{e}$. The incremental cost of pruning a single neuron evaluates to $\calJ_{\text{prune}} \approx \frac{N \cdot \bar{e}}{N \cdot \bar{e}} = 1$. This normalization renders the penalty invariant to the instantaneous layer width. Without this dynamic coupling (e.g., if $c$ were anchored to $N_{\text{initial}}$), the incremental cost would artificially explode as the live capacity shrinks, forcing an artificial uniformity that prevents the optimizer from fully extinguishing noisy, redundant blocks.} $E(\Phi) / N$, which can be implemented by setting $c(\Phi)=N$. Substituting this $c(\Phi)$ into $\calJ_{\text{bound}}$ yields the final cost $\calJ_{\text{final}} \,\triangleq\, \frac{N \cdot D(\Phi_a, \tilde{\Phi}_b)}{E(\Phi_b)}$. We can instantiate the pruning and merging costs as special cases of $\calJ_{\text{final}}$.

\subsection{Final Pruning and Merging Costs}
Pruning a neuron $f_i$ corresponds to projecting its rank-1 operator down to the null operator $\zeroVect$. Because the perturbed subspace only contains this single neuron $\calS = \{i\}$ and its terminal state is $\zeroVect$, the projection error simplifies to $D = \sqrt{\normshort{f_i - \zeroVect}_{\calH}^2} = \normshort{f_i}_{\calH}$. By evaluating the terminal capacity as $E(\Phi_b) = E_a - \normshort{f_i}_{\calH}$ we get $\calJ_{\text{prune}} = \frac{N \cdot \normshort{f_i}_{\calH}}{E_a - \normshort{f_i}_{\calH}}$. 

Merging a neuron pair is slightly more involved. For neurons $i$ and $j$, their joint operator $[f_i, f_j]$ spans a rank-2 subspace in $\calH$. Because $f_i$ and $f_j$ are vector-valued functions, their joint operator is matrix-valued, denoted as $\Wall \triangleq [f_i, f_j]$. Merging compresses this into a rank-1 approximation $\Wall'$. Classic unconstrained rank truncation (Eckart-Young-Mirsky) prescribes a rank-one basis $f_b$ and independent scaling factors $\alpha, \beta \in \R$ by solving $\min_{f_b \in \calH, \alpha, \beta \in \R} \normshort{\Wall - \Wall'}^2_{\calH}$, where $\Wall' = [\alpha f_b, \beta f_b]$. 

However, because a physical neuron must produce a single unified output, we must restrict the valid replacement pair to $\Wall' = [f_p, f_p]$. This enforces the constraint $\alpha = \beta = 1$, yielding the constrained objective $\min_{f_p \in \calH} \normshort{\Wall - \Wall'}^2_{\calH}$ and rendering standard unconstrained projections inapplicable. 

Deferring the derivation of the optimal parent $f_p$ to Section~\ref{sec:parent_params}, we first establish the objective functional itself. Since the distance $D$ is the expected Frobenius projection error under the \emc{Hilbert-Schmidt norm}, we expand it as follows:
\begin{equation}
D^2(\Phi_a, \Phi_b) = \normshort{\Wall - \Wall'}^2_{\calH} = \E_{\xVect \sim P_{\calX}} \left[ \normshort{\Wall(\xVect) - \Wall'(\xVect)}_F^2 \right] = \normshort{f_i - f_p}_{\calH}^2 + \normshort{f_j - f_p}_{\calH}^2 \nonumber \,.
\end{equation}
The terminal capacity $E_b$ updates by swapping the eliminated children for the new parent: $E_b = E_a - \normshort{f_i}_{\calH} - \normshort{f_j}_{\calH} + \normshort{f_p}_{\calH}$. Substituting $D$ and $E_b$ yields the final merging cost.

\begin{insight}{Pruning and Merging Costs}
\begin{equation}
\calJ_{\text{prune}} = \frac{N \, \normshort{f_i}_{\calH}}{E_a - \normshort{f_i}_{\calH}} \qquad,\qquad \calJ_{\text{merge}} = \frac{N \, \sqrt{\normshort{f_i - f_p}_{\calH}^2 + \normshort{f_j - f_p}_{\calH}^2}}{E_a - \normshort{f_i}_{\calH} - \normshort{f_j}_{\calH} + \normshort{f_p}_{\calH}} \,.
\end{equation}
\end{insight}

\section{Generating the Parent Neuron}
\label{sec:parent_neuron}

\subsection{The Parent Neuron in Hilbert Space}
\label{sec:parent_params}
We determine the optimal parent neuron $f_p^*$ by minimizing $\calJ_{\text{merge}}(f_p)$ subject to $f_p\in \calN$, where $\calN$ denotes the space of realizable neurons:
\begin{equation}
\label{eq:realizable_neurons}
\calN \triangleq \{ f \,|\, f(\xVect) = \wVect_{\text{out}} \Psi(\tilde{\wVect}_{\text{in}} \cdot \tilde{\xVect}) \} \subset \calH \,.
\end{equation}
Here $\tilde{\xVect} = [\xVect, 1]^T$ and $\tilde{\wVect}_{\text{in}} = [\wVect_{\text{in}}^{\text{eff}}, b]^T$ denote the augmented inputs and weights. Any non-zero function $f \in \calH$ can be decomposed into a scalar magnitude $s > 0$ and a direction $\psi \in \calH$, such that $f = s\psi$ and $\|\psi\|_{\calH} = 1$. Applying this to the parent neuron $f_p$ allows us to reformulate the search for $f_p^*$ as the following nested optimization problem:
\begin{equation}
\label{eq:nested}
    \min_{s \in \R^+} \min_{ \psi \in \calN} \frac{\sqrt{\|f_p - f_i\|_{\calH}^2 + \|f_p - f_j\|_{\calH}^2}}{E_a - \|f_i\|_{\calH} - \|f_j\|_{\calH} + \|f_p\|_{\calH}} \qquad \mbox{s.t} \qquad f_p = s \, \psi \quad,\quad \normshort{\psi}_{\calH} = 1 \quad,\quad s>0
\end{equation}

\subsubsection{Optimal Direction}
\label{sec:exact_optimal_direction}
We first focus on the inner optimization problem of (\ref{eq:nested}). Expanding the squared numerator of the cost functional reveals that for a fixed magnitude $s > 0$, minimizing the cost in $\psi$ is equivalent to maximizing the alignment $\inner{\psi}{f_i + f_j}_{\calH}$ in $\psi$. To solve the latter, we enforce the realizability $\psi \in \calN$ and unit-norm $\normshort{\psi}_{\calH} = 1$ constraints by decoupling the input and output parameters, yielding the parametric form:
\begin{equation}
\label{eq:psi_valid_exact}
\psi = \frac{\Psi(\uVect \cdot \tilde{\xVect})}{\sqrt{K(\uVect,\uVect)}} \vVect \,.
\end{equation}
Substituting this parametric form into the unconstrained alignment objective and distributing the Hilbert inner product via the kernel identity isolates the output direction $\vVect$. By the Cauchy-Schwarz inequality, the optimal $\vVect^*$ must align with $\sum_{k \in \{i,j\}} K(\uVect^*, \tilde{\wVect}^k_{\text{in}}) \wVect^k_{\text{out}}$. Substituting this optimal $\vVect^*$ back into the objective simplifies the alignment inner product to the Euclidean norm of that sum, yielding the final objective for the optimal $\uVect^*$ (Theorem~\ref{thm:optimal_parent}):
\begin{equation}
\label{eq:opt_u_exact_main}
\vVect^* = \frac{\sum_{k \in \{i,j\}} K(\uVect^*, \tilde{\wVect}^k_{\text{in}}) \wVect^k_{\text{out}}}{\normshort{\sum_{k \in \{i,j\}} K(\uVect^*, \tilde{\wVect}^k_{\text{in}}) \wVect^k_{\text{out}}}} \,\,,\,\, \uVect^* = \argmax_{\normshort{\uVect}=1} \frac{\normshort{\sum_{k \in \{i,j\}} K(\uVect, \tilde{\wVect}^k_{\text{in}}) \wVect^k_{\text{out}}}}{\sqrt{K(\uVect,\uVect)}} \, \mbox{s.t.} \, K(\uVect, \uVect) > 0
\end{equation}
The above optimization\footnote{By the PH-1 property of $\Psi$, the mapping $\uVect \mapsto K(\uVect, \uVect)$ is homogeneous, implying that the objective is invariant to the transformation $\uVect \leftarrow c \uVect$ for any $c > 0$. We arbitrarily enforce $\|\uVect\|=1$ to keep the problem well-posed.} in $\uVect$ generally lacks a closed-form solution due to the non-linear nature of the kernel $K$. To maintain computational tractability, we introduce an approximation scheme that reduces the objective to an eigenvalue problem. Our approximation assumes that for any unit vector $\xVect$ and non-zero $\yVect$, the kernel factors as $K(\xVect,\yVect) = \normshort{\yVect} k\left( \inner{\xVect}{\frac{\yVect}{\normshort{\yVect}}} \right)$ for some angular function $k : [-1,1] \rightarrow \R$ bounded by $k(\rho) \leq 1$. Additionally, we require $k(1) = k'(1)$ and $k(1) > 0$. These conditions naturally hold for all PH-1 functions (piecewise linear with a single knot at the origin), e.g., ReLU, Leaky-ReLU; see Propositions \ref{prop:separability_kernel} to \ref{prop:kernel_positivity}. For highly correlated neuron pairings, the optimal parent direction $\uVect$ aligns closely with its children, pushing their cosine similarity $\rho \triangleq \inner{\uVect}{\frac{\tilde{\wVect}_{\text{in}}}{\normshort{\tilde{\wVect}_{\text{in}}}}}$ toward $1$. Expanding $k(\rho)$ to first order around $\rho = 1$ and applying the $k(1) = k'(1)$ identity cancels the constant terms, yielding the linear approximation $k(\rho) \approx \rho k(1)$. While this degrades for unaligned vectors, the phase-check provided later in the section corrects anti-alignment by flipping the sign of $\uVect$, ensuring the optimization trajectory remains safely within this linear domain.

Applying the linear approximation to the numerator of the objective, and defining the constant matrix $\Amat \triangleq \wVect^i_{\text{out}} (\tilde{\wVect}^i_{\text{in}})^T + \wVect^j_{\text{out}} (\tilde{\wVect}^j_{\text{in}})^T$, the summation factors neatly: $ \sum_{k \in \{i,j\}} K(\uVect, \tilde{\wVect}^k_{\text{in}}) \, \wVect^k_{\text{out}} \approx k(1) \Amat \uVect$. Conversely, the denominator requires no approximation; because $\normshort{\uVect}=1$, self-alignment $\rho=1$ evaluates to $K(\uVect, \uVect) = k(1)$. Substituting these into the original optimization problem gives: $ \widehat{\uVect} = \arg\max_{\uVect} \frac{\normshort{k(1) \Amat \uVect}}{\sqrt{k(1)}} \qquad\mbox{s.t.}\qquad \normshort{\uVect}=1 $. Because $k(1) > 0$, the scalars pull out. Dropping these constants and squaring the strictly non-negative objective simplifies the unconstrained problem to a standard quadratic form: $ \widehat{\uVect} = \arg\max_{\uVect} \, \uVect^T \Amat^T \Amat \uVect \qquad\mbox{s.t.}\qquad \normshort{\uVect}=1$. The optimal direction $\widehat{\uVect}$ is simply the principal eigenvector of $\Amat^T \Amat$. While explicitly constructing this ambient matrix is computationally prohibitive, $\Amat$ is fundamentally rank-2. Restricting the eigendecomposition to this rank-2 subspace bypasses the ambient dimension entirely, yielding the principal eigenvector via a fast closed-form solution.

\noindent\textbf{Determining the Sign of $\uVect$.} For PH-1 activations, the kernel is sign-sensitive $K(\uVect, \tilde{\wVect}_{\text{in}}) \neq K(-\uVect, \tilde{\wVect}_{\text{in}})$. However, because our linearization approximation relies on the leading eigenvector $\widehat{\uVect}$ of $\Amat^T \Amat$, we only recover the solution up to a sign ambiguity. We resolve this by evaluating both candidate polarities $\pm\widehat{\uVect}$ in the exact, non-linearized objective (\ref{eq:opt_u_exact_main}):
\begin{equation}
\uVect_{\text{correct}} = \argmax_{\uVect \in \{\widehat{\uVect},-\widehat{\uVect}\}} \frac{\norm{\sum_{k \in \{i,j\}} K(\uVect, \tilde{\wVect}^k_{\text{in}}) \wVect^k_{\text{out}}}}{\sqrt{K(\uVect,\uVect)}} \,.
\end{equation}

\subsubsection{Optimal Scale}
Recall from (\ref{eq:psi_valid_exact})
that the unit-norm direction $\psi \in \calN$ is parameterized by unit vectors $\uVect$ and $\vVect$ as $\psi = \frac{\Psi(\uVect \cdot \tilde{\xVect})}{\sqrt{K(\uVect,\uVect)}} \vVect$. Substituting $f = s\psi$ into the merging cost \eqref{eq:nested} and defining constants $a \triangleq \|f_i\|_{\calH}^2 + \|f_j\|_{\calH}^2$, $b \triangleq \inner{\psi}{f_i+f_j}_{\calH}$, and $E_{\text{rem}} \triangleq E_a - \|f_i\|_{\calH} - \|f_j\|_{\calH}$, minimizing the squared objective reduces to the 1D problem $s^* = \argmin_{s > 0} \frac{2s^2 - 2bs + a}{(s+E_{\text{rem}})^2}$. Setting the derivative with respect to $s$ to zero yields the unique minimizer $s^* = \frac{a + b E_{\text{rem}}}{2 E_{\text{rem}} + b}$. This solution is also stable. By definition, the residual capacity $E_{\text{rem}} \ge 0$, and the prior phase-check ensures the alignment in function space $b > 0$. Thus, the denominator is strictly positive, guaranteeing a unique global minimum in the positive domain (simplifying cleanly to $s^* = a/b$ in the event of a total layer collapse where $E_{\text{rem}}=0$). Once the optimal scale $s^*>0$ is determined, the parent neuron is fully characterized as shown below.

\begin{insight}{Optimal Parent Neuron}
\begin{equation}
\label{eq:fp_star}
f_p^*(\tilde{\xVect}) = s^* \psi^*(\tilde{\xVect}) \quad,\quad \psi^*(\tilde{\xVect}) = \frac{\Psi(\uVect_{\text{c}} \cdot \tilde{\xVect})}{\sqrt{K(\uVect_{\text{c}},\uVect_{\text{c}})}} \vVect^* \quad,\quad s^* = \frac{\|f_i\|_{\calH}^2 + \|f_j\|_{\calH}^2 + E_{\text{rem}} \, \inner{\psi^*}{f_i+f_j}_{\calH}}{2 E_{\text{rem}} + \inner{\psi^*}{f_i+f_j}_{\calH}}
\end{equation}

\begin{equation}
\uVect_{\text{c}} = \argmax_{\uVect \in \{\widehat{\uVect},-\widehat{\uVect}\}} \frac{\normshort{\sum_{k \in \{i,j\}} K(\uVect, \tilde{\wVect}^k_{\text{in}}) \wVect^k_{\text{out}}}}{\sqrt{K(\uVect,\uVect)}} \quad,\quad \vVect^* = \frac{\sum_{k \in \{i,j\}} K(\uVect_{\text{c}}, \tilde{\wVect}^k_{\text{in}}) \wVect^k_{\text{out}}}{\normshort{\sum_{k \in \{i,j\}} K(\uVect_{\text{c}}, \tilde{\wVect}^k_{\text{in}}) \wVect^k_{\text{out}}}}
\end{equation}

\begin{equation}
\widehat{\uVect} = \arg\max_{\normshort{\uVect}=1} \, \norm{\left( \wVect^i_{\text{out}} (\tilde{\wVect}^i_{\text{in}})^T + \wVect^j_{\text{out}} (\tilde{\wVect}^j_{\text{in}})^T \right) \uVect} \,.
\end{equation}

\end{insight}

\subsection{From Hilbert Space to Physical Parameters}
This section bridges the \textbf{abstract function space} and the \textbf{physical parameter space} by mapping the mathematical operator derived in $\calH$ back into physical parameters. This parameter recovery is only necessary for merging. For pruning, the projection target is simply the null operator $\zeroVect$, which leads to $f(\xVect) = 0$; this is trivially realized by zeroing out the neuron's incoming weights, outgoing weights, and BN parameters. However, deploying the parent neuron $f_p^* \in \calH$ derived in (\ref{eq:fp_star}) requires determining the physical parameters (weights $\wVect^\text{raw}_p, b_p, \wVect_{p, \text{out}}$ and BN statistics $\beta_p, \gamma_p, \mu_p, \sigma_p$) that will configure the forward pass to reproduce its targeted non-zero activation profile.

\subsubsection{Input/Output Scaling}
To form a standard realizable neuron as described in (\ref{eq:realizable_neurons}), we equate $f_p^*(\tilde{\xVect}) = \wVect_{\text{out}}^* \Psi(\tilde{\wVect}_{\text{in}}^* \cdot \tilde{\xVect})$ and then specify the parameters $\tilde{\wVect}^*_{\text{in}}$ and $\wVect_{\text{out}}^*$. Because the PH-1 activation $\Psi$ exhibits scale symmetry, the amplitude $s^* / \sqrt{K(\uVect^*,\uVect^*)}$ can be factored into arbitrary input and output scales $\tilde{\wVect}_{\text{in}}^* = s_\text{in} \uVect^*$ and $\wVect_{\text{out}}^* = s_\text{out} \vVect^*$, for any $s_\text{in},s_\text{out} \geq 0$ satisfying $s_\text{in} s_\text{out} = s^* / \sqrt{K(\uVect^*,\uVect^*)}$. While any factorization yields the same mapping $\calX \rightarrow \R^c$, amplitude distribution impacts fine-tuning dynamics. To preserve the original layer's balance, we define the subspace Frobenius ratio $R_F \triangleq \normshort{\Wmat_{\text{in}}}_F / \normshort{\Wmat_{\text{out}}}_F$, where $\Wmat_{\text{in}} = [\tilde{\wVect}^i_{\text{in}} \mid \tilde{\wVect}^j_{\text{in}}]$ and $\Wmat_{\text{out}} = [\wVect^i_{\text{out}} \mid \wVect^j_{\text{out}}]$. Constraining the parent neuron to this ratio requires $\normshort{\tilde{\wVect}^*_{\text{in}}}_2 / \normshort{\wVect^*_{\text{out}}}_2 = R_F$. Since $\uVect^*$ and $\vVect^*$ are unit vectors, $s_\text{in} / s_\text{out} = R_F$. This uniquely determines the scale factors, yielding the final parameters:
\begin{equation}
\tilde{\wVect}_{\text{in}}^* = \sqrt{s^* R_F} \cdot K_\text{self}^{-1/4}\, \uVect^* \qquad,\qquad \wVect_{\text{out}}^* = \sqrt{\frac{s^*}{R_F}} \cdot K_\text{self}^{-1/4} \, \vVect^* \qquad,\qquad K_\text{self} \triangleq K(\uVect^*,\uVect^*) \,.
\end{equation}

\subsubsection{Raw Input and BN Parameters}
\label{sec:bn_invert}
While the Hilbert space formulation operates entirely on the \textbf{effective} input parameters $\tilde{\wVect}_{\text{in}} \triangleq (\wVect_{\text{in}}^{\text{eff}}, b)$, realizing the physical network requires recovering the underlying physical parameters: $\wVect^\text{raw}, \beta, \gamma, \mu,$ and $\sigma$. Since the parent direction $\widehat{\uVect}$ lies within the 2D subspace spanned by the augmented children, there exist projection coefficients $c_1$ and $c_2$ that produce the effective parameters:
\begin{equation}
\wVect_{p, \text{in}}^{\text{eff}} = c_1 \wVect_{\text{in},i}^{\text{eff}} + c_2 \wVect_{\text{in},j}^{\text{eff}} \quad \text{and} \quad b_p = c_1 b_i + c_2 b_j \,.
\end{equation}
By mapping these coefficients through the pre-activation distributions of the children, we can deduce the required BN statistics for the parent neuron. Because the physical BN equations form an under-constrained system, we resolve the ambiguity by anchoring the variance such that $\sigma_p^2 = \max(0, \gamma_p^2 - \epsilon)$. As rigorously derived in Appendix~\ref{app:bn_inversion_details}, this anchoring yields a closed-form recovery of all physical parameters. For the active regime $\gamma_p^2 \ge \epsilon$, these evaluate to:
\begin{equation}
\wVect_{\text{in},p}^{\text{raw}} =  \wVect_{\text{in},p}^{\text{eff}} \qquad , \qquad \mu_p = c_1 \beta_i + c_2 \beta_j - b_p
\end{equation}
\begin{equation}
\beta_p = c_1 \beta_i + c_2 \beta_j \qquad,\qquad \sigma_p = \gamma_p = \sqrt{c_1^2 \gamma_i^2 + c_2^2 \gamma_j^2 + 2 c_1 c_2 |\gamma_i| |\gamma_j| \hat{\rho}_{ij}} \,.
\end{equation}
where $\hat{\rho}_{ij}$ is from (\ref{eq:rho_hat_maintext}). Note that $\sigma_p \approx \gamma_p$ is an approximation that assumes the numerical stability constant $\epsilon$ is negligible. The exact boundary-safe formulation $\sigma_p^2 = \max(0, \gamma_p^2 - \epsilon)$ and the edge case for inactive features $\gamma_p^2 < \epsilon$ are deferred to Appendix~\ref{sec:numerical_stability_BN}. Furthermore, Appendix~\ref{app:bn_inversion_details} provides the full step-by-step derivation, along with a proof demonstrating that the physical forward pass acts as a self-correcting mechanism that ensures the network's mapping remains invariant to the sign of the recovered scale $\gamma_p$.

\section{Block Eviction}
\label{sec:block_eviction}
This section expands the granular compression cost $\calJ_{\text{bound}}$ established in Section~\ref{sec:layer_transition_costs} to a new macro-level operation: \textbf{block eviction}. Focusing on residual blocks in architectures like ResNet-50, we extend the previously developed continuous integral to evaluate block eviction alongside granular operations within a single, unified mathematical framework.

Consider the canonical residual block, which processes an input representation $X$ (capitalized to distinguish it from the flattened vector $\xVect$) through a three-stage mapping pathway $F(X)$. This pathway sequentially applies weight parameters $W_1$, $W_2$, and $W_3$, and the result is added to a skip connection to yield the final pre-activation $Y = X + F(X)$. We define \textbf{Block Eviction} as forcing $F(X) \to \zeroVect$, and thus collapsing the block into a pure identity mapping $Y = X$ (see Figure~\ref{fig:block_eviction}).

\begin{figure}[htbp]
    \centering
    \resizebox{\textwidth}{!}{%
    \begin{tikzpicture}[
        % --- Style Definitions ---
        node distance=1.2cm and 1.2cm,
        % Updated styles for multi-line text
        conv_style/.style={rectangle, rounded corners, draw=black!70, fill=blue!20, minimum height=1.2cm, minimum width=1.6cm, align=center},
        bn_style/.style={rectangle, rounded corners, draw=black!70, fill=purple!15, minimum height=1cm, minimum width=1.4cm, text centered},
        relu_style/.style={rectangle, rounded corners, draw=black!70, fill=green!20, minimum height=1cm, minimum width=1.4cm, text centered},
        op/.style={circle, draw=black!70, fill=orange!10, minimum size=0.9cm, text centered},
        data/.style={font=\Large\bfseries},
        arrow/.style={-{Stealth[length=3mm, width=2.5mm]}},
        brace/.style={decorate, decoration={brace, amplitude=10pt}},
        arrowlabel/.style={font=\small},
        % New style for indicating eviction
        evict_box/.style={rectangle, fill=red!5, draw=red, dashed, very thick, rounded corners, inner sep=12pt}
    ]

        %################################################################
        %### Base ResNet Block Diagram ##################################
        %################################################################
        
        % --- Input ---
        \node[data] (x_in) {$X$};

        % --- Residual Path F(x) ---
        % Mapped explicitly to W1, W2, W3 as described in Section 8
        \node[conv_style] (conv_a) [right=of x_in] {$W_1$ \\ \scriptsize 1x1 Conv \\ \scriptsize (Reduction)};
        \node[bn_style]   (bn_a)   [right=of conv_a] {BN};
        \node[relu_style] (relu_a) [right=of bn_a] {ReLU};
        
        \node[conv_style] (conv_b) [right=of relu_a] {$W_2$ \\ \scriptsize 3x3 Conv \\ \scriptsize (Spatial)};
        \node[bn_style]   (bn_b)   [right=of conv_b] {BN};
        \node[relu_style] (relu_b) [right=of bn_b] {ReLU};
        
        \node[conv_style] (conv_c) [right=of relu_b] {$W_3$ \\ \scriptsize 1x1 Conv \\ \scriptsize (Expansion)};
        \node[bn_style]   (bn_c)   [right=of conv_c] {BN};

        % --- Summation ---
        \node[op] (add) [right=of bn_c] {+};
        
        % --- Final Activation and Output ---
        \node[relu_style] (relu_out) [right=of add] {ReLU};
        \node[data] (y_out) [right=of relu_out] {$Z$};

        % --- Data Flow Arrows ---
        \draw[arrow] (x_in) -- (conv_a);
        \draw[arrow] (conv_a) -- (bn_a);
        \draw[arrow] (bn_a) -- (relu_a);
        \draw[arrow] (relu_a) -- (conv_b);
        \draw[arrow] (conv_b) -- (bn_b);
        \draw[arrow] (bn_b) -- (relu_b);
        \draw[arrow] (relu_b) -- (conv_c);
        \draw[arrow] (conv_c) -- (bn_c);
        \draw[arrow] (bn_c) -- (add);
        \draw[arrow] (add) -- node[above] {$Y$} (relu_out);
        \draw[arrow] (relu_out) -- (y_out);
        
        % --- Skip Connection Path (Emphasized) ---
        \draw[arrow, very thick, blue!80!black] (x_in) .. controls +(up:2.5cm) and +(up:2.5cm) .. 
            node[above, arrowlabel, font=\bfseries] {Skip Connection (Preserves $Y = X$)} (add);

        % --- Eviction Highlights (Background Layer) ---
        \pgfdeclarelayer{background}
        \pgfsetlayers{background,main}
        \begin{pgfonlayer}{background}
            % Draw the eviction boundary
            \node[evict_box, fit=(conv_a)(bn_c)(relu_b)] (eviction_boundary) {};
            % Add a label below the boundary
            \node[below=5pt of eviction_boundary, font=\bfseries\color{red!80!black}] {Targeted for Block Eviction: $F(X) \to \zeroVect$};
        \end{pgfonlayer}

        % --- Brace for F(x) ---
        \draw[brace, thick] ($(conv_a.north west)+(-0.2,0.3)$) -- ($(bn_c.north east)+(0.2,0.3)$) 
            node[midway, above=10pt, font=\bfseries] {Residual Function $F(X)$};

    \end{tikzpicture}%
    }
    \caption{Illustration of the canonical ResNet V1 residual block and its eviction process. Eviction forces the internal pathway $F(X) \to \zeroVect$, collapsing the block into a pure identity pre-activation $Y = X$.}
    \label{fig:block_eviction}
\end{figure}

\subsection{Motivation}
A dedicated macro-level operation is required because standard granular pruning cannot remove the block's final layer $W_3$. The output dimensionality of $F(X)$ must match the skip connection $X$ for element-wise addition. Consequently, granular compression can only deplete the internal layers $W_1,W_2$, which leaves the output channels of $W_3$ locked at their ambient size. Leaving a residual pathway active under these conditions creates two issues:
\begin{itemize}
    \item \textbf{Model Generalization:} When $W_1$ and $W_2$ are heavily depleted, the pathway functionally reduces to injecting an uncalibrated BN effective bias $B_{\text{eff}}$ into the skip connection $Y = X + B_{\text{eff}}$. This shifts downstream feature maps out of their calibrated domain, often causing catastrophic ReLU clipping and irreversible information loss when going from $Y$ to $Z$.
    \item \textbf{Execution Efficiency:} Retaining the massive $W_3$ parameter tensor simply to process a negligible subspace violates the core objective of compression.
\end{itemize}
Block eviction resolves both issues by projecting the pathway $F(X)$ to the null operator. By yielding a pure identity mapping $Y = X$, we avoid uncalibrated bias injection and leverage the fact that residual architectures are inherently designed to be robust to identity mappings (e.g., standard $\gamma=0$ initialization practices) \cite{goyal2017accurate, he2016identity}. Full mathematical details of this degradation are provided in Appendix~\ref{sec:app_block_eviction}.

\subsection{The Unified Macro Cost $\calJ_{\text{evict}}$}
To evaluate this macro-operation within our framework, we must expand our definition of layer state. To see why, observe that Axiom 2 imposes an infinite cost penalty on projecting an entire layer to zero to prevent disconnecting the network graph. However, this penalty creates an artificial barrier here, as the parallel identity mapping preserves overall connectivity of the block and keeps it alive. To account for this skip pathway, we formulate a macroscopic state $\Omega^{(l)} \triangleq (\Phi^{(l)}, \calI)$ that couples the targeted internal layer $\Phi^{(l)}$ with the ambient skip connection $\calI$.

The skip connection provides a parallel survival capacity $E_{\text{identity}}$ that keeps the mathematical projection stable. As rigorously derived in Appendix~\ref{sec:app_block_eviction}, integrating the continuous capacity cost over this macro-state and applying a linear upper bound to safely govern massive discrete architectural leaps yields a closed-form distortion criterion. For a standard residual bottleneck comprising two internal convolution layers $l \in \{1, 2\}$, the total macroscopic distortion is the linear sum of their independent projection bounds:
\begin{equation}
    \calJ_{\text{evict}} = \sum_{l=1}^{2} \calJ_{\text{layer}}(\Omega_a^{(l)}, \Omega_b^{(l)}) = \sum_{l=1}^{2} N_{\text{active}}^{(l)} \left( \frac{E_{\text{active}}^{(l)}}{E_{\text{identity}}} \right) \,.
\end{equation}
Here, $N_{\text{active}}^{(l)}$ and $E_{\text{active}}^{(l)}$ represent the active operator count and surviving capacity of internal layer $l$, respectively. The parallel survival capacity evaluates to the expected RMS energy of the identity operators conditioned by the preceding BN layer: $E_{\text{identity}} = \sum_{k=1}^{d_{\text{amb}}} \sqrt{\gamma_k^2 + \beta_k^2}$.

\begin{insight}{ResNet Block Eviction Cost}
\begin{equation}
    \calJ_{\text{evict}} = \frac{\sum_{l=1}^{2} N_{\text{active}}^{(l)} \, E_{\text{active}}^{(l)}}{\sum_{k=1}^{d_{\text{amb}}} \sqrt{\gamma_k^2 + \beta_k^2}} \,.
\end{equation}
\end{insight}

\section{Balancing Compression and Distortion}
\label{sec:rate_distortion}

All cost functionals discussed thus far ($\calJ_{\text{prune}}$ and $\calJ_{\text{merge}}$ for granular reductions, and $\calJ_{\text{evict}}$ for block evictions) measure the projection error incurred when transitioning from a given state to a reduced state. However, rate-distortion theory establishes that distortion alone cannot fully characterize a lossy compression scheme: lower signal distortion requires a higher bit count, while stronger compression inevitably increases distortion. To balance these competing objectives, we aim to minimize total distortion under a fixed bit count budget.

Progressive compression is therefore formulated as a trajectory planning problem within the action space. The goal is to craft a sequence of compression operations that yields a final model satisfying the allowable bit budget while minimizing the total accumulated distortion along the trajectory. Solving this represents a highly complex planning problem due to two primary challenges:
\begin{enumerate}
    \item \textbf{Dynamic State Dependency:} The cost of an action changes continuously as the model transitions between states. For example, pruning a single neuron shrinks the layer's residual capacity, which instantaneously alters the cost $\calJ$ of subsequent operations, such as pruning another neuron or evicting an entire block. Consequently, the mathematical cost landscape is constantly shifting.
	\item \textbf{Mutually Exclusive Actions:} The action space contains complex combinatorial dependencies. If the optimizer merges Neuron A with Neuron B, independent actions like ``Prune A'' or ``Merge A with C'' become permanently invalid.
\end{enumerate}
For computational tractability, we must relax these constraints. At each iteration, we temporarily assume all currently admissible operations will remain valid for future iterations, ignoring their mutually exclusive nature. While this generates a complete theoretical action sequence, executing the full trajectory would introduce compounding errors in both state transitions and capacity counts. Instead, we adopt a \textbf{receding-horizon strategy} \cite{camacho2013model, bertsekas2012dynamic}: we compute the optimal sequence, but execute only the immediate next action. Then we physically update the network and then re-evaluate all admissible functions from scratch. This single-step execution acts as an inherent auto-correction mechanism that ensures adherence to constraints over each short-term step.

Formally, let $\calA = \{1, 2, \dots, K\}$ denote the set of all admissible compression operations at the current encoding iteration, encompassing all feasible granular and macro operations. Each action $k$ incurs a distortion penalty $\calJ_k$ and releases $\Delta P_k$ parameters (see Appendix~\ref{sec:delta_P} for details on computing $\Delta P$). Assuming a standard fixed-precision representation (e.g., 32-bit floating-point), bit reduction is proportional to parameter reduction. This direct scaling allows us to express the allowable budget directly in terms of the parameter footprint. We frame this optimization as:
\begin{equation}
(a_1^*,\cdots,a_K^*) = \arg\min_{a_1,\cdots,a_K} \sum_{k=1}^K a_k \calJ_k \qquad\mbox{s.t.}\qquad \sum_{k=1}^K a_k \Delta P_k \geq P_0 - P_{\text{budget}} \quad,\quad \forall k \,;\, a_k \in \{0,1\} \,.
\end{equation}
where $P_0$ is the initial parameter count and $P_{\text{budget}}$ is the maximum allowable parameter footprint for the final model. This formulation is a \textbf{discrete knapsack problem}, which is well-known to be NP-Hard \cite{karp1972reducibility, garey1979computers}. We resolve this using a continuous relaxation heuristic, replacing the binary constraint $a_k \in \{0,1\}$ with a continuous bound $0 \leq a_k \leq 1$. This transforms the objective into a \textbf{continuous knapsack problem} (a specific class of linear programming) that admits a highly efficient analytical solution. As established by Dantzig \cite{dantzig1957discrete}, the exact optimal solution is found greedily: candidates are sorted by their \textbf{distortion rate} (DR), defined as the cost-to-capacity ratio $\calJ_k / \Delta P_k$, and assigned $a_k = 1$ in ascending order until the budget constraint is saturated. Because our receding-horizon framework executes only the single next action, the problem reduces to selecting the operation with the minimal DR: 
\begin{equation}
k^* = \arg\min_{k \in \calA} \frac{\calJ_k}{\Delta P_k} \,.
\end{equation}

While the \textit{receding-horizon strategy} mitigates the dynamic dependency of $\calJ$, the continuous knapsack solver still requires $\Delta P$ to satisfy Dantzig's \textit{Axiom of Item Independence}: the weight of one item cannot depend on the selection state of another. Particularly in our problem, evaluating operations using the dynamically shrinking live parameter footprint $\Delta P$ violates this axiom because adjacent layers share weight matrices; pruning a neuron physically shrinks the $\Delta P$ of its neighbors.

A naive optimization using this live $\Delta P$ triggers a failure mode: as a layer is compressed, the expected DR of neighboring structures artificially inflates. This repels the optimizer and may trap the architecture in a fragmented state that prevents the removal of contiguous blocks. Decoupling parameter yield from dynamic state via the static surrogate $\Delta P_k^{\text{init}}$ restores item independence and avoids this failure mode:
\begin{insight}{Action Selection Criterion}
\begin{equation}
k^* = \arg\min_{k \in \calA} \frac{\calJ_k}{\Delta P_k^{\text{init}}}\,.
\end{equation}
\end{insight}

\section{The Encoding Loop}
\label{sec:progressive_encoder}
With the optimal action selection now formally defined, we execute progressive encoding as a greedy dynamical system. At a high level, the algorithm continuously identifies the optimal action $k^*$ offering the lowest DR $\calJ_k/\Delta P_k^{\text{init}}$, performs a localized recalculation exclusively for the modified structures (e.g., a newly generated parent neuron $f_p^*$) and their immediate neighbors, decrements the relevant dimension count, and repeats. Specifically, the process operates in the following three phases and terminates once the target physical parameter budget is reached or no admissible compression operations remain:

\textbf{1. Initialization:} Before compression begins, the algorithm precomputes and caches the individual capacities of all neurons, the pairwise geometric cross-capacities of all valid merging pairs, and the total initial capacity of every layer (establishing the starting value for $E_{\text{rem}}$).

\textbf{2. The Greedy Scan:} At each iteration, the algorithm scans all $L$ layers (each containing roughly $N$ active neurons) to find the single optimal compression action $k^*$ yielding the lowest DR $\calJ_k/\Delta P_k^{\text{init}}$. For pruning, evaluating every individual candidate across the network requires $\calO(L \cdot N)$ operations. For merging, evaluating every valid pair requires checking $\frac{N(N-1)}{2}$ combinations per layer, leading to $\calO(L \cdot N^2)$ operations. Because querying the cached $\calJ$ for each candidate takes $\calO(1)$ time\footnote{As established in the practical notes of Section~\ref{sec:layer_transition_costs}, the distortion cost $\calJ$ relies on local variables: the capacity of the targeted neurons and the remaining capacity of their specific layer. Because evaluating $\calJ$ does not require querying the global network state, calculating the DR of any individual prune, merge, or block eviction operation is $\calO(1)$.}, the total computational complexity to find the optimal action at any step is bounded by the pairwise merge evaluations at $\calO(L \cdot N^2)$.

\textbf{3. Localized Update:} Once the globally optimal action is identified and executed, the network state must be \textbf{synchronized}. The algorithm decrements the layer's $E_{\text{rem}}$ by the capacity flux removed by the operation, and decrements the neuron count $N$. If the action was a merge, the algorithm also computes the capacity of the newly generated parent neuron $f_p^*$ and calculates the cross-capacities as well as optimal projection vectors) between this new parent and the $N-1$ surviving neighbors in its layer. These updated constants are injected into the cache, guaranteeing that the evaluation of $\calJ$ during subsequent greedy scans remains $\calO(1)$. This limits the network state recalculation to an $\calO(N)$ local update.

\section{Proof-of-Concept Applications}
\label{sec:experiments}

\subsection{Model Compression}

Because its encoding is progressive, any intermediate iteration serves as a valid compressed model, providing users with flexible trade-offs between compression rate and fidelity. Taxonomically, HOPE is a structured method: it eliminates entire neurons rather than zeroing out individual weights. This provides greater practical utility than unstructured pruning, which generates randomly sparse matrices requiring specialized hardware to realize actual computational speedups. We compare HOPE against three structured baselines that eliminate neurons below specific magnitude thresholds: \textit{$L_1$-Norm Input Pruning} \cite{liu2017learning} (scored by incoming weight $L_1$ norms); \textit{$L_1$-Norm Joint Pruning} (scored by concatenated incoming and outgoing $L_1$ norms); and \textit{BN Scale Pruning} \cite{liu2017learning} (using the BN scaling factor $\gamma$ as a proxy for importance).

\begin{minipage}{0.55\textwidth}
\centering
\includegraphics[width=3in]{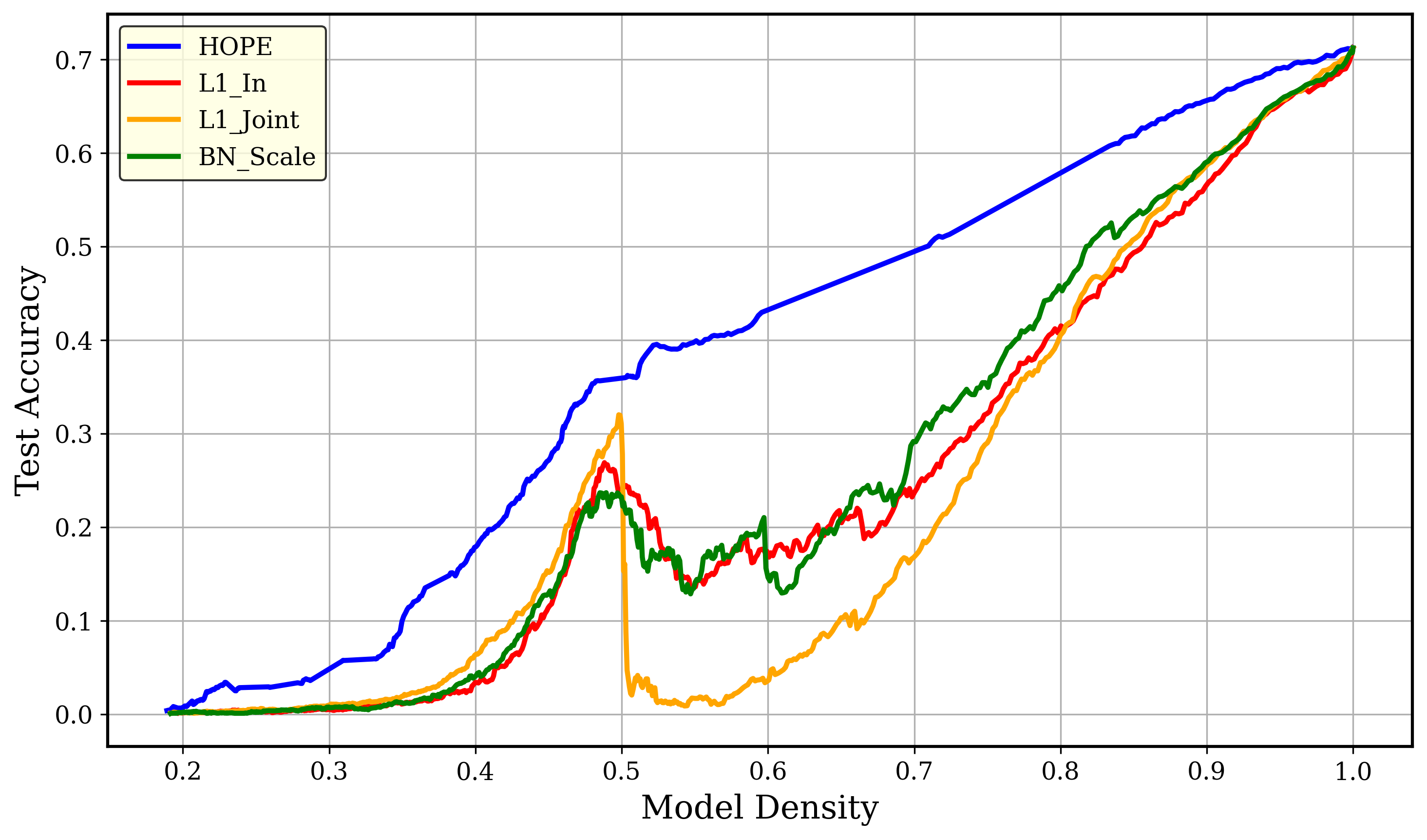}
\end{minipage}\hfill % \hfill pushes the two minipages apart
\begin{minipage}{0.40\textwidth}
Our experiments investigated the relationship between test set accuracy and model density (defined as the ratio of active to initial neurons across the entire network). For our compression assessment, we utilized Keras' publicly available ResNet-50 model checkpoint that is pre-trained on ImageNet. As demonstrated in the plot, HOPE yields models with superior accuracy compared to the baselines.
\end{minipage}

\subsection{Cross-Domain Transfer Learning}
\label{sec:deft_transfer}

HOPE’s capacity evaluation can be used for resolving the stability-plasticity dilemma in transfer learning. By merging redundancies, we can partition the network into a protected core and a plastic periphery and leverage it for parameter-efficient transfer.

\subsubsection{The Stability-Plasticity Dilemma and Current Bottlenecks}
Intelligent systems face a fundamental challenge: adapting to new domains without erasing foundational knowledge, a trade-off known as the Stability-Plasticity dilemma \cite{grossberg1987competitive}. Cognitive neuroscience models this via Complementary Learning Systems \cite{mcclelland1995why, kumaran2016what}, proposing that the brain insulates a stable, domain-specific core from a plastic periphery, as empirically supported by recent neuroimaging \cite{billot2024language, blank2016expanding, casto2026cerebellar}. Specifically, the brain extracts generalizable schemata from noisy state transitions by applying low-dimensional regularization to its representational geometry \cite{kimmel2026neural}.

During standard training, deep neural networks self-organize into a similar, albeit noisy and imperfect, dichotomy \cite{martin2017implicit}. They develop a sparse, load-bearing core surrounded by low-capacity representational slack \cite{frankle2019lottery, elcheairi2026theoretical}. Naively exploiting this emergent segregation by freezing the core fails; because layers remain entangled, peripheral updates shift activation flows, causing representational drift. Early solutions like PackNet \cite{mallya2018packnet} circumvented this using binary masks during inference, but these scale poorly and require \textit{a priori} task identities.

Robust continual learning requires explicit interventions, rather than relying on noisy emergent segregation. During source training, regularization \cite{wen2016learning, scardapane2017group} can amplify the segregation. During downstream adaptation, penalty-based methods like EWC \cite{kirkpatrick2017overcoming}, Synaptic Intelligence \cite{zenke2017continual}, and second-order pruning \cite{lecun1989optimal, hassibi1993optimal, singh2020woodfisher} prevent drift using \textit{locally convex} Fisher Information Matrices (FIMs) or Hessians. However, relying on local approximations makes these algorithms brittle to large domain shifts. Conversely, orthogonal projection methods stop drift by restricting updates to the null-space of previous tasks \cite{jaeger2014conceptors, saha2021gradient, zeng2019continuous, wang2021projecting, yang2025loranull, huggingface2026osf}. However, they remain computationally expensive due to the $\calO(N^3)$ operations and source forward passes needed to compute covariance matrices.

\subsubsection{DEFT}

To address the above challenges, we introduce \textbf{Dispersed Elastic Fine-Tuning (DEFT)}. Aligning with the Information Bottleneck principle \cite{tishby1999information}, DEFT treats learning as the compression of irrelevant slack space via global Hilbert-Schmidt operators, bypassing local loss curvature and empirical data passes. Leveraging the HOPE framework, DEFT analytically computes each neuron's capacity in $\calO(N)$ time to partition the network into a Universal Core and a Peripheral Slack. To prevent representational drift, DEFT severs weight projections from the slack to the core prior to transfer. This ensures the core remains frozen while the slack adapts, eliminating the need for inference-time masking or task identities.

DEFT governs parameter plasticity through a binary elasticity map $E \in \{0,1\}$. While prior methods also regulate plasticity \cite{zhou2024taco}, their reliance on weight sensitivity leaves them vulnerable to the scaling symmetries that HOPE mitigates. We formalize this by evaluating the pruning cost assigned to each neuron $i$ upon its removal during HOPE's progressive encoding process:
\begin{equation}
    \calJ_{\text{prune}}^{(i)} = \frac{N^{(i)} \cdot \|f_i\|_{\calH}}{E_b^{(i)}}
\end{equation}
where $N^{(i)}$ and $E_b^{(i)} \triangleq E_a^{(i)} - \normshort{f_i}_{\calH}$ denote the active neuron count and the remaining layer capacity, at the step neuron $i$ is pruned. To establish a global freezing threshold, we collect the set of all such costs across the entire encoding process, and filter out extinction artifacts resulting from near-zero capacities:
\begin{equation}
    \calC = \left\{ \calJ_{\text{prune}}^{(i)} \;\middle|\; E_b^{(i)} > \epsilon \right\}
\end{equation}
Given a target percentile hyperparameter $P \in [0, 100]$, we compute the threshold $J_P = \text{Percentile}(\calC, P)$ and the supremum $J_{\sup} = \max(\calC)$. For numerical stability against edge capacity regimes, the final locking threshold $J_{\text{lock}}$ is defined:
\begin{equation}
    J_{\text{lock}} = 
    \begin{cases} 
        J_P & \text{if } J_P \ge \epsilon \\
        J_{\sup} & \text{if } J_P < \epsilon \text{ and } J_{\sup} \ge \epsilon \\
        1 & \text{otherwise}
    \end{cases}
\end{equation}
The elasticity of neuron $i$ is formulated as:
\begin{equation}
E_i = \begin{cases} 
1 & \text{if } \calJ_{\text{prune}}^{(i)} < J_{\text{lock}} \\
0 & \text{if } \calJ_{\text{prune}}^{(i)} \ge J_{\text{lock}}
\end{cases}
\end{equation}
Under this formulation, high capacity neurons essential to the source architecture $\calJ_{\text{prune}}^{(i)} \ge J_{\text{lock}}$ are frozen by $E_i=0$, whereas low-capacity slack neurons are granted high plasticity via $E_i=1$.

\textbf{Dynamic Resolution of Redundancy:} Deep networks frequently fragment a single feature across multiple correlated neurons. If we freeze the network based on a static capacity threshold, we incorrectly lock up this redundant volume and deprive the target task of parameter space. \textbf{As illustrated in Figure~\ref{fig:deft_mechanisms}(a)}, DEFT resolves this by compressing these redundant features into a single rank-1 parent neuron. This consolidates the foundational source knowledge while releasing the freed child neurons into the plastic slack $E_i=1$. By transforming redundant copies into uncommitted parameter space, DEFT actively generates capacity for the target task.

\begin{figure}[htbp]
% This forces the text of the subcaptions to be centered
\captionsetup[subfigure]{justification=centering} 
\centering

% -----------------------------------------
% Left Subfigure: Redundancy Merge
% -----------------------------------------
\begin{subfigure}{0.48\textwidth}
    \centering
\begin{tikzpicture}[font=\sffamily, >=Stealth, scale=0.6, transform shape]
        % Core Box (y matched from -3 to 3)
        \path[drop shadow={opacity=0.2}, fill=coreColor!10, draw=black, thick, rounded corners] (-1, -3) rectangle (2, 3);
        \node[above, text=coreColor, font=\bfseries] at (0.5, 3.2) {Frozen Core};
        
        % Slack Box (y matched from -3 to 3)
        \path[drop shadow={opacity=0.2}, fill=slackColor!10, draw=black, thick, rounded corners] (6, -3) rectangle (9, 3);
        \node[above, text=slackColor, font=\bfseries] at (7.5, 3.2) {Plastic Slack};
        
        % Before (x restored to 1, y = 1.5, 0, -1.5)
        \node[circle, draw=black, top color=coreColor!20, bottom color=coreColor!40, drop shadow={opacity=0.2}, inner sep=5pt] (c1) at (1, 1.5) {$f_1$};
        \node[circle, draw=black, top color=coreColor!20, bottom color=coreColor!40, drop shadow={opacity=0.2}, inner sep=5pt] (c2) at (1, 0) {$f_2$};
        \node[circle, draw=black, top color=coreColor!20, bottom color=coreColor!40, drop shadow={opacity=0.2}, inner sep=5pt] (c3) at (1, -1.5) {$f_3$};
        \node[left=0.2cm of c2, align=right] {$M$ \\ Copies};

        % After (Aligned to y = 1.5, 0, -1.5)
        \node[circle, draw=black, top color=coreColor!60, bottom color=coreColor!90, drop shadow={opacity=0.4}, inner sep=8pt, font=\bfseries, text=white] (p) at (7.5, 1.5) {Parent};
        \node[circle, draw=black, fill=pink!40, inner sep=5pt, thick] (e1) at (7.5, 0) {Empty};
        \node[circle, draw=black, fill=pink!40, inner sep=5pt, thick] (e2) at (7.5, -1.5) {Empty};
        
        % Flow (Centered at y=0)
        \draw[->, ultra thick, black!60] (2.2, 0) -- (5.8, 0) node[midway, above, font=\bfseries, text=black] {MERGE} node[midway, below] {Error $\le \sum C_i$};
    \end{tikzpicture}
    \caption{Dynamic Resolution of Redundancy}
\end{subfigure}\hfill
% -----------------------------------------
% Right Subfigure: The Structural Mask
% -----------------------------------------
\begin{subfigure}{0.48\textwidth}
    \centering
    \begin{tikzpicture}[font=\sffamily, >=Stealth, scale=0.6, transform shape]
        % Layer boxes with gradients and shadows (y matched from -3 to 3)
        \path[drop shadow={opacity=0.3, shadow xshift=2pt, shadow yshift=-2pt}, fill=yellow!30, draw=black, thick, rounded corners] (-3, -3) rectangle (-1, 3);
        \node[font=\bfseries, text=black!70] at (-2, 3.2) {Layer $l$};
        
        \path[drop shadow={opacity=0.3, shadow xshift=2pt, shadow yshift=-2pt}, fill=yellow!30, draw=black, thick, rounded corners] (2, -3) rectangle (4, 3);
        \node[font=\bfseries, text=black!70] at (3, 3.2) {Layer $l+1$};

        % Nodes (Aligned to y = 1.5, -1.5)
        \node[circle, draw=black, thick, top color=coreColor!10, bottom color=coreColor!30, drop shadow={opacity=0.2}, minimum size=1cm, font=\bfseries] (L1C) at (-2, 1.5) {Core};
        \node[circle, draw=black, thick, top color=slackColor!10, bottom color=slackColor!30, drop shadow={opacity=0.2}, minimum size=1cm, font=\bfseries] (L1S) at (-2, -1.5) {Slack};
        \node[circle, draw=black, thick, top color=coreColor!10, bottom color=coreColor!30, drop shadow={opacity=0.2}, minimum size=1cm, font=\bfseries] (L2C) at (3, 1.5) {Core};
        \node[circle, draw=black, thick, top color=slackColor!10, bottom color=slackColor!30, drop shadow={opacity=0.2}, minimum size=1cm, font=\bfseries] (L2S) at (3, -1.5) {Slack};

        % Paths
        \draw[->, thick, coreColor] (L1C) -- (L2C) node[midway, above] {$w_{\text{core} \to \text{core}}$};
        \draw[->, thick, slackColor] (L1S) -- (L2S) node[midway, below] {$w_{\text{slack} \to \text{slack}}$};
        \draw[->, thick, gray!60] (L1C) -- (L2S);
        
        % The Cut Path
        \draw[->, thick, danger, dashed] (L1S) -- (L2C) node[midway, sloped, above, font=\bfseries] {Severed at $t=0$};
    \end{tikzpicture}
    \caption{The Structural Mask}
\end{subfigure}

\caption{The algorithmic mechanisms of DEFT. (a) Redundant features within the frozen core are compressed to generate new elastic target capacity. (b) A structural mask permanently severs cross-connections to protect the core from target-driven drift.}
\label{fig:deft_mechanisms}
\end{figure}

\textbf{Consistency at Initialization (The Structural Mask):} To prevent target-driven updates of the plastic slack from corrupting the frozen core, DEFT applies a structural mask at initialization \textbf{(Figure~\ref{fig:deft_mechanisms}b)}. It severs all connections pointing from upstream plastic neurons to downstream frozen core neurons. For a weight tensor connecting an upstream layer (with elasticities $\Emat_{\text{in}}$) to a downstream layer (with elasticities $\Emat_{\text{out}}$), the mask $\Mmat$ enforces:
\begin{equation}
    M_{j,k} = 
    \begin{cases} 
        0 & \text{if } E_{\text{in}, k} > 0 \text{ and } E_{\text{out}, j} = 0 \\
        1 & \text{otherwise}
    \end{cases}
\end{equation}
The initial weights for the target task are thus constrained to $\Wmat_0 = \Mmat \odot \Wmat_{\text{source}}$. 

\textbf{Theoretical Guarantees:} We prove in Appendix~\ref{app:theoretical_justification} that these mechanisms protect the source representation through a layer-to-layer bounding framework. First, the \textbf{Static Initialization Shock Bound} establishes that severing the slack-to-core connections introduces a static error strictly bounded by $\calO(\tau)$. Second, \textbf{Dynamic Decoupling} guarantees the core experiences zero dynamic interference during target fine-tuning. Because the cross-connections are severed at initialization and their weights are frozen from updating, any drifting signal from the learning slack subset is multiplied by zero, nullifying it before it can penetrate the core. Combined with the bounded projection errors of the merging operation, this framework ensures the cumulative degradation of the source task cannot compound exponentially, remaining anchored to an algorithmically verifiable constant.

\textbf{Gradient Scaling:} During optimization, the target gradients are element-wise scaled by the downstream neuron's elasticity $\Emat_{\text{out}}$ (uniformly broadcast across the input channels):
\begin{equation}
    \gVect_t = \Emat_{\text{out}} \odot \nabla_{\Wmat} \calL_{\text{target}}(\Wmat_t)
\end{equation}
where $\gVect_t$ represents the effective gradient passed to the optimizer state\footnote{ Scaling the gradient \textit{before} the optimizer step prevents velocity drift for frozen parameters.}, e.g., SGD with momentum.

\subsubsection{Experimental Setup}

To evaluate the stability-plasticity tradeoff, we rank methods using the \textbf{H-Score} commonly used in continual learning \cite{qiu2024continual, xie2025adadare, islam2025latest}. H-Score, defined as the harmonic mean of source retention and target accuracy, heavily penalizes poor performance in either domain. This ensures a high score is achieved only when a model excels on both tasks.

We evaluate models pre-trained on multi-class classification tasks derived from the CIFAR-100 dataset \cite{krizhevsky2009learning}. Each source task is constructed by randomly sampling 4 superclasses, which yields 20 fine-grained categories (5 per superclass). Building the source task around dense semantic clusters rather than sampling 20 arbitrary classes forces the network to learn hierarchical features to distinguish closely related concepts. We then transfer these specialized models to the full 10-class digit classification task in the SVHN dataset (street-level house numbers) \cite{netzer2011reading}.

As summarized in Table~\ref{tab:method_comparison}, we benchmark DEFT against the following baseline methods:
\begin{itemize}
    \item \textbf{Standard Full FT:} The entire pre-trained backbone is unfrozen, allowing the optimizer to alter representations across all layers. While maximizing target plasticity, it is highly susceptible to catastrophic forgetting.
    \item \textbf{Head-Only FT (Standard FT):} Representing the opposite extreme (linear probing), the pre-trained backbone is completely frozen, acting as a static feature extractor. Only the final linear classification head is optimized.
    \item \textbf{PEFT (BN-Tuning):} Operating on the premise that spatial feature extraction logic should remain invariant \cite{frankle2020training}, this method applies a binary gradient mask: core convolutional and dense kernels are frozen, while plasticity is isolated entirely to the affine BN parameters (scale $\gamma$ and shift $\beta$) and the newly initialized classification head.
    \item \textbf{EWC (Elastic Weight Consolidation):} Allows all parameters to update but applies a quadratic penalty constraining parameters deemed critical to the source task \cite{kirkpatrick2017overcoming}. To accurately lock foundational features, the empirical diagonal Fisher Information Matrix (FIM) is computed strictly over the source training dataset. See Appendix \ref{app:transfer_details} for the per-example FIM derivation and integration protocol.
\end{itemize}

Table~\ref{tab:method_comparison} provides a conceptual summary of these methodologies. Full details regarding network architecture, hyperparameter optimization, and reproducibility protocols are provided in Appendix~\ref{app:transfer_details}.

\begin{table}[ht]
\centering
\caption{Comparative Summary of Transfer Learning Methodologies}
\vspace{0.2cm}
\begin{tabular}{@{}lccccc@{}}
\toprule
\textbf{Property} & \textbf{Full FT} & \textbf{Head-Only} & \textbf{PEFT} & \textbf{EWC} & \textbf{DEFT (Ours)} \\
\midrule
Updates Backbone Features        & \checkmark & - & - & \checkmark & \checkmark \\
Mitigates Catastrophic Forgetting& - & \checkmark & \checkmark & \checkmark & \checkmark \\
Parameter-Specific Modulation    & - & - & - & \checkmark & \checkmark \\
Source Data Independence         & \checkmark & \checkmark & \checkmark & - & \checkmark \\
Structure \& Redundancy Aware    & - & - & - & - & \checkmark \\
\bottomrule
\end{tabular}
\label{tab:method_comparison}
\end{table}

\subsubsection{Results and Discussion} 
The final test set evaluations across $4$ independent cross-domain trials ($20$ discrete CIFAR-100 $\to$ SVHN scenarios) are presented in Table \ref{tab:results}. While Standard Full FT achieves the highest target performance $94.09\%$ by freely overwriting network weights, it completely destroys the source representation, crashing source retention to a baseline low of $7.52\%$. Conversely, Head-Only FT best preserves source knowledge, but its $36.11\%$ target accuracy highlights the severe domain gap; pre-trained features are insufficient to separate SVHN digits linearly. EWC behaves remarkably similarly to Full FT: it learns the target domain ($93.94\%$) but fails to significantly arrest catastrophic forgetting ($6.74\%$). DEFT successfully bridges this stability-plasticity gap. By routing target gradients into elastic neurons characterized by their low capacity, DEFT captures nearly all the plasticity of Standard Full FT ($94.09\%$ vs. $89.79\%$). Simultaneously, by masking the core foundational features, it halts catastrophic forgetting, retaining $52.14\%$ of the source accuracy. Computing the harmonic mean of the two accuracy values leads to the H-Score of method, in which DEFT significantly outperform all the baselines.

\begin{table}[ht]
\centering
\caption{Cross-Domain Transfer Learning Results (4 Trials, 5 tasks each). Metrics represent Test Set accuracy evaluated at the optimal target validation epoch. All metrics are averaged across all tasks and trials.}
\vspace{0.2cm}
\begin{tabular}{@{}lccc@{}}
\toprule
\textbf{Method} & \textbf{Target Acc (SVHN)} & \textbf{Source Retention (CIFAR)} & \textbf{H-Score} \\
\midrule
\textbf{DEFT} (Ours) & $89.79 \pm 0.84$ & $52.14 \pm 5.29$ & $\mathbf{65.82 \pm 3.96}$ \\
\textbf{Head-Only} & $36.11 \pm 2.79$ & $63.13 \pm 4.62$ & $45.79 \pm 2.05$ \\
\textbf{Full FT} & $94.09 \pm 0.28$ & $7.52 \pm 1.63$ & $13.88 \pm 2.84$ \\
\textbf{EWC} & $93.94 \pm 0.22$ & $6.74 \pm 1.74$ & $12.54 \pm 2.99$ \\
\textbf{PEFT} & $81.91 \pm 0.49$ & $5.44 \pm 0.98$ & $10.18 \pm 1.63$ \\
\bottomrule
\end{tabular}
\label{tab:results}
\end{table}

\section{Acknowledgment}
We thank Juno Kim, Vaishnavh Nagarajan, Atish Agarwala, Spencer Frei, Lisa Schut, Alan Malek, Gil shamir, and Bruno Mlodozeniec of Google DeepMind for their helpful comments and discussions.

\newpage

\bibliography{curr_learn}
\bibliographystyle{apalike}

\clearpage
\appendix

\section*{Appendix Table of Contents}
\startcontents[appendices]
\printcontents[appendices]{}{1}{\setcounter{tocdepth}{2}}
\newpage

\section{Hilbert Spaces}
\label{sec:hilbert}
This section provides a quick introduction to Hilbert spaces, focusing on the concepts of inner products, completeness, and total sets. We bridge abstract functional analysis with the specific architectural choices of the HOPE framework, demonstrating how the $L_2$ embedding of neural functions creates a unique and complete geometric environment for optimization over network structures.

\subsection{Introduction: Why a Hilbert Space?}

We treat compression operations like pruning and merging as \textit{projections}. When we replace two neurons with one, we are attempting to find a single element that "represents" a multi-dimensional subspace. To perform this operation rigorously, we need three things:
\begin{enumerate}
    \item A \textbf{Space} that contains all possible neural identities.
    \item A \textbf{Metric} (Inner Product) to measure "closeness" and "alignment."
    \item \textbf{Completeness} to ensure that our optimizations actually have solutions.
\end{enumerate}
A Hilbert space $\calH$ provides these three pillars.

\subsection{The Inner Product: The Ruler of Geometry}

The defining feature of a Hilbert space is the \textbf{inner product}. While a vector space only lets us add and scale elements, an inner product space lets us talk about \textit{angles} and \textit{lengths}.

\begin{definition}[Inner Product Axioms]
An inner product on a vector space $V$ over $\R$ is a mapping $\inner{\cdot}{\cdot} : V \times V \to \R$ satisfying for all $f, g, h \in V$ and $a \in \R$:
\begin{itemize}
    \item \textbf{Symmetry:} $\inner{f}{g} = \inner{g}{f}$.
    \item \textbf{Linearity:} $\inner{af + g}{h} = a\inner{f}{h} + \inner{g}{h}$.
    \item \textbf{Positive Definiteness:} $\inner{f}{f} \geq 0$, and $\inner{f}{f} = 0 \iff f = 0$.
\end{itemize}
\end{definition}

\subsubsection{Connection to HOPE: The Expectation Metric}

In the HOPE framework, we operate on functions $f: \calX \to \R^c$. We define our inner product relative to a surrogate distribution $P_{\calX}$:
\begin{equation}
    \label{eq:hope_inner}
    \inner{f_i}{f_j}_{\calH} \triangleq \E_{\xVect \sim P_{\calX}} \left[ f_i(\xVect)^T f_j(\xVect) \right]
\end{equation}

\begin{proposition}[Validity of the HOPE Metric]
The functional defined in (\ref{eq:hope_inner}) satisfies the inner product axioms.
\end{proposition}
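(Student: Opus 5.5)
We verify the three axioms directly from the definition (\ref{eq:hope_inner}), working on $\calH = L_2(\calX, P_{\calX}; \R^c)$. As usual, elements are equivalence classes of functions that agree $P_{\calX}$-almost everywhere. The first step is well-definedness. For $f_i, f_j \in \calH$, the pointwise Cauchy--Schwarz inequality in $\R^c$ gives $|f_i(\xVect)^T f_j(\xVect)| \le \normshort{f_i(\xVect)}_2 \normshort{f_j(\xVect)}_2$. AM--GM then bounds this by $\tfrac12(\normshort{f_i(\xVect)}_2^2 + \normshort{f_j(\xVect)}_2^2)$, which is integrable by square-integrability. So the expectation is finite, and the map lands in $\R$.

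Symmetry is immediate, since $f_i(\xVect)^T f_j(\xVect) = f_j(\xVect)^T f_i(\xVect)$ pointwise and we then take expectations. Linearity in the first argument follows from pointwise bilinearity of the Euclidean dot product, $(a f + g)(\xVect)^T h(\xVect) = a f(\xVect)^T h(\xVect) + g(\xVect)^T h(\xVect)$, combined with linearity of the expectation. Both terms are integrable by the well-definedness step, so the expectation may be split.

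Positive definiteness is the only step with content. Nonnegativity holds because $\inner{f}{f}_{\calH} = \E[\normshort{f(\xVect)}_2^2]$ is the expectation of a nonnegative function. For the converse, $\E[\normshort{f(\xVect)}_2^2] = 0$ forces $\normshort{f(\xVect)}_2 = 0$ for $P_{\calX}$-almost every $\xVect$, by the standard fact that a nonnegative function with zero integral vanishes almost everywhere. Hence $f = 0$ as an element of $\calH$. This is the main subtlety: on genuine functions (rather than a.e.-classes) definiteness would fail. For the Gaussian surrogate $P_{\calX} = \calN(\hat{\muVect}_x, \hat{\Sigma}_x)$ with $\hat{\Sigma}_x$ positive definite, the density is strictly positive. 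In that case, for the continuous neural functions $f_i = g_i \otimes \wVect_{\text{out},i}$, almost-everywhere vanishing upgrades to vanishing everywhere, so the identification costs nothing for realizable neurons. We would note this remark explicitly.
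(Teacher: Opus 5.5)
Your proof is correct and follows the same route as the paper. Symmetry and linearity come from the pointwise properties of the Euclidean dot product together with linearity of the expectation, and positive definiteness from $\inner{f}{f}_{\calH} = \E[\normshort{f(\xVect)}_2^2] \ge 0$, with equality forcing $f = 0$ only in the $P_{\calX}$-almost-everywhere sense. Two of your additions go beyond the paper's brief proof and are sound: the finiteness check via pointwise Cauchy--Schwarz and AM--GM, and the remark that continuous (PH-1) neurons vanishing almost everywhere under a full-support Gaussian must vanish identically.
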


\begin{proof}
Linearity and symmetry follow directly from the linearity of the expectation operator $\E$ and the symmetry of the Euclidean dot product. Positive definiteness is guaranteed because $\inner{f}{f} = \E[\norm{f(\xVect)}^2] \geq 0$. The definiteness $\inner{f}{f}=0 \implies f=0$ is satisfied in the $L_2$ sense (i.e., $f$ is zero "almost everywhere").
\end{proof}

\subsection{The Ambient Space: $L_2(\calX, P_{\calX})$}

A Hilbert space is more than just an inner product space; it must be \textbf{complete}. In finite dimensions (like $\R^n$), every inner product space is complete. In function spaces, this is not true.

\begin{definition}[Completeness]
A space is complete if every Cauchy sequence $\{f_n\}$ (a sequence where elements get arbitrarily close to each other) converges to an element $f$ that is also \textbf{inside} the space.
\end{definition}

If we worked only with continuous functions, the space would not be complete. For example, a sequence of continuous functions can converge to a step function (which is discontinuous). This would be a disaster for compression, as our "best parent" might not even exist in our space.

\subsubsection{The $L_2$ Embedding}
To avoid this, HOPE embeds neurons into $L_2(\calX, P_{\calX})$, the space of square-integrable functions. 
\begin{itemize}
    \item \textbf{Energy Bound:} Every function in $\calH$ has finite energy: $\E[\norm{f}^2] < \infty$.
    \item \textbf{Closure:} By definition, $L_2$ is complete. Every optimization we perform (minimizing the cost functional of compression operations) is guaranteed to have a valid result within the ambient space.
\end{itemize}

\subsection{The Tensor Product: Splitting Continuous and Discrete Spaces}

While we defined our ambient space as vector-valued functions $\calH = L_2(\calX, P_{\calX}; \R^c)$, computing the inner product directly in this monolithic space obscures the internal structure of a neural network. HOPE simplifies this by utilizing a \textbf{tensor product space}.

A neuron's operation naturally splits into two phases:
\begin{enumerate}
    \item \textbf{The Continuous Input Landscape:} The effective input weights (which absorb BN statistics) and the activation function create a continuous scalar landscape $g_i(\xVect) = \Psi((\wVect_{\text{in},i}^{\text{eff}})^T \xVect + b_i)$. We embed this function into a scalar Hilbert space $\calH_{\text{in}} \triangleq L_2(\calX, P_{\calX}; \R)$.
    \item \textbf{The Discrete Output:} This scalar activation is broadcast to the next layer along a finite-dimensional output weight vector $\wVect_{\text{out}, i}$. We define this output space as $\calH_{\text{out}} \triangleq \R^c$.
\end{enumerate}

By taking the tensor product of these two spaces, we construct the full ambient space mapping: $\calH \cong \calH_{\text{in}} \otimes \calH_{\text{out}}$. Under this formulation, each individual neuron is modeled as a \textbf{rank-1 Hilbert-Schmidt operator}, represented by the outer product of its input function and output vector:
\begin{equation}
    f_i \triangleq g_i \otimes \wVect_{\text{out}, i}
\end{equation}

\subsubsection{Factoring the Metric}
This tensor structure is what makes HOPE computationally tractable. The inner product of two rank-1 tensors elegantly factors into the product of their individual space inner products:
\begin{equation}
    \inner{f_i}{f_j}_{\calH} = \inner{g_i \otimes \wVect_{\text{out},i}}{g_j \otimes \wVect_{\text{out},j}}_{\calH} = \inner{g_i}{g_j}_{\calH_{\text{in}}} \cdot \inner{\wVect_{\text{out},i}}{\wVect_{\text{out},j}}_{\calH_{\text{out}}}
\end{equation}

Because $\inner{g_i}{g_j}_{\calH_{\text{in}}}$ is the expected alignment of their non-linear activations over the distribution $P_{\calX}$, we define this as the kernel $K(i,j)$. This allows us to separate the continuous-functional evaluation from the discrete parameters, reducing the full Hilbert space inner product to:
\begin{equation}
    \inner{f_i}{f_j}_{\calH} = K(i,j) \cdot \inner{\wVect_{\text{out},i}}{\wVect_{\text{out},j}}_{\R^c}
\end{equation}

\subsection{Uniqueness and the Total Set Property}

One might ask: "We defined the inner product for any functions $f, g$. But in HOPE, we only ever calculate it for single ReLU neurons. Is that enough to define the whole space?"

This is the most critical part of the theory.

\begin{definition}[Total Set]
A set $S \subset \calH$ is a \textbf{total set} if the set of all finite linear combinations of elements in $S$ is \textbf{dense} in $\calH$.
\end{definition}

If $S$ is total, then knowing the inner product for every pair in $S$ uniquely determines the inner product for the \textbf{entire} Hilbert space. 

\subsubsection{The Universal Approximation Theorem}

In the context of HOPE, our "dictionary" of functions is the set of single neurons $\calN$:
\begin{equation}
    \calN = \{ f(\xVect) = \wVect_{out} \Psi(\tilde{\wVect}_{\text{in}} \cdot \tilde{\xVect}) \}
\end{equation}

\begin{theorem}[HOPE Uniqueness]
The set $\calN$ is a total set in $L_2(\calX, P_{\calX})$.
\end{theorem}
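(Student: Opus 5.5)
The plan is to prove totality through the orthogonal-complement characterization. In a Hilbert space, a set $S$ is total if and only if its orthogonal complement is trivial: if $\inner{h}{f}=0$ for every $f\in S$, then $h=0$. We first reduce to the scalar space $\calH_{\text{in}}=L_2(\calX,P_{\calX};\R)$. The space $L_2(\calX,P_{\calX};\R^c)$ is the orthogonal direct sum of $c$ copies of $\calH_{\text{in}}$, one per coordinate. Choosing $\wVect_{\text{out}}=\eVect_m$ then shows that totality of $\calN$ in the vector-valued space follows from totality, in $\calH_{\text{in}}$, of the ridge functions
\[
\calN_0=\{\xVect\mapsto\Psi(\wVect^T\xVect+b)\,:\,\wVect\in\R^n,\,b\in\R\}.
\]
So the task is: given $h\in L_2(P_{\calX})$ with $\E_{P_{\calX}}[h(\xVect)\Psi(\wVect^T\xVect+b)]=0$ for all $(\wVect,b)$, show $h=0$ almost everywhere. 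We assume $\Psi$ is not linear, as for \ReLU or Leaky ReLU with slope $\neq 1$. If $\Psi$ were linear, the span would be only the affine functions and the statement would be false, so this hypothesis must be made explicit.

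The key step is to obtain indicator functions of half-spaces. For a PH-1 nonlinear $\Psi$ we have $\Psi(z)=a z+(1-a)\,\ReLU(z)$ for some slope $a\neq1$ on the negative side. Subtracting an affine term (itself in the span, since $z=\Psi(z)-\Psi(-z)$ up to the constant factor $1+a$) shows that $\ReLU(\wVect^T\xVect+b)$ lies in the closed span. Difference quotients then give the ramp
\[
\tfrac1\delta\left[\ReLU(\wVect^T\xVect+b+\delta)-\ReLU(\wVect^T\xVect+b)\right],
\]
which is bounded by $1$ and converges pointwise to $\mathbf{1}\{\wVect^T\xVect+b>0\}$, except on the hyperplane $\{\wVect^T\xVect+b=0\}$, which is $P_{\calX}$-null because $P_{\calX}$ is Gaussian and nondegenerate. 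By dominated convergence in $L_2(P_{\calX})$, every half-space indicator lies in the closed span. Hence $h\perp\mathbf{1}\{\wVect^T\xVect>t\}$ for all $\wVect$ and $t$.

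It remains to show that such an $h$ vanishes. Let $\nu(d\xVect)=h(\xVect)\,P_{\calX}(d\xVect)$. This is a finite signed measure because $h\in L_2\subset L_1(P_{\calX})$. Orthogonality to all half-spaces says that every one-dimensional pushforward $\nu\circ(\xVect\mapsto\wVect^T\xVect)^{-1}$ has vanishing distribution function, so that pushforward is the zero measure. Its Fourier transform at $1$ is therefore zero, which gives $\hat\nu(\wVect)=\int e^{i\wVect^T\xVect}\,d\nu=0$ for every $\wVect$. By Cramér--Wold, or uniqueness of Fourier transforms of finite measures, $\nu=0$, so $h=0$ $P_{\calX}$-a.e. and $\calN$ is total.

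The main obstacle is that $\calX=\R^n$ is unbounded. The classical universal approximation results of Cybenko and Hornik work on compact sets, which is why we avoid them and use the Fourier/Cramér--Wold argument instead, since it works directly for finite measures on all of $\R^n$. Two points still need care. First, the nondegeneracy of the surrogate covariance $\hat\Sigma_x$ is needed so that hyperplanes are null. If $\hat\Sigma_x$ is singular, we restrict to its support affine subspace, where the same argument applies. Second, the ReLU ramps grow linearly, but they lie in $L_2(P_{\calX})$ because Gaussian moments are finite.
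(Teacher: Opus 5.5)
Your proof is correct, and it takes a genuinely different route from the paper. The paper's argument is a one-line appeal to the Universal Approximation Theorem (linear combinations of ReLU neurons are dense in $L_2$), followed by a remark that totality lets the kernel $K$ fix the inner product on all of $\calH$. It does not say which version of the theorem applies.

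You instead prove totality by duality. You reduce the $\R^c$-valued space to its scalar coordinates via $\wVect_{\text{out}}=\eVect_m$. You obtain half-space indicators as bounded $L_2$-limits of difference quotients. You then kill any $h\perp\calN$ by Fourier uniqueness for the one-dimensional pushforwards of the finite signed measure $h\,dP_{\calX}$. This is essentially Cybenko's discriminatory lemma transplanted to $L_2(P_{\calX})$, with Hilbert-space orthogonality replacing Hahn--Banach and the Riesz representation.

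Your route gives a self-contained proof valid for any probability measure on $\R^n$ with finite second moments. It also surfaces two points the paper leaves implicit. The first is the reduction from vector-valued to scalar functions. The second is the need for $\Psi$ to be nonlinear: the paper's PH-1 class explicitly includes linear maps, for which $\calN$ spans only affine functions and the statement is false. The paper's route buys brevity, and it is easier to make rigorous than your closing remark suggests. Hornik's 1991 theorem gives density in $L^p(\mu)$ for every finite measure $\mu$ on $\R^n$ whenever the activation is bounded and nonconstant. The map $z\mapsto\ReLU(z)-\ReLU(z-1)$ is such an activation, and its ridge functions lie in the span of $\calN$.

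Your reduction step needs two small repairs.

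First, writing $\Psi(z)=az+(1-a)\ReLU(z)$ presumes $\Psi(1)=1$. Recovering the linear term from $\Psi(z)-\Psi(-z)=(1+a)z$ also fails at $a=-1$, i.e.\ for $\Psi(z)=|z|$, which is PH-1 and nonlinear. Both issues disappear if you differentiate $\Psi$ directly. Set $C_+=\Psi(1)$ and $C_-=-\Psi(-1)$. The quotient $\frac{1}{\delta}\bigl[\Psi(u+\delta)-\Psi(u)\bigr]$ is bounded by $\max(|C_+|,|C_-|)$, and as $\delta\to0^+$ it converges at every point to $C_-+(C_+-C_-)\mathbf{1}\{u\ge 0\}$. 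Constants lie in $\calN$ (take zero input weights and a bias with $\Psi(b)\neq 0$). Hence half-space indicators are in the closed span exactly when $C_+\neq C_-$.

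Second, the limit is $\mathbf{1}\{u\ge 0\}$ at every point, hyperplane included. So closed half-spaces together with the constant function already feed your distribution-function argument. Nondegeneracy of $\hat{\Sigma}_x$ is never used, and the singular-covariance caveat can be dropped.
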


\begin{proof}[Discussion]
By the Universal Approximation Theorem, linear combinations of ReLU neurons can approximate any square-integrable function to arbitrary precision. In the language of Hilbert spaces, this means $span(\calN)$ is dense in $\calH$. 

Because $\calN$ is total, the definition of the \textbf{function correlation kernel} $K(i, j) = \E[\Psi(y_i)\Psi(y_j)]$ is sufficient to uniquely characterize the metric of the entire ambient space. We do not need a "separate" definition for the inner product of sums of neurons; it is uniquely forced upon the space by the behavior of the single neurons.
\end{proof}

\subsection{Neuron Synthesis by Projection}

Finally, we see why this matters. The generation process relies on finding an \textbf{Optimal Subspace Projection}.
When we want to merge neurons $i$ and $j$, their joint contribution $[f_i, f_j]$ spans a 2-dimensional tensor subspace. Because a physical parent neuron must produce a single unified output, it is modeled as a constrained rank-1 approximation $[f_p, f_p]$. 

Rather than a simple orthogonal projection of a sum, HOPE finds the optimal parent $f_p^*$ by minimizing the expected Frobenius projection error under the Hilbert-Schmidt norm, scaled by the remaining capacity of the layer:
\begin{equation}
    f_p^* = \argmin_{f_p \in \calN} \frac{\sqrt{\norm{f_i - f_p}_{\calH}^2 + \norm{f_j - f_p}_{\calH}^2}}{E_a - \norm{f_i}_{\calH} - \norm{f_j}_{\calH} + \norm{f_p}_{\calH}}
\end{equation}
In a Hilbert space, the numerator translates to a rigorous geometric projection distance. Without the Hilbert space structure (inner products and completeness), the concepts of "closest operator" and "functional alignment" would have no meaning.

\subsection{Summary}
A summary of the key concepts and their utility in HOPE are provided in Table~\ref{tab:hilbert_hope}.

\begin{table}[h]
\centering
\begin{tabular}{ll}
\toprule
\textbf{Hilbert Concept} & \textbf{HOPE Implementation} \\
\midrule
Inner Product $\inner{f}{g}$ & Expected dot product over $P_{\calX}$ \\
Norm $\norm{f}$ & Functional Capacity (Square root of Signal Energy) \\
Ambient Space $\calH$ & Tensor product space $\calH_{\text{in}} \otimes \calH_{\text{out}}$ \\
Operators & Neurons modeled as rank-1 Hilbert-Schmidt operators \\
Basis / Total Set $\calN$ & The manifold of realizable single neurons \\
Completeness & Guarantees that the "best parent" is well-defined \\
\bottomrule
\end{tabular}
\caption{Mapping of Abstract Hilbert Concepts to the HOPE Framework.}
\label{tab:hilbert_hope}
\end{table}

\section{Implementation Notes}

\subsection{Adaptation for Convolutional Layers}
\label{sec:conv_nets}
Our formalism extends to convolutional networks by defining a ``neuron'' as a filter in layer $A$ producing feature map $i$. The joint parameter vector $\wVect_i = [\wVect_n^\top, \wVect_c^\top]^\top$ is constructed by vectorizing the filter's respective input and output kernels. The input space vector $\wVect_n$ encapsulates the filter's local receptive field. Assuming layer $A$ possesses a weight tensor $\Kmat_A \in \R^{h_A \times w_A \times C_{\text{in}} \times C_{\text{out}}}$, the vector $\wVect_n$ corresponding to filter $i$ is the flattened spatial slice $\Kmat_A[:, :, :, i] \in \R^n$, where $n = h_A \times w_A \times C_{\text{in}}$. Conversely, the output space vector $\wVect_c$ captures the filter's downstream influence on the subsequent layer $B$. Because the activation map of filter $i$ acts as the $i$-th input channel to layer $B$, $\wVect_c$ is formed by extracting and flattening the corresponding slice of the downstream tensor $\Kmat_B[:, :, i, :] \in \R^c$, where $c = h_B \times w_B \times C_{\text{out}, B}$.

To allow the surrogate distribution to serve as a location-invariant prior without requiring intractable coordinate-specific covariance modeling, we assume spatial stationarity (ergodicity) across the feature map and construct it using the globally averaged BN variance $\sigma_i^2$. While boundary zero-padding breaks local stationarity, the high-dimensional spatial aggregation of modern BN buffers absorbs these edge-effects into a into a single global average.

\subsection{Deriving the Parameter Footprint $\Delta P$}
\label{sec:delta_P}

As established in Section~\ref{sec:rate_distortion}, the parameter footprint $\Delta P$ quantifies the number of physical parameters removed by a compression action. To preserve Dantzig's Axiom of Item Independence, this criterion uses a static surrogate, $\Delta P^{\text{init}}$, evaluated on the initial network state and decoupled from the dynamically changing network. For any operation, the footprint tracks the total removed parameters:
\begin{equation}
    \Delta P^{\text{init}} = \|W_{\text{in}}\|_0 + \|W_{\text{out}}\|_0 + \|\theta_{\text{aux}}\|_0
\end{equation}
where $W_{\text{in}}$, $W_{\text{out}}$, and $\thetaVect_{\text{aux}}$ denote the input weights, output weights, and auxiliary parameters (e.g., BN parameters)  respectively, and $\|\cdot\|_0$ counts the number of non-zero elements.

\subsubsection{Granular Operations}
For pruning or merging, $\Delta P^{\text{init}}$ comprises the incoming weights, outgoing weights, and BN parameters of a single target neuron or filter. For example:
\begin{itemize}
    \item \textbf{Transformer MLP:} Removing a neuron in an MLP with a $d_{\text{model}} \rightarrow 4d_{\text{model}}$ expansion yields $\Delta P^{\text{init}} = 2d_{\text{model}} + 1$.
    \item \textbf{Convolutional Networks:} For a filter with spatial dimensions $H \times W$, the footprint scales with the receptive field and projective kernel: $\Delta P^{\text{init}} = H \cdot W \cdot C_{\text{in}} + C_{\text{out}} + 4$, where $4$ accounts for the BN parameters $\gamma, \beta, \mu, \sigma^2$.
\end{itemize}

\begin{insight}{Example: Architectural Symmetry in ResNet-50}
Evaluating $\Delta P^{\text{init}}$ locally reveals symmetries across different layer types. Consider a ResNet-50 bottleneck block with a base channel width $N$ and a $4\times$ expansion ratio:
\begin{itemize}
    \item \textbf{1$\times$1 Squeeze Layer:} Removing one filter deletes $4N$ input connections and $9N$ output connections to the subsequent $3\times3$ layer. Yield: $\Delta P^{\text{init}} = 13N$.
    \item \textbf{3$\times$3 Spatial Layer:} Removing one filter deletes $9N$ input connections and $4N$ output connections to the subsequent $1\times1$ expansion layer. Yield: $\Delta P^{\text{init}} = 13N$.
\end{itemize}
Despite differing spatial tensor shapes, the parameter yield per filter in both positions evaluates to $13N$. This symmetry allows the cost $\calJ$ to arbitrate compression across heterogeneous layers without being biased by raw parameter array shapes.
\end{insight}

\subsubsection{Macro Operations: Block Eviction}
Unlike granular operations that yield incremental savings, block eviction removes entire layers simultaneously. For a residual block with internal convolutional layers $W_1$ and $W_2$, and a terminal expansion layer $W_3$, the static parameter yield expands to:
\begin{equation}
    \Delta P_{\text{evict}}^{\text{init}} = \|W_1\|_0 + \|W_2\|_0 + \|\theta_{\text{aux}}\|_0
\end{equation}
where $\|W_1\|_0$ and $\|W_2\|_0$ denote the parameters of the internal layers, and $\|\theta_{\text{aux}}\|_0$ counts the BN parameters $\mu, \sigma^2, \gamma, \beta$ across the entire block, including those of $W_3$. Note that the weight matrix $W_3 \in \R^{d_{\text{amb}} \times d_{\text{bottleneck}}}$ is \textit{not} explicitly included in this sum. Because removing a filter in $W_2$ removes its corresponding outgoing connections in $W_3$, the memory footprint of $W_3$ is naturally accounted for when evaluating $W_2$. Explicitly adding the ambient dimensions $\|W_3\|_0 = d_{\text{amb}} \times d_{\text{bottleneck}}$ would count the same parameters twice \textit{within a single macro action}. This \textit{intra-action double-counting} would inflate the parameter footprint of block eviction and artificially lower its Distortion Rate (DR), giving it an unfair advantage over granular operations.

\subsection{Cross-Action Overlap and Uniform Scaling}
\label{sec:appendix_double_counting}

While we avoid counting parameters twice within a \textit{single} action (as seen with $W_3$), evaluating the entire decision space using a static footprint $\Delta P^{\text{init}}$ introduces an overlap \textit{between different competing actions}. Let $i$ and $j$ be targeted neurons in adjacent layers $l$ and $l+1$. To preserve Dantzig's Axiom of Item Independence, an action targeting layer $l$ must not alter the state variables used to evaluate layer $l+1$. Consequently, their shared weight $W^{(l)}_{j,i}$ is counted independently in both evaluations: $W^{(l)}_{j,i} \in \Delta P_i^{\text{init}}$ and $W^{(l)}_{j,i} \in \Delta P_j^{\text{init}}$.

For any sequence of actions $\calS$, this \textit{cross-action overlap} overestimates the true number of parameters recovered $\sum_{k \in \calS} \Delta P_k^{\text{init}} > \Delta P_{\calS}^{\text{live}}$. This overestimation artificially lowers the computed DR compared to the live network state:
\begin{equation}
\text{DR}_k = \frac{\calJ_k}{\Delta P_k^{\text{init}}} < \frac{\calJ_k}{\Delta P_k^{\text{live}}}
\end{equation}

To correct this approximation without violating item independence, HOPE relies on uniform scaling. Because this cross-action overlap applies systematically across the entire action space $\calA$ (all granular and macro candidates interact with their neighbors), it acts as a uniform scaling factor $\alpha \ge 1$ such that $\Delta P_k^{\text{init}} \approx \alpha \Delta P_k^{\text{live}}$.

Because the distortion cost $\calJ$ is evaluated independently of the parameter counts, and $\alpha$ applies uniformly, the relative ordering of the distortion rates is preserved:
\begin{equation}
\frac{\calJ_a}{\Delta P_a^{\text{init}}} < \frac{\calJ_b}{\Delta P_b^{\text{init}}} \iff \frac{\calJ_a}{\Delta P_a^{\text{live}}} < \frac{\calJ_b}{\Delta P_b^{\text{live}}}
\end{equation}
Since the greedy continuous knapsack solver selects $\arg\min_{k \in \calA} \text{DR}_k$, this uniform scaling ensures that the theoretical fairness of the optimal action selection remains intact.

\subsection{Computational Complexity and the Decoupled Cache}
\label{sec:decoupled_cache}

Evaluating the transition cost $\calJ$ efficiently poses a computational challenge because it is inversely coupled to the monotonically decreasing layer capacity $E_{\text{rem}}$. In a layer with $N$ neurons, recomputing the non-linear weight-space geometry (e.g., Rank-2 Singular Value Decompositions) for all $\calO(N^2)$ candidate pairs every time $E_{\text{rem}}$ decreases requires $\calO(N^3)$ execution time. 

Conversely, caching $\calJ$ values in a standard priority queue via submodular approximations (e.g., Minoux's Lazy Update) leads to stale estimations. Because the transition cost goes to $\infty$ as capacity approaches $0$, small capacity reductions cause the true costs to spike. A delayed queue would underestimate these costs and cause the optimizer to select sub-optimal actions and potentially collapse the layer. We resolve this bottleneck using an $\calO(1)$ \textit{Decoupled Cache}.

\paragraph{Decoupling.}
The computationally expensive optimal projections $\uVect^*, \vVect^*$ and the cost components ($a = \|f_i\|_{\calH}^2 + \|f_j\|_{\calH}^2$ and $b = \inner{\psi^*}{f_i+f_j}_{\calH}$, derived in Section~\ref{sec:parent_params}) depend only on weights and static BN parameters associated with that layer. Since our method does not rely on any cross-layer criterion (such as Fisher Information), we guarantee that $\uVect^*, \vVect^*, a, b$ remain independent of both the downstream architecture and the dynamic capacity $E_{\text{rem}}$. 

The only variable dependent on $E_{\text{rem}}$ is the optimal scalar magnitude $s^*$. At initialization, the framework computes $\uVect^*$ and $\vVect^*$ to evaluate and cache only the scalar constants $a$ and $b$. To prevent $\calO(N^2)$ memory exhaustion, the high-dimensional vectors $\uVect^*,\vVect^*$ are then discarded. During the greedy search, evaluating the cost of any action requires querying the cached constants and the live remaining capacity $E_{\text{rem}} = \max(E_a - \|f_i\|_{\calH} - \|f_j\|_{\calH}, \epsilon)$ to compute $s^*$ and $\calJ$ analytically in $\calO(1)$ time:
\begin{equation}
    s^* = \frac{a + b E_{\text{rem}}}{2 E_{\text{rem}} + b}
\end{equation}

\paragraph{Index Determinism and JIT Generation.}
We sort the active neurons to evaluate undirected pairs only where $i < j$. By enforcing this index determinism, the algorithm guarantees a 100\% cache hit rate and makes it viable to abandon priority queues entirely. At every step, the algorithm executes an $\calO(1)$ global scan over all remaining pairs using the live layer capacity, maintaining mathematically perfect freshness. Once the global minimum is selected, the framework executes a Just-In-Time (JIT) generation, re-evaluating the fast Rank-2 SVD only for the single winning pair ($\sim 1$ ms) to retrieve its $\uVect^*,\vVect^*$ for physical deployment.

\paragraph{Computing the BN Variance.}
Deploying the JIT-generated parent requires setting its BN variance $\gamma_p^2$. Under the surrogate distribution $P_{\calX}$, $\gamma_p^2$ depends only on the warped correlation $\hat{\rho}_{ij}$. A naive approach might substitute the simple correlation of the raw weights $\rho_{\text{raw}}$ into the variance equation. However, this incorrectly mixes parameter-space metrics with signal-space statistics. Doing so produces incorrect BN moving averages, which miscalibrates the network during the forward pass. The framework avoids this by computing the physical variance using $\hat{\rho}_{ij}$ directly retrieved from the cache. This ensures the deployed parameters maintain the correct statistical behavior without requiring empirical forward passes.

\subsection{Numerical Stability and BN Parameters}
\label{sec:numerical_stability_BN}

Section~\ref{sec:bn_invert} derives the mapping from the parent neuron's effective parameters ($\wVect_{\text{in},p}^{\text{eff}}$ and $b_p$) to its physical network variables: the raw input weights $\wVect_{\text{in},p}^{\text{raw}}$ and the BN parameters $\gamma_p, \beta_p, \mu_p, \sigma_p^2$. A naive assignment of these physical variables alters the pre-activation signal and breaks the critical equivalence $y_p = (\wVect_{\text{in},p}^{\text{eff}})^T \xVect + b_p$. This mismatch miscalibrates the network and immediately degrades accuracy on the first forward pass.

To prevent this, HOPE enforces the exact analytical mapping. However, deploying these formulas in practice requires safely handling numerical bounds, such as preventing negative variance $\sigma_p^2 < 0$ and avoiding \texttt{NaN} in the BN denominator $\sqrt{\sigma_p^2 + \epsilon}$. This section details how to resolve these numerical edge cases and ensure the deployed model remains mathematically well-defined and numerically stable.

\subsubsection{The $\epsilon$ Boundary Regime and Variance Clamping}
To resolve the under-constrained BN system, the framework fixes the physical variance as $\sigma_p^2 = \max(0, \gamma_p^2 - \epsilon)$, where $\epsilon$ is a small stability constant (e.g., $10^{-5}$). For active features $\gamma_p^2 \ge \epsilon$, this evaluates smoothly. This allows the denominator of the BN transformation to simplify to $|\gamma_p|$.

However, as the progressive encoder shrinks the network, the derived parent variance $\gamma_p^2$ may occasionally fall below the numerical floor $\epsilon$. Without the $\max(0, \cdot)$ operator, enforcing $\sigma_p^2 = \gamma_p^2 - \epsilon$ would result in a negative variance, which causes \texttt{NaN} during inference. The clamping operator safely bounds the physical variance at $\sigma_p^2 = 0$. 

When $\sigma_p^2$ is clamped to $0$, the BN denominator $\sqrt{\sigma_p^2 + \epsilon}$ evaluates to $\sqrt{\epsilon}$. Consequently, the scale factor becomes $\gamma_p / \sqrt{\epsilon}$. Substituting this into the effective bias equation yields:
\begin{equation}
    b_p = \beta_p - \frac{\gamma_p}{\sqrt{\epsilon}} \mu_p \quad \implies \quad \mu_p = \frac{\sqrt{\epsilon}}{\gamma_p} (\beta_p - b_p)
\end{equation}
As the parent neuron's scale $\gamma_p$ approaches $0$, dividing by $\gamma_p$ causes $\mu_p \rightarrow \infty$. To prevent this, the framework bypasses this calculation for inactive features and sets $\wVect_{\text{in},p}^{\text{raw}} = \zeroVect$.

\subsubsection{Running Variance Offset During Fine-Tuning}
Setting the initial BN running variance to $\sigma_p^2 = \gamma_p^2 - \epsilon$ ensures the network's output is perfectly preserved during inference. However, when the model resumes training for fine-tuning, the actual batch variance computed during the forward pass evaluates to $\sigma_{\text{batch}}^2 = \gamma_p^2$. This introduces a minor mismatch between the empirical batch variance and the stored running variance:
\begin{equation}
    \sigma_{\text{batch}}^2 - \sigma_p^2 = \gamma_p^2 - (\gamma_p^2 - \epsilon) = \epsilon
\end{equation}
Because this discrepancy evaluates to the numerical constant $\epsilon$ (typically $10^{-5}$), its impact is negligible. Standard optimizers (e.g., Adam, AdamW) seamlessly absorb this $\calO(\epsilon)$ offset during the initial training steps without destabilizing the network or degrading performance.

\section{Main Paper Proofs}
\subsection{Layer Transition Costs}

For convenience, we first recall the axioms of the cost $\calJ$ from the main paper.

\begin{insight}{Axioms of the Cost Functional $\calJ$}
To ensure a well-posed definition of the cost $\calJ$, the framework introduces the following three axioms:
\begin{itemize}
    \item \textbf{Magnitude Neutrality:} $\calJ$ must be scale invariant: $\forall k > 0 \,;\, \calJ(k\, \Phi_a , k\, \Phi_b) = \calJ(\Phi_a , \Phi_b)$.
    \item \textbf{Connectivity Preservation:} $\calJ$ must establish an asymptotic barrier preventing layer extinction: $\lim_{E(\Phi_b) \rightarrow 0^+} \calJ = \infty$.
    \item \textbf{Infinitesimal Capacity Dependence:} $\calJ$ must be additive along continuous paths and driven by the reduction in layer capacity: $\calJ(\Phi_a, \Phi_b) = \int_{0}^{1} -\xi(\Phi(t)) \dot{E}(t) dt$, where $\dot{E}(t) \triangleq dE(\Phi(t))/dt$ and $\xi(\Phi(t)) > 0$ is a state-dependent density function.
\end{itemize}
\end{insight}

\begin{lemma}[Uniqueness of the $L_1$ Capacity]
\label{prop:unique_l1}
Let $\Phi = (f_1, \dots, f_N) \in \calH^N$ denote a layer state. Assume the capacity functional $E: \bigcup_{N=1}^\infty \calH^N \to \R_{\ge 0}$ satisfies:
\begin{enumerate}
    \item \textbf{Identity:} $\forall f \in \calH, \; E((f)) = \norm{f}_{\calH}$.
    \item \textbf{Symmetry \& Separability:} $\exists \, g: \R_{\ge 0} \to \R$ (continuous and strictly monotonic) and a function $h$ such that $\forall \Phi \in \calH^N, \; E(\Phi) = h\left( \sum_{k=1}^N g(\norm{f_k}_{\calH}) \right)$.
    \item \textbf{Partition Invariance:} $\forall f \in \calH, \, \forall N \in \mathbb{Z}_{\ge 1}, \; E((f)) = E\big((\underbrace{f/N, \dots, f/N}_{N \text{ times}})\big)$.
\end{enumerate}
Then $E(\Phi) = \sum_{k=1}^N \norm{f_k}_{\calH}$.
\end{lemma}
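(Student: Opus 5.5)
The plan is to read off $h$ on the range of $g$ from the Identity axiom, use Partition Invariance to force $g$ to be linear, and then evaluate $E$ on an arbitrary state.

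\textbf{Step 1 (Identity).} Taking $N=1$ in the Symmetry \& Separability form and comparing with Identity gives $h(g(x)) = x$ for every $x = \norm{f}_{\calH} \ge 0$. So $h$ agrees with $g^{-1}$ on $g(\R_{\ge 0})$.

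\textbf{Step 2 (Partition Invariance, first form).} For $f$ with $\norm{f}_{\calH}=x$, the state $(f/N,\dots,f/N)$ has capacity $h\big(N g(x/N)\big)$. Partition Invariance therefore reads
\begin{equation*}
h\big(N g(x/N)\big) = x = h\big(g(x)\big).
\end{equation*}

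\textbf{Step 3 (the main obstacle: removing $h$).} I want to cancel $h$ from the display in Step 2 and conclude
\begin{equation*}
N g(x/N) = g(x) \quad \text{for all } x \ge 0,\; N \ge 1.
\end{equation*}
This needs $h$ to be injective on a set containing both $g(\R_{\ge 0})$ and the points $N g(x/N)$. The lemma does not state this explicitly; it is implicit in reading the separable form as a quasi-arithmetic aggregate $h = g^{-1}$ extended monotonically.

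In the proof I would first make this standing interpretation explicit: $h$ is strictly monotone, equivalently $h$ is the continuous inverse of $g$ on the relevant range. I would note that without some such condition the axioms do not pin down $E$. Under that assumption, the cancellation above follows directly.

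\textbf{Step 4 (solve the functional equation).} Setting $x=0$ in $N g(x/N) = g(x)$ gives $N g(0) = g(0)$, hence $g(0)=0$. Replacing $x$ by $px$ yields $g(px/q) = g(px)/q$. Applying the relation repeatedly with integer multipliers gives $g(px) = p\,g(x)$ for integers $p$. Together these give $g(rx) = r\,g(x)$ for all positive rationals $r$.

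By continuity of $g$, $g(x) = c\,x$ with $c = g(1)$. Strict monotonicity of $g$ forces $c \ne 0$.

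\textbf{Step 5 (conclude).} Now $g(\R_{\ge 0}) = c\,\R_{\ge 0}$, so Step 1 gives $h(c\,y) = y$ for every $y \ge 0$. Hence, for any state,
\begin{equation*}
E(\Phi) = h\Big(c \sum_{k=1}^N \norm{f_k}_{\calH}\Big) = \sum_{k=1}^N \norm{f_k}_{\calH},
\end{equation*}
because the argument $c\sum_k \norm{f_k}_{\calH}$ lies in $c\,\R_{\ge 0}$, where $h$ is already determined. This closes the proof.

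The only delicate point is Step 3. The rest is the standard Cauchy-type argument over the rationals extended by continuity.
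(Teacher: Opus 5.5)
Your route is the paper's route. You read off $h=g^{-1}$ on $\mathrm{Im}(g)$, use partition invariance to get $Ng(x/N)=g(x)$, solve the Cauchy-type equation over the rationals, and substitute back. The paper relies on exactly the assumption you flag in Step~3, but silently: it writes $E(\Phi)=g^{-1}\big(\sum_k g(\norm{f_k}_{\calH})\big)$ and then ``applies $g$ to both sides.'' So your version is the more careful of the two, and your Step~5 rightly observes that once $g$ is linear, every sum lies in $\mathrm{Im}(g)$.

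Still, as written you prove the lemma only under a hypothesis on $h$ that the statement does not contain. Your remark that the axioms fail to pin down $E$ without it is false. The missing idea is that you never need to cancel $h$; you only need one real number with two representations. Identity and Partition Invariance give, for all $N\ge 1$ and $y\ge 0$,
\begin{equation*}
h\big(N g(y)\big) = N y .
\end{equation*}
Since $h$ is a function, $Ng(y)=Mg(y')$ forces $Ny=My'$. Apply this twice.

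\emph{First, $g(0)=0$.} Suppose $a=g(0)\neq 0$. The open interval $g((0,\infty))$ has $a$ as an endpoint, so it contains a small interval of numbers with the sign of $a$. Positive rational multiples of $a$ are dense there, so it contains some $\tfrac{M}{N}a$ with $M,N\in\mathbb{Z}_{\ge 1}$. That gives $Ng(y)=Mg(0)$ with $y>0$, which forces $Ny=0$, a contradiction.

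\emph{Second, with $g(0)=0$.} The number $g(x)/N$ lies between $g(0)$ and $g(x)$. By the intermediate value theorem, $g(x)/N=g(y)$ for some $y\in[0,x]$. Then $Ng(y)=1\cdot g(x)$ forces $y=x/N$, that is, $Ng(x/N)=g(x)$, with no injectivity of $h$ used.

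Your Steps~4 and~5 then go through unchanged and prove the lemma exactly as stated. This also closes the same gap in the paper's argument.
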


\begin{proof}
For a single-neuron state $\Phi = (f)$, Conditions 1 and 2 imply:
\begin{equation*}
    E((f)) = h\big(g(\norm{f}_{\calH})\big) = \norm{f}_{\calH} \implies h \equiv g^{-1} \text{ on } \text{Im}(g)
\end{equation*}
Thus, the functional simplifies to $E(\Phi) = g^{-1}\left( \sum_{k=1}^N g(\norm{f_k}_{\calH}) \right)$.

By Condition 3 and the positive homogeneity of the norm ($\norm{f/N}_{\calH} = \norm{f}_{\calH}/N$ for $N \ge 1$):
\begin{equation*}
    \norm{f}_{\calH} = E\big((\underbrace{f/N, \dots, f/N}_{N \text{ times}})\big) = g^{-1}\left( \sum_{k=1}^N g\left(\frac{\norm{f}_{\calH}}{N}\right) \right) = g^{-1}\left( N g\left(\frac{\norm{f}_{\calH}}{N}\right) \right)
\end{equation*}

Applying $g$ to both sides and substituting $x \triangleq \norm{f}_{\calH} \ge 0$ yields:
\begin{equation*}
    g(x) = N g\left(\frac{x}{N}\right) \qquad \forall x \ge 0, \; \forall N \in \mathbb{Z}_{\ge 1}
\end{equation*}

For $x=0$, $g(0) = N g(0) \implies g(0) = 0$. For any rational $q = M/N > 0$, substituting $y = x/N$ yields $g(My) = M g(y) = q N g(y) = q g(Ny)$, meaning $g(qx) = qg(x)$. Since $g$ is continuous, this linearity extends to all $x \in \R_{\ge 0}$, yielding $g(x) = cx$ for some constant $c$. Since $g$ is strictly monotonic, $c \neq 0$.

Substituting $g(x) = cx$ and $g^{-1}(y) = y/c$ into the expression for $E(\Phi)$ gives:
\begin{equation*}
    E(\Phi) = \frac{1}{c} \sum_{k=1}^N c \norm{f_k}_{\calH} = \sum_{k=1}^N \norm{f_k}_{\calH}
\end{equation*}
\end{proof}

\begin{theorem}[Integral Formulation of Scale-Invariant Cost]
\label{theorem:path_log_form}
Under the Axioms of Magnitude Neutrality, Connectivity Preservation, and Infinitesimal Capacity Dependence, the transition cost along a continuous deformation path $\Phi: [0,1] \rightarrow \calH^N $ with boundary conditions $\Phi(0) = \Phi_a$ and $\Phi(1) = \Phi_b$, is determined as the integral:
\begin{equation}
    \calJ(\Phi_a, \Phi_b) = \int_{0}^{1} -c(\Phi(t)) \frac{\dot{E}(t)}{E(\Phi(t))} dt\footnote{Since the Hilbert norm $\norm{f}_{\calH}$ is non-differentiable at $f = \zeroVect$, this integral is defined over paths where the active capacity remains positive $E(\Phi(t)) > 0$.}
\end{equation}
where $E(\Phi(t))$ is the instantaneous capacity, $\dot{E}(t) < 0$ is the rate of capacity reduction, and $c(\Phi(t)) > 0$ is a scale-invariant factor.
\end{theorem}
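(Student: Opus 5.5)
The plan is to start from the Infinitesimal Capacity Dependence axiom, which already fixes the integral form $\calJ = \int_0^1 -\xi(\Phi(t))\dot{E}(t)\,dt$. What remains is to pin down the state-dependent density $\xi$. I will do this with Magnitude Neutrality, and then check Connectivity Preservation for the resulting form. Define $c(\Phi) \triangleq \xi(\Phi)\,E(\Phi)$, which is legitimate on paths with $E(\Phi(t))>0$, and rewrite the integrand as $-c(\Phi(t))\,\dot{E}(t)/E(\Phi(t))$. Since $\xi>0$ and $E>0$, we get $c>0$ automatically. The content of the theorem is therefore that $c$ is scale invariant, i.e.\ that $\xi$ is homogeneous of degree $-1$.

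\textbf{Key step (scale invariance of $c$).} Fix $k>0$ and any admissible path $\Phi(t)$, and consider the scaled path $k\Phi(t)$ joining $k\Phi_a$ to $k\Phi_b$. By Lemma~\ref{prop:unique_l1}, $E(\Phi)=\sum_k\normshort{f_k}_{\calH}$ is positively homogeneous of degree 1, so $E(k\Phi(t)) = kE(\Phi(t))$ and $\tfrac{d}{dt}E(k\Phi(t)) = k\dot{E}(t)$. Magnitude Neutrality gives
$\int_0^1 \xi(k\Phi(t))\,k\,\dot{E}(t)\,dt = \int_0^1 \xi(\Phi(t))\,\dot{E}(t)\,dt$.
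Because the axiom is imposed for every path and every pair of endpoints, I will apply it to sub-paths $[t_0,t_1]$ using path additivity. The integrals of $(k\xi(k\Phi(t))-\xi(\Phi(t)))\dot{E}(t)$ then vanish on every subinterval. Since $\dot{E}<0$ has strict sign, the integrand vanishes almost everywhere, and by continuity of $\xi$ along paths it vanishes everywhere. Hence $k\,\xi(k\Phi)=\xi(\Phi)$. Multiplying by $E(k\Phi)=kE(\Phi)$ gives $c(k\Phi)=c(\Phi)$.

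\textbf{Main obstacle.} The delicate point is exactly this localization step: passing from equality of integrals to pointwise equality of densities. It requires (i) interpreting Magnitude Neutrality as holding for the cost along each path, and hence on sub-paths via the additivity in Axiom~3, and (ii) enough path-richness that every state $\Phi$ with $E(\Phi)>0$ lies on some admissible capacity-decreasing path. For (ii) I would use the explicit radial path $\Phi(t)=(1-\tfrac{t}{2})\Phi_0$. Its capacity is strictly decreasing, it passes through $\Phi_0$ at $t=0$, and it shows that $c$ restricted to rays is constant, which is the claimed invariance. A continuity or measurability assumption on $\xi$ will be stated as implicit in the axiom.

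\textbf{Consistency with Connectivity Preservation.} Finally, I will check that the derived form is compatible with Axiom~2. Along the radial path, $\dot{E}/E=\tfrac{d}{dt}\ln E$ and $c$ is constant, so the integral equals $c(\Phi_a)\ln(E_a/E_b)$, which diverges as $E_b\to0^+$. This confirms the form is admissible. Together with the observation that $c$ is otherwise unconstrained beyond positivity and scale invariance, this completes the argument.
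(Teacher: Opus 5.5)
Your proposal is correct and follows essentially the same route as the paper. Both read off the density $\xi$ from Axiom 3, combine Magnitude Neutrality with the degree-1 homogeneity of $E$ (Lemma~\ref{prop:unique_l1}) to force $k\,\xi(k\Phi)=\xi(\Phi)$, set $c=\xi E$, and then check Axiom 2. Your sub-path and radial-path localization makes rigorous the paper's one-line claim that the integrands must agree pointwise. Your radial-path check of Connectivity Preservation takes the place of the paper's assumption that $c\ge c_{\min}>0$ along general extinction paths.

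One small correction: your closing remark that $c$ is otherwise unconstrained beyond positivity and scale invariance is slightly too strong. On non-radial paths, Axiom 2 does limit how small $c$ may become as $E\to 0$.
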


\begin{proof}
\noindent\textbf{1. Differential Form:}
By the Infinitesimal Capacity Dependence axiom, the differential cost along a path is driven by capacity reduction:
\begin{equation*}
    \dot{\calJ}(t) = -\xi(\Phi(t)) \dot{E}(t)
\end{equation*}
Since active capacity decreases during compression $\dot{E}(t) < 0$ and the cost rate must be positive $\dot{\calJ}(t) > 0$, we require $\xi(\Phi(t)) > 0$.

\vspace{0.5em}
\noindent\textbf{2. Magnitude Neutrality:}
For any scalar $k > 0$, Magnitude Neutrality requires $\calJ(k\Phi_a, k\Phi_b) = \calJ(\Phi_a, \Phi_b)$. By Lemma~\ref{prop:unique_l1}, capacity scales linearly $E(k\Phi) = kE(\Phi)$, so $\dot{E}(k\Phi(t)) = k\dot{E}(t)$. Integrating over the scaled path yields:
\begin{equation*}
    \int_{0}^{1} -\xi(k\Phi(t)) \, k \, \dot{E}(t) dt = \int_{0}^{1} -\xi(\Phi(t)) \dot{E}(t) dt
\end{equation*}
Assuming continuous integrands, since this equality holds for any valid continuous path, the integrands must be identical pointwise:
\begin{equation*}
    k \, \xi(k\Phi) \dot{E}(t) = \xi(\Phi) \dot{E}(t) \implies \xi(k\Phi) = k^{-1} \xi(\Phi)
\end{equation*}
Thus, $\xi$ is a homogeneous function of degree $-1$.

\vspace{0.5em}
\noindent\textbf{3. Scale-Invariant Factor:}
Define $c(\Phi) \triangleq \xi(\Phi) E(\Phi)$. Scaling the state by $k$ yields:
\begin{equation*}
    c(k\Phi) = \xi(k\Phi) E(k\Phi) = (k^{-1} \xi(\Phi)) (k E(\Phi)) = c(\Phi)
\end{equation*}
This shows $c(\Phi)$ is scale-invariant. Substituting $\xi(\Phi) = c(\Phi) / E(\Phi)$ back into the differential form gives the integral:
\begin{equation*}
    \calJ(\Phi_a, \Phi_b) = \int_{0}^{1} -c(\Phi(t)) \frac{\dot{E}(t)}{E(\Phi(t))} dt
\end{equation*}

\vspace{0.5em}
\noindent\textbf{4. Connectivity Preservation:}
This axiom mandates an infinite cost barrier against layer extinction: $\lim_{E_b \to 0^+} \calJ(\Phi_a, \Phi_b) = \infty$, where $E_b = E(\Phi_b)$ and $E_a = E(\Phi_a)$. Applying the change of variables $dE = \dot{E}(t) dt$ and reversing the limits (which absorbs the negative sign since $E_b < E_a$ due to $\dot{E}(t)<0$) gives:
\begin{equation*}
    \calJ(\Phi_a, \Phi_b) = \int_{E_a}^{E_b} -c(\Phi) \frac{dE}{E} = \int_{E_b}^{E_a} c(\Phi) \frac{dE}{E}
\end{equation*}
If $c(\Phi)$ is bounded below by a constant $c_{\min} > 0$ along the path to extinction:
\begin{equation*}
    \calJ(\Phi_a, \Phi_b) \ge \int_{E_b}^{E_a} c_{\min} \frac{dE}{E} = c_{\min} \big( \ln(E_a) - \ln(E_b) \big)
\end{equation*}
Taking the limit as $E_b \to 0^+$ yields $\infty$, satisfying the axiom.
\end{proof}

\begin{insight}{Relative Differential Cost}
For any differentiable deformation path $\Phi: [0,1] \rightarrow \calH^N$, the rate of geometric projection cost accumulation $\dot{\calJ}_{\text{proj}}(t)$ is defined as:
\begin{equation}
\label{def:inf_weber}
    \dot{\calJ}_{\text{proj}}(t) \triangleq c(\Phi(t)) \frac{\dot{s}(t)}{E(\Phi(t))}
\end{equation}
where $\dot{s}(t) = \norm{\dot{\Phi}(t)}_{\calH^N} \ge 0$ is the instantaneous geometric speed of the state vector.
\end{insight}

\begin{proof}[\textbf{Derivation}]
The scalar capacity rate $\dot{E}(t)$ from Theorem~\ref{theorem:path_log_form} assigns zero penalty to geometric deformations (e.g., orthogonal rotations) where scalar capacity is conserved $\dot{E}(t) = 0$. To capture structural distortion, we calibrate the criterion along a localized orthogonal path.

Consider a compressive path where a single active neuron $f_k$ is scaled toward $\zeroVect$ by a decreasing scalar $\alpha(t) \in [0,1]$ $\dot{\alpha}(t) < 0$, while all other neurons remain static. The capacity is $E(\Phi(t)) = \alpha(t) \norm{f_k}_{\calH} + \sum_{i \neq k} \norm{f_i}_{\calH}$, giving $\dot{E}(t) = \dot{\alpha}(t) \norm{f_k}_{\calH} < 0$.

The geometric speed $\dot{s}(t)$ is the norm of the state derivative vector. Since only the $k$-th coordinate changes, this vector is 1-sparse, ensuring the norms coincide:
\begin{equation*}
    \dot{s}(t) = \norm{\dot{\Phi}(t)}_{\calH^N} = \abs{\dot{\alpha}(t)} \norm{f_k}_{\calH} = -\dot{\alpha}(t) \norm{f_k}_{\calH} = -\dot{E}(t)
\end{equation*}

Substituting $\dot{s}(t) = -\dot{E}(t)$ into the cost baseline $\dot{\calJ}_{\text{capacity}}(t) = -c(\Phi) \frac{\dot{E}}{E}$ yields $\dot{\calJ}(t) = c(\Phi) \frac{\dot{s}}{E}$ for this orthogonal axis. Because the ambient space $\calH^N$ is isotropic, we define the geometric projection cost $\dot{\calJ}_{\text{proj}}(t)$ as this ratio to generalize to arbitrary trajectories.
\end{proof}

\begin{lemma}[Capacity Bound for Correlated Projections]
\label{lemma:discontinuity_capacity_bound}
Let $f_i, f_j \in \calH$ be active candidate neurons $\norm{f_i}_\calH > 0, \norm{f_j}_\calH > 0$, and let $f_p \in \calH$ be their optimal parent neuron. Let $E_{\text{rem}} \triangleq E_a - \|f_i\|_\calH - \|f_j\|_\calH \ge 0$. Define the functional correlation as $\rho_{ij} \triangleq \frac{\inner{f_i}{f_j}_\calH}{\norm{f_i}_\calH \norm{f_j}_\calH}$. 

For the continuous straight-line deformation path $\Phi(t)$ in $\calH^N$ connecting the initial state $\Phi_a$ to the pre-deletion target state $\tilde{\Phi}_b$, there exists a correlation threshold $\rho^*(f_i, f_j) \in (0, 1)$ such that if $\rho_{ij} \ge \rho^*$, then:
\begin{equation}
    E(\Phi(t)) \ge E(\Phi_b) \quad \forall \, t \in [0, 1]
\end{equation}
where $E(\Phi_b) = E_{\text{rem}} + \norm{f_p}_\calH$ is the capacity of the post-deletion terminal state $\Phi_b \in \calH^{N-1}$.
\end{lemma}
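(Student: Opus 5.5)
The plan is to write $E(\Phi(t))$ explicitly along the straight line, reduce the claim to a single scalar inequality at $t=0$, and then show that this inequality follows from a correlation assumption.

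\textbf{Step 1 (parametrise the path).} On the straight line from $\Phi_a$ to $\tilde\Phi_b$, only the two coordinates $i,j$ move:
\[
f_i(t)=(1-t)f_i+tf_p,\qquad f_j(t)=(1-t)f_j+tf_p .
\]
All other neurons are fixed. Hence
\[
E(\Phi(t))=E_{\text{rem}}+\norm{(1-t)f_i+tf_p}_\calH+\norm{(1-t)f_j+tf_p}_\calH ,
\]
and the claim is equivalent to
\[
\norm{(1-t)f_i+tf_p}_\calH+\norm{(1-t)f_j+tf_p}_\calH\ \ge\ \norm{f_p}_\calH \qquad \text{for all } t\in[0,1].
\]

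\textbf{Step 2 (reduce to one number).} Write $f_p=s^*\psi^*$ with $\norm{\psi^*}_\calH=1$, and set $b\triangleq\inner{\psi^*}{f_i+f_j}_\calH$ as in the Optimal Scale subsection. Apply the triangle inequality and then Cauchy--Schwarz against the unit vector $\psi^*$:
\[
\text{LHS}\ \ge\ \norm{(1-t)(f_i+f_j)+2tf_p}_\calH\ \ge\ (1-t)\,b+2t\,s^* .
\]
The right-hand side is affine in $t$, so it suffices to check the two endpoints. At $t=1$ it equals $2s^*\ge s^*$, which holds trivially. At $t=0$ the requirement is $b\ge s^*$.

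Insert the closed form $s^*=(a+bE_{\text{rem}})/(2E_{\text{rem}}+b)$. The denominator is positive because $b>0$ after the phase check. Clearing it, $b\ge s^*$ is equivalent to
\[
b^2+bE_{\text{rem}}\ \ge\ a=\norm{f_i}_\calH^2+\norm{f_j}_\calH^2 .
\]
Since $E_{\text{rem}}\ge0$ and $b>0$, it is enough to prove $b^2\ge a$.

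\textbf{Step 3 (lower-bound the alignment $b$; main obstacle).} This step depends on what optimality the parent direction $\psi^*$ enjoys.

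I would first use the exact characterisation of the inner problem (Theorem~\ref{thm:optimal_parent}): $\psi^*$ maximises $\inner{\psi}{f_i+f_j}_\calH$ over unit-norm $\psi\in\calN$. Both $f_i/\norm{f_i}_\calH$ and $f_j/\norm{f_j}_\calH$ are feasible, since each is a realizable neuron. Assume without loss of generality that $\norm{f_i}_\calH\ge\norm{f_j}_\calH$. Then
\[
b\ \ge\ \norm{f_i}_\calH+\rho_{ij}\norm{f_j}_\calH .
\]
For $\rho_{ij}\ge0$, squaring gives
\[
b^2-a\ \ge\ \norm{f_j}_\calH\bigl(2\rho_{ij}\norm{f_i}_\calH-(1-\rho_{ij}^2)\norm{f_j}_\calH\bigr)\ \ge\ \norm{f_j}_\calH^2\bigl(\rho_{ij}^2+2\rho_{ij}-1\bigr).
\]
This is nonnegative whenever $\rho_{ij}\ge\sqrt2-1$. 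So $\rho^*=\sqrt2-1\in(0,1)$ works, and it does not even depend on $(f_i,f_j)$.

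The difficulty is that the deployed $\psi^*$ comes from the linearised eigenvector with a sign check, so it may not be exactly optimal. To cover this I would argue by continuity. As $\rho_{ij}\to1$, the approximation error of $k(\rho)\approx\rho k(1)$ vanishes, so $b\to\norm{f_i+f_j}_\calH$. Moreover $\norm{f_i+f_j}_\calH^2=a+2\rho_{ij}\norm{f_i}_\calH\norm{f_j}_\calH$ exceeds $a$ by a strictly positive margin. By continuity in $\rho_{ij}$, there is therefore a pair-dependent threshold $\rho^*(f_i,f_j)<1$ beyond which $b^2\ge a$ still holds. That is exactly the form in which the lemma is stated.
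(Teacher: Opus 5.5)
Your proof is correct, but it takes a genuinely different route from the paper's.

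\textbf{The paper's route.} After the same triangle-inequality reduction, the paper specialises to the exactly collinear configuration $f_i = x\hat{u}$, $f_j = y\hat{u}$, $f_p = z\hat{u}$. It checks the resulting affine inequality at the endpoints, obtaining $E_{\text{rem}}(x+y)+2xy>0$ at $t=0$. It then appeals to continuity of the norm and of $f_p$ ``with respect to $\rho_{ij}$'' to extend the strict inequality to a neighbourhood of $\rho_{ij}=1$. This yields an unspecified, pair-dependent $\rho^*$.

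\textbf{Your route.} You project onto the unit direction $\psi^*$. This makes the lower bound $(1-t)b+2ts^*$ affine in $t$ for arbitrary, non-collinear pairs. The endpoint check then reduces the whole path condition to $b^2+bE_{\text{rem}}\ge a$; the paper's collinear computation is the special case $b=x+y$. You replace the continuity step by the variational characterisation of $\psi^*$ from Theorem~\ref{thm:optimal_parent}, testing it against the feasible competitor $f_i/\norm{f_i}_\calH$.

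\textbf{What each buys.} Your argument is fully rigorous for the exact optimal parent. It gives an explicit, pair-independent threshold $\rho^*=\sqrt{2}-1$, which is stronger than the existence claim in the lemma. It also gives $b>0$ for free, so you do not need to invoke the phase check. The paper's continuity step, by contrast, is not actually justified. A single scalar $\rho_{ij}$ does not parametrise the pair $(f_i,f_j)$, and a maximiser over the non-convex set of unit-norm elements of $\calN$ need not vary continuously with the data. The paper's route is shorter and is intended to be agnostic to how the parent is computed.

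\textbf{Your closing paragraph.} Your treatment of the linearised, sign-checked parent falls back on essentially the same heuristic continuity as the paper. It is not needed here, since the lemma concerns the optimal parent.
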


\begin{proof}
Parameterizing the path as $\Phi(t) = (1-t)\Phi_a + t\tilde{\Phi}_b$ for $t \in [0, 1]$, the targeted neurons transition via $f_i(t) = (1-t)f_i + t f_p$ and $f_j(t) = (1-t)f_j + t f_p$. By Lemma~\ref{prop:unique_l1} and the triangle inequality $\norm{u}_\calH + \norm{v}_\calH \ge \norm{u+v}_\calH$:
\begin{equation*}
    E(\Phi(t)) \ge E_{\text{rem}} + \norm{(1-t)(f_i + f_j) + 2tf_p}_\calH
\end{equation*}
To prove $E(\Phi(t)) \ge E_{\text{rem}} + \norm{f_p}_\calH$, we require:
\begin{equation} \label{eq:buffer_condition}
    \norm{(1-t)(f_i + f_j) + 2tf_p}_\calH > \norm{f_p}_\calH \quad \forall \, t \in [0, 1]
\end{equation}

We evaluate this in the collinear limit $\rho_{ij} \to 1$. Let $f_i = x \hat{u}$, $f_j = y \hat{u}$, and $f_p = z \hat{u}$ for a shared unit vector $\hat{u}$ and scalars $x, y, z > 0$. The condition simplifies to:
\begin{equation*}
    (1-t)(x+y) + 2tz > z
\end{equation*}
Since this expression is linear in $t$, its minimum occurs at the boundaries:
\begin{itemize}
    \item \textbf{At $t=1$:} $2z > z$, which inherently holds since $z > 0$.
    \item \textbf{At $t=0$:} $x+y > z$. From the optimal scale derivation (Section~\ref{sec:parent_params}), $z = \frac{a + b E_{\text{rem}}}{2E_{\text{rem}} + b}$. In the collinear limit, $a = x^2 + y^2$ and $b = x + y$. Thus:
    \begin{align*}
        x + y &> \frac{x^2 + y^2 + (x+y)E_{\text{rem}}}{2E_{\text{rem}} + x + y} \\
        (x+y)(2E_{\text{rem}} + x + y) &> x^2 + y^2 + (x+y)E_{\text{rem}} \\
        2E_{\text{rem}}(x+y) + x^2 + 2xy + y^2 &> x^2 + y^2 + E_{\text{rem}}(x+y) \\
        E_{\text{rem}}(x+y) + 2xy &> 0
    \end{align*}
    Since $x > 0$, $y > 0$, and $E_{\text{rem}} \ge 0$, this strict inequality unconditionally holds.
\end{itemize}

Since both endpoints satisfy the strict inequality, it holds for all $t \in [0, 1]$ in the collinear limit. Because the Hilbert norm and $f_p$ are continuous with respect to $\rho_{ij}$, this inequality is preserved in a neighborhood around $\rho_{ij} = 1$. Thus, there exists a threshold $\rho^*(f_i, f_j) \in (0, 1)$ ensuring the condition for $\rho_{ij} \ge \rho^*$.
\end{proof}

\begin{theorem}[Discrete Transition Cost Bound]
\label{thm:discrete_subspace}
For any structural reduction from an initial state $\Phi_a \in \calH^N$ to a terminal state $\Phi_b \in \calH^{N-1}$ (specifically, pruning a neuron, or merging a correlated pair with $\rho_{ij} \ge \rho^*$), the continuous projection cost $\calJ_{\text{proj}}$ evaluated along the straight-line path to the pre-deletion target $\tilde{\Phi}_b \in \calH^N$ is upper-bounded by the discrete proxy $\calJ_{\text{bound}}$:
\begin{equation}
    \calJ_{\text{proj}}(\Phi_a , \Phi_b) \le c(\Phi_a) \frac{D(\Phi_a, \tilde{\Phi}_b)}{E(\Phi_b)} \equiv \calJ_{\text{bound}}(\Phi_a , \Phi_b)
\end{equation}
where $E(\Phi_b)$ is the post-deletion capacity, and $D(\Phi_a, \tilde{\Phi}_b) = \norm{\Phi_a - \tilde{\Phi}_b}_{\calH^N}$ is the Euclidean distance in the configuration space.
\end{theorem}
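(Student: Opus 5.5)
The plan is to evaluate $\calJ_{\text{proj}}$ directly from its rate in Definition~\ref{def:inf_weber}, along the straight-line path $\Phi(t) = (1-t)\Phi_a + t\tilde{\Phi}_b$, $t\in[0,1]$. Two ingredients drive the argument. The piecewise-constant convention fixes $c(\Phi(t)) = c(\Phi_a)$ throughout the transition. The integrand is therefore $c(\Phi_a)\,\dot{s}(t)/E(\Phi(t))$.

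First, I compute the arc length. On a straight line $\dot{\Phi}(t) = \tilde{\Phi}_b - \Phi_a$ is constant, so $\dot{s}(t) = \normshort{\tilde{\Phi}_b - \Phi_a}_{\calH^N} = D(\Phi_a,\tilde{\Phi}_b)$. Its integral over $[0,1]$ is exactly $D$. This is also the minimal arc length among all paths joining the endpoints, which is why the straight line is the tightest member of the family of bounds.

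Second, I need a uniform lower bound $E(\Phi(t)) \ge E(\Phi_b) > 0$ on $[0,1]$. With it,
\begin{equation*}
\calJ_{\text{proj}} = c(\Phi_a)\int_0^1 \frac{\dot{s}(t)}{E(\Phi(t))}\,dt \le \frac{c(\Phi_a)}{E(\Phi_b)}\int_0^1 \dot{s}(t)\,dt = c(\Phi_a)\frac{D(\Phi_a,\tilde{\Phi}_b)}{E(\Phi_b)}.
\end{equation*}
I split the lower bound into two cases.

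\textbf{Pruning.} Here $\tilde{\Phi}_b$ replaces $f_i$ by $\zeroVect$, so $f_i(t) = (1-t)f_i$. Then $E(\Phi(t)) = E_a - t\normshort{f_i}_{\calH}$, which decreases monotonically to $E_a - \normshort{f_i}_{\calH} = E(\Phi_b)$. The bound holds for all $t$, with equality at $t=1$. The post-deletion state drops a zero neuron, which contributes nothing to $E$ by Lemma~\ref{prop:unique_l1}.

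\textbf{Merging.} Here I invoke Lemma~\ref{lemma:discontinuity_capacity_bound} directly: under the hypothesis $\rho_{ij}\ge\rho^*$, it gives $E(\Phi(t)) \ge E_{\text{rem}} + \normshort{f_p}_{\calH} = E(\Phi_b)$ for all $t\in[0,1]$.

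The main obstacle is positivity and well-definedness of the denominator. I need $E(\Phi_b) > 0$, and I need $E(\Phi(t))$ to stay positive so that Theorem~\ref{theorem:path_log_form}'s footnote on differentiability applies. For merging this follows because $\normshort{f_p}_{\calH} = s^* > 0$. For pruning it requires $E_a > \normshort{f_i}_{\calH}$. This holds unless the layer is extinguished, and in that case both sides are $+\infty$ by Connectivity Preservation, so the inequality is trivially true. I would state this degenerate case explicitly. I would also note that $\dot{s}$ is constant, hence integrable, so no regularity issue arises from $\norm{\cdot}_{\calH}$ being non-differentiable at $\zeroVect$: the integrand involves only $\dot{s}$ and $E$, never a derivative of $E$.
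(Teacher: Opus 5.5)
Your proposal is correct and follows the paper's proof essentially step for step. Both use the straight-line path, factor out $c(\Phi(t)) = c(\Phi_a)$, use the constant speed $\dot{s} = D(\Phi_a,\tilde{\Phi}_b)$, and obtain the uniform lower bound $E(\Phi(t)) \ge E(\Phi_b)$ explicitly for pruning and from Lemma~\ref{lemma:discontinuity_capacity_bound} for merging. Your extra handling of $E(\Phi_b) > 0$, which the paper simply asserts, is a good addition. One small correction: in the extinguished pruning case, both sides are infinite by the direct computation $c(\Phi_a)\int_0^1 \frac{dt}{1-t} = \infty$, not by the Connectivity Preservation axiom.
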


\begin{proof}
By Definition~\ref{def:inf_weber}, the cost along $\Phi(t) = (1-t)\Phi_a + t \tilde{\Phi}_b$ is:
\begin{equation*}
    \calJ_{\text{proj}}(\Phi_a, \Phi_b) = \int_0^1 c(\Phi(t)) \frac{\dot{s}(t)}{E(\Phi(t))} dt
\end{equation*}

Since the active neuron count remains invariant prior to $t=1$, $c(\Phi(t)) = c(\Phi_a)$ almost everywhere on $[0,1]$ and can be factored out. 

Next, we bound the dynamic capacity $E(\Phi(t))$:
\begin{itemize}
    \item \textbf{Pruning:} $E(\Phi(t)) = E_a - t\norm{f_i}_\calH$. Its minimum is $E(1) = E_a - \norm{f_i}_\calH = E(\Phi_b)$.
    \item \textbf{Merging:} Lemma~\ref{lemma:discontinuity_capacity_bound} guarantees $E(\Phi(t)) \ge E(\Phi_b)$ for all $t \in [0,1]$.
\end{itemize}
In both cases, $E(\Phi(t)) \ge E(\Phi_b) > 0$. Substituting $1/E(\Phi(t)) \le 1/E(\Phi_b)$ into the integral establishes an upper bound:
\begin{equation} \label{eq:cost_inequality}
    \calJ_{\text{proj}}(\Phi_a, \Phi_b) \le \frac{c(\Phi_a)}{E(\Phi_b)} \int_0^1 \dot{s}(t) dt
\end{equation}

The geometric speed $\dot{s}(t) = \norm{\dot{\Phi}(t)}_{\calH^N} = \norm{\tilde{\Phi}_b - \Phi_a}_{\calH^N} \equiv D(\Phi_a, \tilde{\Phi}_b)$ is constant. Evaluating the integral yields $D(\Phi_a, \tilde{\Phi}_b)$, leading to:
\begin{equation*}
    \calJ_{\text{proj}}(\Phi_a, \Phi_b) \le c(\Phi_a) \frac{D(\Phi_a, \tilde{\Phi}_b)}{E(\Phi_b)} \equiv \calJ_{\text{bound}}(\Phi_a , \Phi_b)
\end{equation*}
\end{proof}

\begin{proposition}[Axiomatic Consistency of the Bounded Proxy]
\label{prop:bound_axioms}
The bounded projection cost $\calJ_{\text{bound}}(\Phi_a, \Phi_b) = c(\Phi_a) \frac{D(\Phi_a, \tilde{\Phi}_b)}{E(\Phi_b)}$ satisfies Axiom 1 (Magnitude Neutrality) and Axiom 2 (Connectivity Preservation).
\end{proposition}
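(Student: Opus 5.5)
The plan is to check the two axioms directly from the closed form $\calJ_{\text{bound}} = c(\Phi_a) D(\Phi_a,\tilde{\Phi}_b)/E(\Phi_b)$, treating each factor separately. Three structural facts will be used. First, $c$ is scale-invariant, a property carried over from Theorem~\ref{theorem:path_log_form}; for the concrete choice $c(\Phi)=N$ it holds trivially, since scaling does not change the neuron count. Second, $E$ is positively homogeneous of degree one, which follows from Lemma~\ref{prop:unique_l1} because $E(k\Phi)=\sum_k \norm{kf_k}_{\calH} = kE(\Phi)$ for $k>0$. Third, $D$ is a norm of a difference in $\calH^N$.

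\textbf{Magnitude Neutrality.} Fix $k>0$ and replace $(\Phi_a,\Phi_b)$ by $(k\Phi_a,k\Phi_b)$. The first step is to argue that the pre-deletion target scales as well, i.e.\ the target associated with $k\Phi_a$ is $k\tilde{\Phi}_b$.
\begin{itemize}
\item For pruning this is immediate: the target coordinate is $\zeroVect$, and all other coordinates are unchanged copies.
\item For merging, the target coordinates are the parent $[f_p,f_p]$ padded with unchanged neurons. The statement concerns the pair $(k\Phi_a,k\Phi_b)$, and $k\Phi_b$ contains $kf_p$, so the duplicated parent is $kf_p$.
\item If one instead recomputes the optimal parent, the same conclusion holds. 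The formula $s^*=(a+bE_{\text{rem}})/(2E_{\text{rem}}+b)$ scales linearly when $a\mapsto k^2a$, $b\mapsto kb$, $E_{\text{rem}}\mapsto kE_{\text{rem}}$, and the direction $\psi^*$ is unchanged.
\end{itemize}
Once the target is known to scale, $D(k\Phi_a,k\tilde{\Phi}_b)=kD(\Phi_a,\tilde{\Phi}_b)$ by homogeneity of the norm on $\calH^N$, and $E(k\Phi_b)=kE(\Phi_b)$. Together with $c(k\Phi_a)=c(\Phi_a)$, the factors of $k$ cancel and the axiom follows.

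\textbf{Connectivity Preservation.} Here I need $\calJ_{\text{bound}}\to\infty$ as $E(\Phi_b)\to 0^+$, which requires a lower bound on the numerator that does not vanish in this limit.
\begin{itemize}
\item \textbf{Pruning.} $D=\norm{f_i}_{\calH}$ and $E(\Phi_b)=E_a-\norm{f_i}_{\calH}$. If $E(\Phi_b)\to 0$, then $\norm{f_i}_{\calH}\to E_a>0$ (with $\Phi_a$ fixed), so $D$ stays bounded away from zero. Since $c\ge 1$, the ratio diverges.
\item \textbf{Merging.} $E(\Phi_b)=E_{\text{rem}}+\norm{f_p}_{\calH}\to 0$ forces $\norm{f_p}_{\calH}\to 0$. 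Then by the triangle inequality $D\ge \bigl(\norm{f_i}_{\calH}^2+\norm{f_j}_{\calH}^2\bigr)^{1/2}-\sqrt{2}\,\norm{f_p}_{\calH}$, which tends to a strictly positive number because both $f_i$ and $f_j$ are active.
\end{itemize}
A general form of the same argument: the reverse triangle inequality gives $D\ge \norm{\Phi_a}-\norm{\tilde{\Phi}_b}$, and the norm of the target is controlled by its $\ell_1$ capacity, which is at most $E(\Phi_b)$ plus the duplicated parent's norm, which is itself $O(E(\Phi_b))$. So $D$ is bounded below by roughly $\norm{\Phi_a}_{\calH^N}>0$ in the limit.

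The main obstacle is making the limit $E(\Phi_b)\to 0^+$ precise: it must be clear what is held fixed while the limit is taken. I would take it with $\Phi_a$ fixed, varying only the terminal state or target, and assume nondegeneracy ($E_a>0$, and active neurons in the modified set). The merging case also depends on the convention that $\tilde{\Phi}_b$ duplicates the same $f_p$ that appears in $\Phi_b$; I would state this explicitly.
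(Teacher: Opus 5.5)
Your proposal is correct and follows the paper's proof. For Magnitude Neutrality you cancel $k$ using the scale invariance of $c$, the homogeneity of $D$ and the linearity of $E$; you also check explicitly that the pre-deletion target scales, which the paper assumes without comment. For Connectivity Preservation your ``general form'' argument is exactly the paper's chain $E(\tilde{\Phi}_b)\le 2E(\Phi_b)$, $\norm{\tilde{\Phi}_b}_{\calH^N}\le E(\tilde{\Phi}_b)$, and the reverse triangle inequality, giving $D\to\norm{\Phi_a}_{\calH^N}>0$; your case-by-case pruning and merging bounds are valid specializations of that same idea.
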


\begin{proof}
\noindent\textbf{Axiom 1 (Magnitude Neutrality):} For any $k > 0$, we have $c(k\Phi_a) = c(\Phi_a)$ (scale-invariant by Theorem~\ref{theorem:path_log_form}), $D(k\Phi_a, k\tilde{\Phi}_b) = k D(\Phi_a, \tilde{\Phi}_b)$, and $E(k\Phi_b) = k E(\Phi_b)$ (linear scaling by Lemma~\ref{prop:unique_l1}). Thus, $k$ cancels out:
\begin{equation*}
    \calJ_{\text{bound}}(k\Phi_a, k\Phi_b) = c(\Phi_a) \frac{k D(\Phi_a, \tilde{\Phi}_b)}{k E(\Phi_b)} = \calJ_{\text{bound}}(\Phi_a, \Phi_b)
\end{equation*}

\noindent\textbf{Axiom 2 (Connectivity Preservation):} For an initial state with $E(\Phi_a) > 0$, we evaluate $\lim_{E(\Phi_b) \rightarrow 0^+} \calJ_{\text{bound}}(\Phi_a, \Phi_b)$. 

By the definitions of pruning and merging, $E(\tilde{\Phi}_b) \le 2E(\Phi_b)$.\footnote{For pruning, $E(\tilde{\Phi}_b) = E(\Phi_b)$. For merging, $E(\tilde{\Phi}_b) = E_{\text{rem}} + 2\norm{f_p}_\calH \le 2(E_{\text{rem}} + \norm{f_p}_\calH) = 2E(\Phi_b)$.} Thus, $E(\Phi_b) \rightarrow 0^+ \implies E(\tilde{\Phi}_b) \rightarrow 0$. Because norms are equivalent on a finite-dimensional space, $\norm{\Phi}_{\calH^N} \le E(\Phi) \le \sqrt{N} \norm{\Phi}_{\calH^N}$, which implies $\norm{\tilde{\Phi}_b}_{\calH^N} \rightarrow 0$.

Applying the reverse triangle inequality gives:
\begin{equation*}
    D(\Phi_a, \tilde{\Phi}_b) = \norm{\Phi_a - \tilde{\Phi}_b}_{\calH^N} \ge \abs{\norm{\Phi_a}_{\calH^N} - \norm{\tilde{\Phi}_b}_{\calH^N}}
\end{equation*}
Taking the limit as $\norm{\tilde{\Phi}_b}_{\calH^N} \rightarrow 0$ yields a positive lower bound:
\begin{equation*}
    \lim_{E(\Phi_b) \rightarrow 0^+} D(\Phi_a, \tilde{\Phi}_b) = \norm{\Phi_a}_{\calH^N} \ge \frac{1}{\sqrt{N}} E(\Phi_a) > 0
\end{equation*}
Since the numerator is bounded below by a positive constant, dividing by $E(\Phi_b) \to 0^+$ strictly diverges to $\infty$.
\end{proof}

\begin{corollary}[Locality of the Projection Error]
\label{cor:locality}
For a structural reduction modifying a localized subset of neurons $\calS \subset \{1, \dots, N\}$ (e.g., $|\calS|=1$ for pruning, $|\calS|=2$ for merging), the discrete projection bound evaluates over the perturbed subspace:
\begin{equation}
    \calJ_{\text{bound}}(\Phi_a, \Phi_b) = c(\Phi_a) \frac{\sqrt{\sum_{k \in \calS} \norm{f_k^{(a)} - \tilde{f}_k^{(b)}}_{\calH}^2}}{E(\Phi_b)}
\end{equation}
where $\tilde{f}_k^{(b)}$ are the components of the pre-deletion target state $\tilde{\Phi}_b \in \calH^N$.
\end{corollary}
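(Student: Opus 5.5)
The plan is to start from the definition of $\calJ_{\text{bound}}$ in Theorem~\ref{thm:discrete_subspace}, $\calJ_{\text{bound}}(\Phi_a,\Phi_b)=c(\Phi_a)\,D(\Phi_a,\tilde{\Phi}_b)/E(\Phi_b)$. The prefactor $c(\Phi_a)$ and the denominator $E(\Phi_b)$ already appear verbatim in the claimed expression, so the only work is to rewrite the numerator $D$. Everything therefore reduces to showing that
\begin{equation*}
D(\Phi_a,\tilde{\Phi}_b)=\Big(\sum_{k\in\calS}\normshort{f_k^{(a)}-\tilde f_k^{(b)}}_{\calH}^2\Big)^{1/2}.
\end{equation*}

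First I will recall that $\calH^N$ carries the product (direct-sum) Hilbert structure used throughout Section~\ref{sec:layer_transition_costs}. Under this structure,
\begin{equation*}
\normshort{\Phi}_{\calH^N}^2=\sum_{k=1}^N\normshort{f_k}_{\calH}^2,
\end{equation*}
which matches the explicit form of $D$ given in the main text. Applying this to the difference $\Phi_a-\tilde{\Phi}_b$, whose $k$-th component is $f_k^{(a)}-\tilde f_k^{(b)}$, gives the full sum over $k=1,\dots,N$.

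Second, I will use the defining property of a localized reduction. The pre-deletion target $\tilde{\Phi}_b$ agrees with $\Phi_a$ in every coordinate outside $\calS$. For pruning, only coordinate $i$ is sent to $\zeroVect$. For merging, only coordinates $i,j$ are replaced by $f_p$. All other neurons are static along the straight-line path. Hence every term with $k\notin\calS$ vanishes, the sum collapses onto $\calS$, and taking square roots gives the claimed numerator.

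There is no real obstacle. The one point needing care is to make precise that "modifying only $\calS$" means $\tilde f_k^{(b)}=f_k^{(a)}$ for $k\notin\calS$. This holds for the pre-deletion target $\tilde{\Phi}_b\in\calH^N$, not for the dimension-reduced $\Phi_b$, so the coordinate-wise comparison is well defined. It is also worth confirming that the denominator $E(\Phi_b)$ is left untouched: unlike $D$, it depends on all $N-1$ surviving capacities, so locality applies to the numerator only.
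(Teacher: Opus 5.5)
Your proposal is correct and follows essentially the same route as the paper: you expand $D(\Phi_a,\tilde{\Phi}_b)$ as the full sum over all $N$ coordinates of $\calH^N$ and note that every term with $k\notin\calS$ vanishes because $\tilde f_k^{(b)}=f_k^{(a)}$ there. The paper also remarks that $E(\Phi_b)$, although a global quantity, is obtained in $\calO(1)$ as $E(\Phi_a)-\sum_{k\in\calS}\normshort{f_k^{(a)}}_{\calH}+\sum_{m\in\calS_{\text{new}}}\normshort{f_m^{(b)}}_{\calH}$ from a cached $E(\Phi_a)$, so your closing comment that locality applies only to the numerator understates the computational point, though this does not affect the stated identity.
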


\begin{proof}
By Theorem~\ref{thm:discrete_subspace}, the Euclidean distance expands as:
\begin{equation*}
    D(\Phi_a, \tilde{\Phi}_b) = \sqrt{\sum_{k \in \calS} \norm{f_k^{(a)} - \tilde{f}_k^{(b)}}_{\calH}^2 + \sum_{j \notin \calS} \norm{f_j^{(a)} - \tilde{f}_j^{(b)}}_{\calH}^2}
\end{equation*}
For any unperturbed coordinate $j \notin \calS$, $f_j^{(a)} = \tilde{f}_j^{(b)}$, meaning the second summation vanishes. 

The terminal capacity $E(\Phi_b)$ is also computed locally via $E(\Phi_b) = E(\Phi_a) - \sum_{k \in \calS} \norm{f_k^{(a)}}_{\calH} + \sum_{m \in \calS_{\text{new}}} \norm{f_m^{(b)}}_{\calH}$, where $\calS_{\text{new}}$ represents newly generated components (e.g., $\{f_p\}$). Because $|\calS|$ and $|\calS_{\text{new}}|$ depend solely on the localized operation, evaluating $\calJ_{\text{bound}}$ requires $\calO(1)$ operations, given $E(\Phi_a)$ is cached.
\end{proof}

\subsection{Generating Parent Neuron}

\begin{theorem}[Exact Optimal Parent Direction]
\label{thm:optimal_parent}
For a fixed magnitude $s > 0$, the inner optimization for merging neurons $f_i, f_j \in \calN$:
\begin{equation}
	\label{eq:inner_opt_exact}
    \psi^* = \argmin_{\substack{\psi \in \calN \\ \norm{\psi}_{\calH} = 1}} \frac{\sqrt{\norm{s \psi - f_i}_{\calH}^2 + \norm{s \psi - f_j}_{\calH}^2}}{E_a - \norm{f_i}_{\calH} - \norm{f_j}_{\calH} + s} 
\end{equation}
is minimized by the parameterized function:
\begin{equation}
\psi^* = \frac{\Psi(\uVect^* \cdot \tilde{\xVect})}{\sqrt{K(\uVect^*,\uVect^*)}} \vVect^*
\end{equation}
where the unit vectors $\uVect^* \in \R^{n+1}$ and $\vVect^* \in \R^c$ are given by:
\begin{equation}
\label{eq:opt_u_exact}
\uVect^* = \argmax_{\substack{\norm{\uVect}_2=1 \\ K(\uVect, \uVect) > 0}} \frac{\norm{\sum_{k \in \{i,j\}} K(\uVect, \tilde{\wVect}^k_{\text{in}}) \wVect^k_{\text{out}}}_2}{\sqrt{K(\uVect,\uVect)}}
\end{equation}
\begin{equation}
\vVect^* = \frac{\sum_{k \in \{i,j\}} K(\uVect^*, \tilde{\wVect}^k_{\text{in}}) \wVect^k_{\text{out}}}{\norm{\sum_{k \in \{i,j\}} K(\uVect^*, \tilde{\wVect}^k_{\text{in}}) \wVect^k_{\text{out}}}_2}
\end{equation}
\end{theorem}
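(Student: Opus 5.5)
Since $s$ is fixed, the denominator $E_a - \norm{f_i}_{\calH} - \norm{f_j}_{\calH} + s\norm{\psi}_{\calH} = E_{\text{rem}} + s$ does not depend on $\psi$, because the unit-norm constraint pins $\norm{s\psi}_{\calH}=s$. We may therefore drop it, along with the square root (monotone), and minimize the squared numerator. Expanding it with the Hilbert inner product gives
\begin{equation*}
\norm{s\psi - f_i}_{\calH}^2 + \norm{s\psi - f_j}_{\calH}^2 = 2s^2 - 2s\inner{\psi}{f_i+f_j}_{\calH} + \norm{f_i}_{\calH}^2 + \norm{f_j}_{\calH}^2 .
\end{equation*}
With $s>0$ fixed, the problem reduces to maximizing the alignment $\inner{\psi}{f_i+f_j}_{\calH}$ over unit-norm realizable $\psi \in \calN$.

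Next I parameterize the feasible set. Any $\psi\in\calN$ has the form $\wVect_{\text{out}}\Psi(\tilde{\wVect}_{\text{in}}\cdot\tilde{\xVect})$. If $\tilde{\wVect}_{\text{in}}=\zeroVect$, PH-1 gives $\Psi(0)=0$, so $\psi=0$, which violates the norm constraint. Otherwise write $\tilde{\wVect}_{\text{in}} = r\uVect$ with $\norm{\uVect}_2=1$ and $r>0$, and absorb $r$ into the output weights using positive homogeneity. Then $\norm{\psi}_{\calH}^2 = \norm{\wVect_{\text{out}}}_2^2 K(\uVect,\uVect)$, so the constraint forces $K(\uVect,\uVect)>0$ and $\psi = \Psi(\uVect\cdot\tilde{\xVect})\vVect/\sqrt{K(\uVect,\uVect)}$ with $\vVect$ a unit vector. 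This is exactly the form (\ref{eq:psi_valid_exact}).

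Now evaluate the objective with the kernel identity $\inner{g\otimes \wVect}{g'\otimes\wVect'}_{\calH} = \inner{g}{g'}_{\calH_{\text{in}}}\inner{\wVect}{\wVect'}$:
\begin{equation*}
\inner{\psi}{f_i+f_j}_{\calH} = \frac{1}{\sqrt{K(\uVect,\uVect)}}\Big\langle \vVect, \textstyle\sum_{k\in\{i,j\}} K(\uVect,\tilde{\wVect}^k_{\text{in}})\wVect^k_{\text{out}}\Big\rangle .
\end{equation*}
Here $K(\uVect,\tilde{\wVect}^k_{\text{in}})$ denotes $\E[\Psi(\uVect\cdot\tilde{\xVect})\Psi(\tilde{\wVect}^k_{\text{in}}\cdot\tilde{\xVect})]$. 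For fixed $\uVect$, Cauchy--Schwarz in $\R^c$ shows that the maximizing unit $\vVect$ is the normalized sum, which gives $\vVect^*$. The maximal value is the Euclidean norm of the sum divided by $\sqrt{K(\uVect,\uVect)}$. Maximizing this reduced objective over $\uVect$ yields (\ref{eq:opt_u_exact}). Plugging these back in recovers $\psi^*$.

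The main obstacle is to handle the outer maximization and its degenerate cases rigorously. First, the sum may vanish, in which case $\vVect^*$ is undefined and any $\vVect$ is optimal. Second, the supremum over $\uVect$ may fail to be attained, because $\uVect\mapsto K(\uVect,\cdot)$ is continuous on the sphere but the constraint $K(\uVect,\uVect)>0$ defines an open set. I would first argue continuity of $K$ in $\uVect$ under the Gaussian surrogate. I would then note that the ratio stays bounded near the boundary: Cauchy--Schwarz gives $|K(\uVect,\wVect)|\le\sqrt{K(\uVect,\uVect)K(\wVect,\wVect)}$, so the ratio is at most $\sum_k \norm{f_k}_{\calH}$. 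If attainment is in doubt, I would state the result as "$\psi^*$ attains the infimum whenever the argmax exists." The argmax is defined up to positive rescaling, which is harmless here.
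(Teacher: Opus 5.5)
Your proposal is correct and follows the paper's proof essentially step for step: you drop the $\psi$-independent denominator, expand the squared numerator to reduce to maximizing $\langle \psi, f_i+f_j\rangle_{\calH}$, parameterize unit-norm realizable $\psi$ via positive homogeneity, and apply Cauchy--Schwarz in $\vVect$ before optimizing over $\uVect$. The one point to make explicit is that discarding the denominator is justified because $E_{\text{rem}}+s>0$ (since $E_{\text{rem}}\ge 0$ and $s>0$), which the paper states; your handling of the degenerate cases ($\tilde{\wVect}_{\text{in}}=\zeroVect$, a vanishing sum, and possible non-attainment of the supremum over $\uVect$) is more careful than the paper's.
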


\begin{proof}
Since $s>0$ and the denominator of (\ref{eq:inner_opt_exact}) is positive and independent of $\psi$ (due to $\norm{\psi}_{\calH} = 1$), minimizing the objective is equivalent to minimizing the squared numerator:
\begin{align}
    \norm{s \psi - f_i}_{\calH}^2 + \norm{s \psi - f_j}_{\calH}^2 &= 2 \norm{s \psi}_{\calH}^2 + \norm{f_i}_{\calH}^2 + \norm{f_j}_{\calH}^2 - 2 \inner{s \psi}{f_i + f_j}_{\calH} \nonumber \\
    \label{eq:expanded_numerator}
    &= 2 s^2 + \norm{f_i}_{\calH}^2 + \norm{f_j}_{\calH}^2 - 2 s \inner{\psi}{f_i + f_j}_{\calH}
\end{align}
Because $s > 0$ and $\norm{f_i}_{\calH}, \norm{f_j}_{\calH}$ are constant with respect to $\psi$, this reduces to maximizing the inner product:
\begin{equation}
\label{eq:alignment_reduced}
\psi^* = \argmax_{\substack{\psi \in \calN \\ \norm{\psi}_{\calH} = 1}} \inner{\psi}{f_i + f_j}_{\calH}
\end{equation}

To enforce $\psi \in \calN$, we decompose it into unit directions $\uVect \in \R^{n+1}, \vVect \in \R^c$ and magnitudes $\alpha, \beta > 0$. By the PH-1 property of $\Psi$, $\psi = \beta \vVect \Psi(\alpha \uVect \cdot \tilde{\xVect}) = \alpha \beta \vVect \Psi(\uVect \cdot \tilde{\xVect})$. 

The unit-norm constraint $\norm{\psi}_{\calH} = 1$ requires:
\begin{equation*}
    1 = (\alpha \beta)^2 \norm{\vVect}_2^2 \E_{\xVect \sim P_\calX} \left[ \Psi^2( \uVect \cdot \tilde{\xVect}) \right] = (\alpha \beta)^2 K(\uVect,\uVect) \implies \alpha \beta = \frac{1}{\sqrt{K(\uVect,\uVect)}}
\end{equation*}
Assuming $K(\uVect, \uVect) > 0$ (otherwise $\psi = \zeroVect$), substituting $\alpha \beta$ gives:
\begin{equation*}
    \psi = \frac{\Psi(\uVect \cdot \tilde{\xVect})}{\sqrt{K(\uVect,\uVect)}} \vVect
\end{equation*}

Substituting this into (\ref{eq:alignment_reduced}) and expanding the tensor inner product $\inner{g \otimes \vVect}{h \otimes \wVect}_\calH = \inner{g}{h}_{\calH_{\text{in}}} \inner{\vVect}{\wVect}_2$ yields:
\begin{align}
    \inner{\frac{\Psi(\uVect \cdot \tilde{\xVect})}{\sqrt{K(\uVect,\uVect)}} \vVect}{f_i + f_j}_\calH &= \sum_{k \in \{i,j\}} \inner{\frac{\Psi(\uVect \cdot \tilde{\xVect})}{\sqrt{K(\uVect,\uVect)}} \vVect}{\wVect^k_{\text{out}} \Psi(\tilde{\wVect}^k_{\text{in}} \cdot \tilde{\xVect}) }_\calH \nonumber \\
    &= \frac{1}{\sqrt{K(\uVect,\uVect)}} \sum_{k \in \{i,j\}} \E_{\xVect \sim P_\calX} \left[ \Psi(\uVect \cdot \tilde{\xVect}) \Psi(\tilde{\wVect}^k_{\text{in}} \cdot \tilde{\xVect}) \right] \inner{\vVect}{\wVect^k_{\text{out}}}_2 \nonumber \\
    \label{eq:alignment_kernelized}
    &= \inner{\vVect}{\frac{1}{\sqrt{K(\uVect,\uVect)}} \sum_{k \in \{i,j\}} K(\uVect, \tilde{\wVect}^k_{\text{in}}) \wVect^k_{\text{out}}}_2
\end{align}

Let $\zVect(\uVect) = \sum_{k \in \{i,j\}} K(\uVect, \tilde{\wVect}^k_{\text{in}}) \wVect^k_{\text{out}}$. For a fixed $\uVect$, maximizing $\inner{\vVect}{\zVect(\uVect)}_2$ subject to $\norm{\vVect}_2 = 1$ requires $\vVect$ to align with $\zVect(\uVect)$ via the Cauchy-Schwarz inequality:
\begin{equation*}
    \vVect^*(\uVect) = \frac{\zVect(\uVect)}{\norm{\zVect(\uVect)}_2} 
\end{equation*}
Substituting $\vVect^*(\uVect)$ back into (\ref{eq:alignment_kernelized}) simplifies the inner product to $\norm{\zVect(\uVect)}_2 / \sqrt{K(\uVect,\uVect)}$, leaving $\uVect^*$ as the sole maximizer in Equation~(\ref{eq:opt_u_exact}).
\end{proof}

\begin{proposition}[Separability and Homogeneity of PH-1 Kernels]
\label{prop:separability_kernel}
Let $\Psi: \R \to \R$ be a PH-1 activation function $\forall c>0, \Psi(cz) = c\Psi(z)$. Let the functional kernel over an isotropic probability distribution $P_{\calV}$ be $K(\xVect, \yVect) = \E_{\vVect \sim P_{\calV}} [\Psi(\xVect^T \vVect) \Psi(\yVect^T \vVect)]$. For any unit vector $\xVect$ $\norm{\xVect}_2=1$ and any $\yVect \neq \zeroVect$, the kernel factorizes as:
\begin{equation}
    K(\xVect, \yVect) = \norm{\yVect}_2\, k(\rho) \quad \text{where} \quad \rho = \inner{\xVect}{\frac{\yVect}{\norm{\yVect}_2}}_2
\end{equation}
\end{proposition}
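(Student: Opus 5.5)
The plan is to combine positive homogeneity in the second argument with rotational invariance of $P_{\calV}$. First I factor out the norm of $\yVect$. Write $\hat{\yVect} \triangleq \yVect/\norm{\yVect}_2$. Since $\norm{\yVect}_2 > 0$, the PH-1 property gives $\Psi(\yVect^T\vVect) = \norm{\yVect}_2\,\Psi(\hat{\yVect}^T\vVect)$ pointwise in $\vVect$. Linearity of expectation then gives $K(\xVect,\yVect) = \norm{\yVect}_2\, K(\xVect,\hat{\yVect})$. This reduces the claim to showing that $K(\xVect,\hat{\yVect})$, for two unit vectors, depends only on $\rho = \inner{\xVect}{\hat{\yVect}}_2$. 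That quantity will define $k(\rho)$.

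Second, I use isotropy, which I read as $P_{\calV}$ being invariant under every orthogonal map $\Rmat$, i.e. $\Rmat\vVect \stackrel{d}{=} \vVect$. Take two pairs of unit vectors $(\xVect,\hat{\yVect})$ and $(\xVect',\hat{\yVect}')$ with the same inner product $\rho$. Their Gram matrices coincide, so there is an orthogonal $\Rmat$ with $\Rmat\xVect = \xVect'$ and $\Rmat\hat{\yVect} = \hat{\yVect}'$. One construction is to map orthonormal bases of their spans to each other and extend to the orthogonal complements, handling $|\rho|=1$, where the span is one-dimensional, separately. Then
\[
K(\xVect',\hat{\yVect}') = \E[\Psi(\xVect^T\Rmat^T\vVect)\Psi(\hat{\yVect}^T\Rmat^T\vVect)] = \E[\Psi(\xVect^T\vVect)\Psi(\hat{\yVect}^T\vVect)] = K(\xVect,\hat{\yVect}),
\]
where the middle equality uses $\Rmat^T\vVect \stackrel{d}{=}\vVect$. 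So $k(\rho) \triangleq K(\xVect,\hat{\yVect})$ is well defined on $[-1,1]$. An explicit representative is $\xVect = \eVect_1$ and $\hat{\yVect} = \rho\,\eVect_1 + \sqrt{1-\rho^2}\,\eVect_2$, which shows $k$ is a two-dimensional integral over the marginal of $(v_1,v_2)$.

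Third, I check integrability, so that splitting off $\norm{\yVect}_2$ and changing variables are legitimate. It suffices that $\E[\Psi(\xVect^T\vVect)^2] < \infty$ for unit $\xVect$, and Cauchy--Schwarz then covers the cross term. For PH-1 $\Psi$ we have $|\Psi(z)| \le \max(|\Psi(1)|,|\Psi(-1)|)\,|z|$, so this follows from a finite second moment of $P_{\calV}$. I will state that moment assumption explicitly.

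The main obstacle is reconciling this with the paper's actual setting. There, the inputs are augmented $\tilde{\xVect} = [\xVect,1]$ under a Gaussian $\calN(\hat{\muVect}_x,\hat{\Sigma}_x)$, which is not isotropic. I will say that the proposition assumes isotropy directly, and that applying it to HOPE requires whitening: absorb $\hat{\Sigma}_x^{1/2}$ and the mean into the weights, which turns the weight inner products into effective ones as in $\rho_{\text{eff}}$. I will also note that the affine constant coordinate breaks exact isotropy, which is why the main text calls the factorization an approximation. The proof itself stays at the level of the stated hypothesis.
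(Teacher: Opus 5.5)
Your proposal is correct and follows essentially the same route as the paper: pull $\norm{\yVect}_2$ out via PH-1, then use orthogonal invariance of $P_{\calV}$ to reduce $K(\xVect,\hat{\yVect})$ to the canonical pair $\eVect_1$, $\rho\,\eVect_1+\sqrt{1-\rho^2}\,\eVect_2$ (the paper rotates directly to this pair, while you first rotate between two arbitrary pairs with the same $\rho$). Your explicit finite-second-moment assumption, which justifies the linearity-of-expectation step, and your separate treatment of $|\rho|=1$ are small additions that the paper leaves implicit.
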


\begin{proof}
Let $\hat{\yVect} = \yVect / \norm{\yVect}_2$. By the PH-1 property and linearity of expectation, the positive magnitude $\norm{\yVect}_2$ factors out:
\begin{equation*}
    K(\xVect, \yVect) = \E_{\vVect \sim P_{\calV}} [\Psi(\xVect^T \vVect) \Psi(\norm{\yVect}_2 \hat{\yVect}^T \vVect)] = \norm{\yVect}_2 \E_{\vVect \sim P_{\calV}} [\Psi(\xVect^T \vVect) \Psi(\hat{\yVect}^T \vVect)]
\end{equation*}

Since $P_{\calV}$ is isotropic, it is invariant under orthogonal transformations. Let $\Rmat$ be an orthogonal matrix such that $\Rmat \xVect = \eVect_1$ and $\Rmat \hat{\yVect} = \rho \eVect_1 + \sqrt{1-\rho^2} \eVect_2$, where $\rho = \inner{\xVect}{\hat{\yVect}}_2$. 

Applying the change of variables $\uVect = \Rmat\vVect$, we have $\uVect \sim P_{\calV}$. Substituting $\xVect^T \vVect = \xVect^T \Rmat^T \uVect = (\Rmat\xVect)^T \uVect = \eVect_1^T \uVect = u_1$ and similarly $\hat{\yVect}^T \vVect = \hat{\yVect}^T \Rmat^T \uVect = (\Rmat\hat{\yVect})^T \uVect = \rho u_1 + \sqrt{1-\rho^2} u_2$ yields:
\begin{equation*}
    \E_{\vVect \sim P_{\calV}} [\Psi(\xVect^T \vVect) \Psi(\hat{\yVect}^T \vVect)] = \E_{\uVect \sim P_{\calV}} [\Psi(u_1) \Psi(\rho u_1 + \sqrt{1-\rho^2} u_2)] 
\end{equation*}
Because this expectation depends only on $\rho$, we can define it as $k(\rho)$, establishing the separable form $K(\xVect, \yVect) = \norm{\yVect}_2 k(\rho)$.
\end{proof}

\begin{proposition}[Boundary Identity]
\label{prop:boundary_identity}
Let $\Psi$ be a PH-1 function. Under a standard bivariate Gaussian reference distribution with correlation $\rho$, the induced angular kernel $k(\rho) = \E[\Psi(X)\Psi(Y)]$ satisfies $k(1) = k'(1)$.
\end{proposition}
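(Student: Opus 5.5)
The plan is to compute both sides of $k(1)=k'(1)$ explicitly. First, use positive homogeneity to pin down $\Psi$. Setting $c=0$ gives $\Psi(0)=0$. For $z>0$ we have $\Psi(z)=z\Psi(1)$, and for $z<0$ we have $\Psi(z)=|z|\Psi(-1)$. So $\Psi(z)=a z$ for $z\ge 0$ and $\Psi(z)=b z$ for $z<0$, with $a=\Psi(1)$ and $b=-\Psi(-1)$. In particular $\Psi$ is Lipschitz and piecewise linear with a single knot at the origin, as the main text anticipates. Its a.e. derivative $\Psi'(z)=a\,\mathbf{1}\{z>0\}+b\,\mathbf{1}\{z<0\}$ is homogeneous of degree $0$, and Euler's relation $\Psi(z)=z\Psi'(z)$ holds for $z\neq 0$.

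Next, realize the reference distribution as $X\sim\calN(0,1)$ and $Y=\rho X+\sqrt{1-\rho^2}\,Z$, with $Z\sim\calN(0,1)$ independent of $X$, so that $k(\rho)=\E[\Psi(X)\Psi(Y)]$. At $\rho=1$ we get $Y=X$, hence
\[
k(1)=\E[\Psi(X)^2]=a^2\E[X^2\mathbf{1}\{X>0\}]+b^2\E[X^2\mathbf{1}\{X<0\}]=\tfrac{a^2+b^2}{2},
\]
using the symmetry of $X$ and $\E[X^2]=1$.

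For the derivative, I would invoke Price's theorem (equivalently, Gaussian integration by parts applied twice) for Lipschitz $\Psi$:
\[
k'(\rho)=\E[\Psi'(X)\Psi'(Y)] \quad \text{for } \rho\in(-1,1).
\]
One way to justify this is to differentiate $\E[\Psi(X)\Psi(\rho X+\sqrt{1-\rho^2}Z)]$ under the integral, which dominated convergence permits because $\Psi$ is Lipschitz. This gives $\E\big[\Psi(X)\Psi'(Y)\big(X-\tfrac{\rho}{\sqrt{1-\rho^2}}Z\big)\big]$. Applying Stein's lemma in $X$ and in $Z$ then collapses this expression to $\E[\Psi'(X)\Psi'(Y)]$; the terms involving $\Psi''$ cancel, which is easiest to verify by first mollifying $\Psi$ and then passing to the limit.

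Then let $\rho\to1^-$. Since $Y\to X$ almost surely and $X\neq 0$ almost surely, $\Psi'(Y)\to\Psi'(X)$ almost surely, and bounded convergence gives
\[
k'(1)=\lim_{\rho\to1^-}k'(\rho)=\E[\Psi'(X)^2]=a^2P(X>0)+b^2P(X<0)=\tfrac{a^2+b^2}{2}=k(1).
\]
Conceptually, the identity reflects $\E[X^2h(X)]=\E[h(X)]$ for any $h$ depending only on $\mathrm{sign}(X)$, combined with Euler's relation $\Psi(X)^2=X^2\Psi'(X)^2$.

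The main obstacle is making the derivative at the boundary point $\rho=1$ rigorous. Since $\sqrt{1-\rho^2}$ is not differentiable at $\rho=1$, the derivative $k'(1)$ must be read as a one-sided derivative. I would argue that $k$ is continuous on $[-1,1]$ and $C^1$ on $(-1,1)$, with $k'$ extending continuously to $1$; the mean value theorem then identifies the left derivative at $1$ with this limit. Handling the kink of $\Psi$ at $0$ in Price's theorem is the secondary technical point. It is resolved by mollification, since the kink carries zero Gaussian measure.
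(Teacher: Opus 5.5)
Your proposal is correct and follows essentially the same route as the paper: write $\Psi$ as a two-slope piecewise-linear function, obtain $k(1)=\E[\Psi(X)^2]$, use Price's theorem to get $k'(\rho)=\E[\Psi'(X)\Psi'(Y)]$, and evaluate both sides as $\tfrac{1}{2}(a^2+b^2)$ using the symmetry of the standard normal. Your treatment of $k'(1)$ as a left derivative, computed as $\lim_{\rho\to1^-}k'(\rho)$ via bounded convergence and the mean value theorem, makes rigorous a step the paper skips, since the paper simply plugs the degenerate point $\rho=1$ into Price's formula.
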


\begin{proof}
By the PH-1 property, $\Psi(x) = C_+ x \mathbb{I}_{x>0} + C_- x \mathbb{I}_{x<0}$ for constants $C_+ = \Psi(1)$ and $C_- = -\Psi(-1)$. Its derivative (defined almost everywhere) is $\Psi'(x) = C_+ \mathbb{I}_{x>0} + C_- \mathbb{I}_{x<0}$.

Let $(X, Y)$ be standard bivariate Gaussian with correlation $\rho$. At $\rho=1$, $X=Y$ almost surely, giving $k(1) = \E[\Psi(X)^2]$. By Price's Theorem $\frac{\partial}{\partial \rho}\E[f(X)g(Y)] = \E[f'(X)g'(Y)]$, we have $k'(\rho) = \E[\Psi'(X)\Psi'(Y)]$, which at $\rho=1$ gives $k'(1) = \E[\Psi'(X)^2]$.

Evaluating these expectations:
\begin{align*}
    k(1) &= \E\big[ (C_+ X \mathbb{I}_{X>0} + C_- X \mathbb{I}_{X<0})^2 \big] = C_+^2 \E[X^2 \mathbb{I}_{X>0}] + C_-^2 \E[X^2 \mathbb{I}_{X<0}] \\
    k'(1) &= \E\big[ (C_+ \mathbb{I}_{X>0} + C_- \mathbb{I}_{X<0})^2 \big] = C_+^2 \E[\mathbb{I}_{X>0}] + C_-^2 \E[\mathbb{I}_{X<0}]
\end{align*}
By the symmetry of the standard normal distribution, $\E[X^2 \mathbb{I}_{X>0}] = \E[X^2 \mathbb{I}_{X<0}] = \frac{1}{2} \E[X^2] = \frac{1}{2}$ and $\E[\mathbb{I}_{X>0}] = \E[\mathbb{I}_{X<0}] = \frac{1}{2}$. Substituting these yields $k(1) = \frac{1}{2}(C_+^2 + C_-^2)$ and $k'(1) = \frac{1}{2}(C_+^2 + C_-^2)$. Thus, $k(1) = k'(1)$.
\end{proof}

\begin{proposition}[Kernel Positivity at Boundary]
\label{prop:kernel_positivity}
For any non-trivial PH-1 function $\Psi(x) \not\equiv 0$, the induced angular kernel satisfies $k(1) > 0$.
\end{proposition}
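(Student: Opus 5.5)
I would reuse the explicit representation from the proof of Proposition~\ref{prop:boundary_identity}. By positive homogeneity, every PH-1 function has the form
\[
\Psi(x) = C_+ x \mathbb{I}_{x>0} + C_- x \mathbb{I}_{x<0},
\qquad C_+ = \Psi(1), \quad C_- = -\Psi(-1).
\]
This representation should be justified first. For $x>0$, set $c=x$ in $\Psi(cz)=c\Psi(z)$ with $z=1$. For $x<0$, set $c=-x$ with $z=-1$. For $x=0$, taking $c=0$ gives $\Psi(0)=0$.

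The non-triviality hypothesis $\Psi\not\equiv 0$ then says exactly that $(C_+,C_-)\neq(0,0)$. If both constants vanished, the display would force $\Psi(x)=0$ for every real $x$.

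Next I compute $k(1)$. At $\rho=1$ the standard bivariate Gaussian pair satisfies $Y=X$ almost surely, so $k(1)=\E[\Psi(X)^2]$ with $X\sim\calN(0,1)$. The indicator terms have disjoint supports, so the cross term drops out. Using $\E[X^2\mathbb{I}_{X>0}]=\E[X^2\mathbb{I}_{X<0}]=\tfrac12$, this gives
\[
k(1)=\tfrac12\,(C_+^2+C_-^2).
\]
Since $(C_+,C_-)\neq(0,0)$, we get $k(1)>0$.

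The only delicate point is making sure the reference distribution matches the one used in Proposition~\ref{prop:separability_kernel}. There, $k(1)=\E_{\uVect\sim P_\calV}[\Psi(u_1)^2]$ for an isotropic $P_\calV$, rather than a standard Gaussian. To cover that setting I would argue directly that $u_1$ is not almost surely zero for a nondegenerate isotropic law. By symmetry under $\uVect\mapsto-\uVect$, $u_1$ has positive probability of being strictly positive and also of being strictly negative. On whichever side the corresponding constant $C_\pm$ is nonzero, $\Psi(u_1)^2>0$ holds with positive probability. Hence the expectation is strictly positive. In the Gaussian case this reduces to the closed form above, which I would present as the main computation.
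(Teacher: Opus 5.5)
Your proposal is correct and takes the same approach as the paper: its proof reads off $k(1)=\tfrac12(C_+^2+C_-^2)$ from Proposition~\ref{prop:boundary_identity} and observes that $\Psi\not\equiv 0$ forces $(C_+,C_-)\neq(0,0)$. Your explicit justification of the representation $\Psi(x)=C_+x\mathbb{I}_{x>0}+C_-x\mathbb{I}_{x<0}$ (including $\Psi(0)=0$ from $c=0$) and your remark extending the result to a nondegenerate isotropic $P_\calV$ are sound additions that the paper leaves implicit.
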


\begin{proof}
As derived in Proposition~\ref{prop:boundary_identity}, $k(1) = \frac{1}{2}(C_+^2 + C_-^2)$. Since $\Psi$ is non-trivial, at least one of $C_+$ or $C_-$ is non-zero. Consequently, $C_+^2 + C_-^2 > 0$, implying $k(1) > 0$.
\end{proof}

\subsection{Block Eviction}

\begin{proposition}
\label{prop:macro_capacity}
Let $\Omega = (\Phi_1, \dots, \Phi_n)$ be a macroscopic state, where each $\Phi_i \in \calH_i^{N_i}$ is a layer state consisting of rank-1 operators $f_{i,j} \in \calH_i$. If the macroscopic capacity $E(\Omega)$ is a symmetric, separable, and homogeneous function of the capacities of its constituent operators, and is invariant under arbitrary operator partitioning, it is uniquely defined as the sum of the constituent Hilbert-Schmidt norms:
\begin{equation}
E(\Omega) = \sum_{i=1}^n E(\Phi_i) = \sum_{i=1}^n \sum_{j=1}^{N_i} \norm{f_{i,j}}_{\calH_i} \,.
\end{equation}
\end{proposition}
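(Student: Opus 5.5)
The plan is to reduce the macroscopic statement to Lemma~\ref{prop:unique_l1} by flattening the macroscopic state into a single list of rank-1 operators. Concretely, I would encode $\Omega$ as the concatenated tuple $(f_{1,1},\dots,f_{1,N_1},f_{2,1},\dots,f_{n,N_n})$ of total length $M=\sum_i N_i$. The hypotheses then translate into the three conditions of the lemma.

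\emph{Symmetry and separability} means $E(\Omega)=h\big(\sum_{i,j} g(\norm{f_{i,j}}_{\calH_i})\big)$ for a continuous, strictly monotone $g$. Here the sum runs over all constituent operators irrespective of which layer they belong to. The \emph{identity} condition $E((f))=\norm{f}_{\calH_i}$ for a single operator I will take as the normalization implied by homogeneity plus agreement with the single-neuron capacity. \emph{Partition invariance} gives $E((f))=E((f/N,\dots,f/N))$.

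One subtlety is that the operators live in different spaces $\calH_i$. The lemma's argument, however, only uses the scalar norms $x=\norm{f}\ge 0$. So I would rerun its proof verbatim on the norms rather than cite it abstractly.

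The steps, in order, are as follows.
\begin{enumerate}
\item From the identity condition, deduce $h=g^{-1}$ on $\mathrm{Im}(g)$.
\item From partition invariance applied to any single operator in any $\calH_i$, obtain the Cauchy-type relation $g(x)=Ng(x/N)$ for all $x\ge 0$ and all $N\ge 1$.
\item Exactly as in Lemma~\ref{prop:unique_l1}, conclude $g(0)=0$, then rational homogeneity, then by continuity $g(x)=cx$ with $c\neq 0$.
\item Substitute back to get $E(\Omega)=\sum_{i,j}\norm{f_{i,j}}_{\calH_i}$.
\item Regroup the double sum by layer and invoke Lemma~\ref{prop:unique_l1} for each $\Phi_i$, giving $\sum_j\norm{f_{i,j}}=E(\Phi_i)$. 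This yields the stated chain of equalities.
\end{enumerate}

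The main obstacle is interpretive rather than computational. The proposition states the hypotheses informally: ``homogeneous'' rather than an explicit identity condition, and separability with a single $g$ shared across heterogeneous spaces. I would make precise that separability uses one common pair $(h,g)$ for all operators regardless of layer. Without this, different $g_i$ per layer would still each be forced linear, but with possibly different constants $c_i$, and the sum would only be a weighted sum.

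To close that gap, I would try to normalize each $c_i$ away using the identity condition applied per layer. I would also use homogeneity, $E(k\Omega)=kE(\Omega)$, to pin down the overall scale. If a shared $g$ is assumed, as the phrasing ``a symmetric \ldots\ function of the capacities'' suggests, the argument above goes through directly.
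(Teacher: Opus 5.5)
Your argument is correct, but it reruns the Cauchy-equation argument of Lemma~\ref{prop:unique_l1} rather than following the paper's proof of this proposition.

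The paper never works with a general separable form $h(\sum g)$. It asserts that symmetry, separability and homogeneity already force $E(\Omega)=\bigl(\sum_{i,j}\norm{f_{i,j}}_{\calH_i}^p\bigr)^{1/p}$ for some $p>0$. It then uses partition invariance only to match exponents: $M^{(1-p)/p}=1\Rightarrow p=1$.

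You instead flatten $\Omega$ and let partition invariance act as the Cauchy equation $g(x)=Ng(x/N)$ over all continuous strictly monotone $g$. This avoids relying on the characterization of homogeneous separable functionals as $L_p$ norms, which the paper asserts without proof. Homogeneity enters your argument only through the normalization $E((f))=\norm{f}$. The paper builds in the same normalization, since homogeneity alone only gives $E((f))=C\norm{f}$.

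What the $L_p$ ansatz buys in return is that $h=g^{-1}$ is a global bijection of $[0,\infty)$. Your route inherits the Lemma's tacit step that $Ng(x/N)\in\mathrm{Im}(g)$, because $h$ is only pinned down on $\mathrm{Im}(g)$.

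On layer-dependent $g_i$, your fallback via the per-layer identity condition works only when the constants $c_i$ share a sign. For example, take $g_1(x)=x$, $g_2(x)=-x$, $h(y)=\abs{y}$. Identity, homogeneity and partition invariance all hold, yet $E=\abs{\norm{f_1}-\norm{f_2}}$ on a two-layer state. What forces a common $g$ is symmetry across all constituent operators. That is your main line, and it is exactly what the paper assumes silently by using a single $p$.
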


\begin{proof}
Because $E(\Omega)$ is a symmetric, separable, and homogeneous functional of the individual capacities, it must take the form of an $L_p$ norm:
\begin{equation}
E(\Omega) = \left( \sum_{i=1}^n \sum_{j=1}^{N_i} \norm{f_{i,j}}_{\calH_i}^p \right)^{\frac{1}{p}} \quad \text{for some } p > 0 \,.
\end{equation}

The capacity invariance axiom requires that $E(\Omega)$ remains invariant if any operator is partitioned. Consider an arbitrary operator $f_0 \in \Omega$. Partitioning $f_0$ into $M$ identical fractional operators yields $M$ operators, each defined as $f_0 / M$. By the positive homogeneity of the Hilbert-Schmidt norm, each fractional operator has capacity $\norm{f_0 / M}_{\calH} = \frac{1}{M} \norm{f_0}_{\calH}$.

Evaluating the capacity of this partitioned subset under the $L_p$ functional yields:
\begin{equation}
E_{\text{subset}} = \left( \sum_{m=1}^M \norm{\frac{f_0}{M}}_{\calH}^p \right)^{\frac{1}{p}} = \left( M \left( \frac{1}{M} \norm{f_0}_{\calH} \right)^p \right)^{\frac{1}{p}} = M^{\frac{1-p}{p}} \norm{f_0}_{\calH} \,.
\end{equation}

For the macroscopic capacity to remain invariant for any partition scale $M \ge 1$, we must have $E_{\text{subset}} = \norm{f_0}_{\calH}$. This equality holds if and only if $M^{\frac{1-p}{p}} = 1$. Since $M$ is arbitrary, the exponent must be zero:
\begin{equation}
\frac{1-p}{p} = 0 \implies p = 1 \,.
\end{equation}

Substituting $p=1$ reduces the $L_p$ norm to the $L_1$ sum of scalar norms, yielding $E(\Omega) = \sum_{i=1}^n \sum_{j=1}^{N_i} \norm{f_{i,j}}_{\calH_i}$.
\end{proof}

\section{Derivation of Physical BN Parameters}
\label{app:bn_inversion_details}

This appendix provides the complete derivation for recovering the physical BN parameters and raw weights from the effective parameters, expanding upon Section~\ref{sec:bn_invert}. 

Since the parent direction $\widehat{\uVect}$ lies within the 2D subspace spanned by the augmented children, there exist projection coefficients $c_1, c_2$ such that:
\begin{equation}
\wVect_{p, \text{in}}^{\text{eff}} = c_1 \wVect_{\text{in},i}^{\text{eff}} + c_2 \wVect_{\text{in},j}^{\text{eff}} \quad \text{and} \quad b_p = c_1 b_i + c_2 b_j \,.
\end{equation}
By the definition of the pre-activation signal $y = (\wVect_{\text{in}}^{\text{eff}})^T \xVect + b$, it follows that $y_p = c_1 y_i + c_2 y_j$. The physical BN parameters $\beta_p$ and $\gamma_p$ correspond to the mean and standard deviation of $y_p$. Taking the expectation and variance yields:
\begin{align}
\beta_p &= \E[y_p] = c_1 \E[y_i] + c_2 \E[y_j] = c_1 \beta_i + c_2 \beta_j \\
\gamma_p^2 &= \Var[y_p] = c_1^2 \gamma_i^2 + c_2^2 \gamma_j^2 + 2 c_1 c_2 \Cov(y_i,y_j) \,.
\end{align}
Substituting the closed-form covariance $\Cov(y_i , y_j) = |\gamma_i| |\gamma_j| \hat{\rho}_{ij}$ (where $\hat{\rho}_{ij}$ is defined in Equation~\ref{eq:rho_hat_maintext}) gives:
\begin{equation}
\label{eq:gamma_both_signs}
\gamma_p = \pm \sqrt{c_1^2 \gamma_i^2 + c_2^2 \gamma_j^2 + 2 c_1 c_2 |\gamma_i| |\gamma_j| \hat{\rho}_{ij}} \,.
\end{equation}
Next, we determine the raw weights $\wVect_{\text{in},p}^{\text{raw}}$ and dataset statistics $\mu_p, \sigma_p$. By definition (\ref{eq:eff_params}):
\begin{equation}
\label{eq:eff_params_p}
\wVect_{\text{in},p}^{\text{eff}} = \frac{\gamma_p}{\sqrt{\sigma_p^2 + \epsilon}} \wVect_{\text{in},p}^{\text{raw}} \qquad \text{and} \qquad b_p = \beta_p - \frac{\gamma_p}{\sqrt{\sigma_p^2 + \epsilon}} \mu_p \,.
\end{equation}
This system is under-constrained. To resolve the ambiguity, we anchor the variance such that $\sigma_p^2 = \max(0, \gamma_p^2 - \epsilon)$. As noted in the main text, we focus on the active regime $\gamma_p^2 \ge \epsilon$ and defer the edge case $\gamma_p^2 < \epsilon$ to Appendix~\ref{sec:numerical_stability_BN}. In the active regime, the physical scaling factor simplifies directly to its sign:
\begin{equation}
\frac{\gamma_p}{\sqrt{\sigma_p^2 + \epsilon}} = \frac{\gamma_p}{|\gamma_p|} = \sign(\gamma_p) \,.
\end{equation}
Substituting this back into (\ref{eq:eff_params_p}) yields:
\begin{equation}
\wVect_{\text{in},p}^{\text{eff}} = \sign(\gamma_p) \wVect_{\text{in},p}^{\text{raw}} \qquad \text{and} \qquad b_p = \beta_p - \sign(\gamma_p) \mu_p \,.
\end{equation}
Multiplying both equations by $\sign(\gamma_p)$ (and noting that $\sign(\gamma_p)^2 = 1$ for $\gamma_p \neq 0$) isolates the raw physical parameters:
\begin{equation}
\wVect_{\text{in},p}^{\text{raw}} = \sign(\gamma_p) \wVect_{\text{in},p}^{\text{eff}} \qquad \text{and} \qquad \mu_p = \sign(\gamma_p)(\beta_p - b_p) \,.
\end{equation}
Because $\gamma_p$ in (\ref{eq:gamma_both_signs}) can take either sign, we arbitrarily choose the positive root $\gamma_p > 0$ without loss of generality. Under this choice, $\sign(\gamma_p) = 1$, and the recovered parameters simplify to:
\begin{equation}
\wVect_{\text{in},p}^{\text{raw}} =  \wVect_{\text{in},p}^{\text{eff}} \qquad , \qquad \mu_p = c_1 \beta_i + c_2 \beta_j - b_p \qquad,\qquad \sigma_p = \gamma_p = \sqrt{c_1^2 \gamma_i^2 + c_2^2 \gamma_j^2 + 2 c_1 c_2 |\gamma_i| |\gamma_j| \hat{\rho}_{ij}} \,.
\end{equation}

To prove that the physical forward pass is invariant to the chosen sign of $\gamma_p$, we substitute the recovered parameters back into the BN inference equation. The true pre-activation signal $y_p$ is computed as:
\begin{align}
y_p &= \frac{\gamma_p}{\sqrt{\sigma_p^2 + \epsilon}} \left( (\wVect_{\text{in},p}^{\text{raw}})^T \xVect - \mu_p \right) + \beta_p \nonumber \\
    &= \sign(\gamma_p) \Big[ \big(\sign(\gamma_p)\wVect_{\text{in},p}^{\text{eff}}\big)^T \xVect - \sign(\gamma_p)(\beta_p - b_p) \Big] + \beta_p \nonumber \\
    &= \sign(\gamma_p)^2 \Big[ (\wVect_{\text{in},p}^{\text{eff}})^T \xVect - (\beta_p - b_p) \Big] + \beta_p \,.
\end{align}
Since $\sign(\gamma_p)^2 = 1$, the shift $\beta_p$ cancels out:
\begin{equation}
y_p = (\wVect_{\text{in},p}^{\text{eff}})^T \xVect - \beta_p + b_p + \beta_p = (\wVect_{\text{in},p}^{\text{eff}})^T \xVect + b_p \,.
\end{equation}
Thus, regardless of the polarity of $\gamma_p$, the deployed pre-activation robustly realizes the target effective geometry.

\section{Kernel Formulation}
\label{sec:kernel}
Expressing the Hilbert space inner products and norms via a kernel decouples finite-dimensional vector operations from infinite-dimensional function integrals. We define the correlation kernel $K(i, j)$ of two neurons as:
\begin{equation}
\label{eq:kernel}
K(i, j) \triangleq \E_{\xVect \sim P_{\calX}} [ \Psi(y_i) \Psi(y_j) ]
\end{equation}
Under this definition, the inner product in $\calH$ evaluates to:
\begin{align}
\label{eq:hilbert_inner_prod}
    \inner{f_i}{f_j}_{\calH} &= \inner{g_i \otimes \wVect_{\text{out},i}}{g_j \otimes \wVect_{\text{out},j}}_{\calH} \nonumber \\
    &= \inner{g_i}{g_j}_{\calH_{\text{in}}} \inner{\wVect_{\text{out},i}}{\wVect_{\text{out},j}}_{\R^c} \nonumber \\
    &= K(i,j) \inner{\wVect_{\text{out},i}}{\wVect_{\text{out},j}}_{\R^c}
\end{align}
Consequently, the capacity of a single neuron is formulated as:
\begin{equation}
    \norm{f_i}_{\calH} = \norm{\wVect_{\text{out},i}}_2 \sqrt{K(i, i)}
\end{equation}
where $K(i, i) = \E_{\xVect \sim P_{\calX}} [ \Psi(y_i)^2 ]$ represents the expected squared energy of the activation signal. 

Evaluating $K(i,j)$ requires integrating over a high-dimensional space. We derive closed-form analytical expressions for these integrals under the \ReLU activation.\footnote{This evaluation connects to the Arc-Cosine Kernel family of order $n=1$ \cite{cho2009kernel}. Because HOPE relies on the translation vector $\beta$ to capture the full statistical state of the signal, we provide the formal proof for the biased formulation here to ensure the paper remains self-contained.}

\subsection{Pre-Activation Distribution}

To evaluate the integrals analytically, we first determine the distribution of the pre-activation signal $y_i = (\wVect_{\text{in},i}^{\text{eff}})^T \xVect + b_i$ under the surrogate distribution $P_{\calX} = \calN(\hat{\muVect}_x, \hat{\Sigma}_x)$. 

Because $y_i$ is an affine transformation of a Gaussian vector, it is univariate Gaussian. Recall from Section~\ref{sec:neural_signal_distribution} that the surrogate mean is defined as $\hat{\muVect}_x = \Wmat_{\text{raw}}^+ \muVect_{\text{BN}}$. Because $\wVect_{\text{raw},i}^T \hat{\muVect}_x$ is the $i$-th element of $\Wmat_{\text{raw}} \hat{\muVect}_x$, we have $\Wmat_{\text{raw}} \hat{\muVect}_x = \Wmat_{\text{raw}} \Wmat_{\text{raw}}^+ \muVect_{\text{BN}}$. Since the empirical BN mean vector $\muVect_{\text{BN}}$ resides in the column space of $\Wmat_{\text{raw}}$, this projection simplifies to $\muVect_{\text{BN}}$. Thus, the expected value of the raw projection recovers the empirical mean: $\E[\wVect_{\text{raw},i}^T \xVect] = \mu_i$. Similarly, the surrogate covariance $\hat{\Sigma}_x$ is constrained to satisfy $\Var(\wVect_{\text{raw},i}^T \xVect) = \sigma_i^2$.

Using the effective parameters $\wVect_{\text{in},i}^{\text{eff}} = \frac{\gamma_i}{\sqrt{\sigma_i^2 + \epsilon}} \wVect_{\text{raw},i}$ and $b_i = \beta_i - \frac{\gamma_i \mu_i}{\sqrt{\sigma_i^2 + \epsilon}}$, the mean of $y_i$ evaluates to:
\begin{equation}
    \E[y_i] = \frac{\gamma_i}{\sqrt{\sigma_i^2 + \epsilon}} \E[\wVect_{\text{raw},i}^T \xVect] + b_i = \frac{\gamma_i \mu_i}{\sqrt{\sigma_i^2 + \epsilon}} + \beta_i - \frac{\gamma_i \mu_i}{\sqrt{\sigma_i^2 + \epsilon}} = \beta_i
\end{equation}
Similarly, its variance evaluates to:
\begin{equation}
    \Var(y_i) = \frac{\gamma_i^2}{\sigma_i^2 + \epsilon} \Var(\wVect_{\text{raw},i}^T \xVect) = \gamma_i^2 \left( \frac{\sigma_i^2}{\sigma_i^2 + \epsilon} \right)
\end{equation}
Assuming the numerical stability constant $\epsilon$ is negligible compared to the data variance $\epsilon \to 0$, the variance simplifies to $\gamma_i^2$. Thus, under $P_{\calX}$, the pre-activation density is:
\begin{equation}
\label{eq:pre-activation_density_appendix}
y_i \sim \calN(\beta_i, \gamma_i^2)
\end{equation}

\begin{alertbox}{Realizing Empirical Constraints in Practice}
Adapting HOPE to unnormalized networks requires a lightweight empirical calibration pass over a small, unlabeled data batch to measure the marginal pre-activation statistics $(\mu_i, \sigma_i^2)$. Defining the effective scaling as $\gamma_i = \sigma_i$ and the shift as $\beta_i = \mu_i + b_{\text{raw}, i}$ anchors the evaluation to the data manifold and recovers the decoupling required to evaluate the kernel analytically.\footnote{Networks using LayerNorm or GroupNorm still require this empirical calibration pass to determine the marginal channel statistics. Because these layers condition the signal variance, their calibration converges with remarkably few samples, and the resulting pre-activations adhere to the Gaussian surrogate assumptions.}
\end{alertbox}

\subsection{Self-Kernel}

\begin{proposition}[Dimensionality Reduction]
\label{prop:decouple_weights}
For a Gaussian pre-activation $y_i \sim \calN(\beta_i, \gamma_i^2)$ and any activation function $\Psi$, the expected energy $K(i,i) = \E[\Psi(y_i)^2]$ reduces to a 1D integral, independent of the input space dimensionality.
\end{proposition}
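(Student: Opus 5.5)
The plan is to rewrite $K(i,i)$ as an expectation over the scalar $y_i$ alone, then apply the change-of-variables (law of the unconscious statistician) formula. By definition, $K(i,i) = \E_{\xVect \sim P_{\calX}}[\Psi(y_i)^2]$ with $y_i = (\wVect_{\text{in},i}^{\text{eff}})^T \xVect + b_i$. The integrand $\Psi(y_i)^2$ is therefore the composition of the measurable function $z \mapsto \Psi(z)^2$ with the affine map $\xVect \mapsto (\wVect_{\text{in},i}^{\text{eff}})^T \xVect + b_i$. The expectation thus depends on $P_{\calX}$ only through the pushforward law of $y_i$. That law was computed just above in (\ref{eq:pre-activation_density_appendix}): $y_i \sim \calN(\beta_i, \gamma_i^2)$. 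This relies on the mean and variance constraints enforced by the surrogate $(\hat{\muVect}_x, \hat{\Sigma}_x)$, with $\epsilon$ neglected.

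The key steps are as follows.
\begin{enumerate}
\item Note that an affine image of a multivariate Gaussian is univariate Gaussian. Its parameters are fixed as $(\beta_i, \gamma_i^2)$ by the preceding derivation.
\item Invoke the pushforward identity $\E_{\xVect}[h(a^T\xVect+b)] = \int_{\R} h(z)\, dP_{y_i}(z)$, taking $h = \Psi^2$.
\item Write the result explicitly as
\[
K(i,i) = \int_{-\infty}^{\infty} \Psi(z)^2 \frac{1}{|\gamma_i|}\normPDF\!\left(\frac{z-\beta_i}{\gamma_i}\right) dz .
\]
\item Standardize with $z = \beta_i + |\gamma_i| t$ to get $\int \Psi(\beta_i+|\gamma_i|t)^2 \normPDF(t)\,dt$. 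This involves only the two scalars $(\beta_i,\gamma_i)$, with no dependence on $n$, $\wVect_{\text{raw},i}$, or $\hat{\Sigma}_x$ beyond what is already absorbed into $y_i$'s law.
\end{enumerate}

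There is one degenerate case: if $\gamma_i = 0$, then $y_i = \beta_i$ almost surely and $K(i,i) = \Psi(\beta_i)^2$, a trivial (zero-dimensional) integral. I would note this case separately.

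The only real subtlety is integrability, i.e. making sure the 1D integral is well-defined and finite. For the PH-1 activations considered here, $\Psi(z)^2 \le \max(C_+^2, C_-^2) z^2$, which is integrable against a Gaussian. For a fully general $\Psi$, as in the statement, I would add the standing assumption $\Psi(y_i) \in L_2$, which is already implicit in $g_i \in \calH_{\text{in}}$. This lets the change-of-variables formula apply to a nonnegative measurable integrand and needs no further justification. No other step should pose difficulty, since the Gaussian marginal has already been established.
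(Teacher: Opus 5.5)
Your proposal is correct and follows the same route as the paper. The paper's proof likewise notes that $K(i,i)$ depends on $P_{\calX}$ only through the scalar law $y_i \sim \calN(\beta_i,\gamma_i^2)$, writes it as $\int_{-\infty}^{\infty} \Psi(y)^2 p(y)\,dy$, and concludes; this is exactly your pushforward step. Your standardization, the degenerate case $\gamma_i = 0$, and the integrability remark are sound refinements that the paper omits. One small correction: for the nonnegative integrand $\Psi^2$ the change of variables holds even when the integral is infinite, so the $L_2$ assumption is needed only to guarantee finiteness.
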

\begin{proof}
By definition, $K(i,i) = \E_{\xVect \sim P_{\calX}} [\Psi(y_i)^2]$. Because $y_i$ is a scalar random variable with density $p(y_i) = \calN(y_i; \beta_i, \gamma_i^2)$, the expectation evaluates as $\int_{-\infty}^\infty \Psi(y)^2 p(y) dy$, decoupling the computation from the ambient space $\calX$.
\end{proof}

\begin{theorem}[Closed-Form Self-Kernel for ReLU]
\label{theorem:self_kernel}
For $y_i \sim \calN(\beta_i, \gamma_i^2)$ with $|\gamma_i| > 0$, the expected squared energy for $\Psi(y_i) = \max(0, y_i)$ evaluates to:
\begin{equation}
    K(i, i) = (\gamma_i^2 + \beta_i^2)\normCDF\left(\frac{\beta_i}{|\gamma_i|}\right) + \beta_i|\gamma_i|\normPDF\left(\frac{\beta_i}{|\gamma_i|}\right)
\end{equation}
where $\normPDF$ and $\normCDF$ are the standard Normal PDF and CDF.
\end{theorem}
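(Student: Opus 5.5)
By Proposition~\ref{prop:decouple_weights}, $K(i,i)$ reduces to a one-dimensional Gaussian integral. So the plan is to compute $\E[\max(0,y)^2]$ for $y \sim \calN(\beta_i,\gamma_i^2)$ directly, by standardizing and splitting into elementary truncated Gaussian moments.

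First, I write $y = \beta_i + |\gamma_i| z$ with $z \sim \calN(0,1)$. Using $|\gamma_i|$ rather than $\gamma_i$ is legitimate because $z \stackrel{d}{=} -z$, and it keeps the threshold orientation clean. Then $y > 0$ if and only if $z > -\beta_i/|\gamma_i| \triangleq -t$, where $t \triangleq \beta_i/|\gamma_i|$. Hence
\begin{equation*}
K(i,i) = \int_{-t}^{\infty} (\beta_i + |\gamma_i| z)^2 \normPDF(z)\, dz .
\end{equation*}
Expanding the square gives three terms:
\begin{equation*}
\beta_i^2 \int_{-t}^\infty \normPDF(z)\,dz \;+\; 2\beta_i|\gamma_i| \int_{-t}^\infty z\normPDF(z)\,dz \;+\; \gamma_i^2 \int_{-t}^\infty z^2 \normPDF(z)\,dz .
\end{equation*}

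Second, I evaluate the three truncated moments using $\normPDF'(z) = -z\normPDF(z)$ and the symmetry of $\normPDF$.
\begin{itemize}
\item The zeroth moment is $1-\normCDF(-t) = \normCDF(t)$.
\item The first moment is $[-\normPDF(z)]_{-t}^{\infty} = \normPDF(-t) = \normPDF(t)$.
\item For the second moment, I integrate by parts with $u = z$ and $dv = z\normPDF\,dz$. This gives $[-z\normPDF(z)]_{-t}^\infty + \int_{-t}^\infty \normPDF = -t\normPDF(t) + \normCDF(t)$.
\end{itemize}

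Third, I assemble the pieces:
\begin{equation*}
\beta_i^2\normCDF(t) + 2\beta_i|\gamma_i|\normPDF(t) + \gamma_i^2\normCDF(t) - \gamma_i^2 t\,\normPDF(t).
\end{equation*}
Since $\gamma_i^2 t = |\gamma_i|\beta_i$, the last term cancels half of the cross term. This leaves $(\gamma_i^2+\beta_i^2)\normCDF(t) + \beta_i|\gamma_i|\normPDF(t)$, which is the claimed formula.

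There is no real obstacle here; the only delicate point is bookkeeping. I would check the sign conventions (the lower limit $-t$ and the use of $|\gamma_i|$ for negative BN scales) and confirm the cancellation in the final step. The boundary terms at infinity vanish because of Gaussian decay, and the assumption $|\gamma_i|>0$ ensures that $t$ is well defined.
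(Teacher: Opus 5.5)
Your proposal is correct and follows essentially the same route as the paper's proof: the same standardization $y = \beta_i + |\gamma_i| z$ with lower limit $-\beta_i/|\gamma_i|$, the same three truncated Gaussian moments evaluated via $\normPDF'(z) = -z\normPDF(z)$ and integration by parts, and the same final simplification $\gamma_i^2 t = \beta_i|\gamma_i|$. Nothing is missing.
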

\begin{proof}
By Proposition~\ref{prop:decouple_weights}, $K(i, i) = \int_{0}^{\infty} y^2 p(y) dy$. Applying the substitution $z = \frac{y-\beta_i}{|\gamma_i|}$, we have $y = |\gamma_i| z + \beta_i$ and $dy = |\gamma_i| dz$. Setting the integration limit $c = \frac{\beta_i}{|\gamma_i|}$, we obtain:
\begin{equation*}
    K(i, i) = \int_{-c}^{\infty} (|\gamma_i| z + \beta_i)^2 \normPDF(z) dz = \beta_i^2 \int_{-c}^{\infty} \normPDF(z) dz + 2\beta_i|\gamma_i| \int_{-c}^{\infty} z\normPDF(z) dz + \gamma_i^2 \int_{-c}^{\infty} z^2\normPDF(z) dz
\end{equation*}
We evaluate each term using integration by parts and the identity $\normPDF'(z) = -z\normPDF(z)$:
\begin{itemize}
    \item $\int_{-c}^{\infty} \normPDF(z) dz = 1 - \normCDF(-c) = \normCDF(c)$
    \item $\int_{-c}^{\infty} z\normPDF(z) dz = [-\normPDF(z)]_{-c}^{\infty} = \normPDF(c)$
    \item $\int_{-c}^{\infty} z^2\normPDF(z) dz = [-z\normPDF(z)]_{-c}^{\infty} + \int_{-c}^{\infty} \normPDF(z) dz = \normCDF(c) - c\normPDF(c)$
\end{itemize}
Substituting these evaluations yields:
\begin{align*}
    K(i, i) &= \beta_i^2 \normCDF(c) + 2\beta_i|\gamma_i|\normPDF(c) + \gamma_i^2 \big( \normCDF(c) - c\normPDF(c) \big) \\
    &= (\gamma_i^2 + \beta_i^2)\normCDF(c) + \big( 2\beta_i|\gamma_i| - \gamma_i^2 c \big) \normPDF(c)
\end{align*}
Since $\gamma_i^2 c = \gamma_i^2 (\frac{\beta_i}{|\gamma_i|}) = |\gamma_i|\beta_i$, the coefficient for $\normPDF(c)$ simplifies to $\beta_i|\gamma_i|$. Substituting $c = \frac{\beta_i}{|\gamma_i|}$ completes the proof.
\end{proof}

\subsection{Cross-Kernel}
\label{sec:cross-kernel}

\subsubsection{The Local Pairwise Surrogate Distribution}
Let $\rho_{\text{eff}} = \frac{\inner{\wVect_{\text{in},i}^{\text{eff}}}{\wVect_{\text{in},j}^{\text{eff}}}}{\norm{\wVect_{\text{in},i}^{\text{eff}}}_2 \norm{\wVect_{\text{in},j}^{\text{eff}}}_2}$ be the cosine similarity between the effective input weights. The cross-kernel relies on the joint pre-activation distribution of $y_i$ and $y_j$. Because optimizing the global covariance $\hat{\Sigma}_x$ is computationally prohibitive, we restrict the maximum entropy formulation to the local $2 \times 2$ subspace spanned by the neuron pair.

\begin{proposition}[Pairwise Warped Correlation]
\label{prop:warped_corr}
Under a local pairwise maximum entropy surrogate, the correlation $\hat{\rho}_{ij}$ between $y_i$ and $y_j$ is given analytically by:
\begin{equation}
\label{eq:def_hat_rho}
    \hat{\rho}_{ij} = \frac{2\kappa}{1 + \sqrt{1 + 4\kappa^2}}
\end{equation}
where $\kappa$ is the blending constant uniquely defined by the input weight geometry and empirical standard deviations\footnote{To prevent undefined $0/0$ states, this evaluation is restricted to active features where the effective weight norm is non-zero $\norm{\wVect_{\text{in}}^{\text{eff}}}_2 > 0$.}:
\begin{equation}
    \kappa = \left( \frac{\rho_{\text{eff}}}{1 - \rho_{\text{eff}}^2} \right) \left( \frac{|\gamma_i|}{\norm{\wVect_{\text{in},i}^{\text{eff}}}_2} \right) \left( \frac{|\gamma_j|}{\norm{\wVect_{\text{in},j}^{\text{eff}}}_2} \right)
\end{equation}
\end{proposition}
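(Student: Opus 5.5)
The plan is to make ``local pairwise maximum entropy surrogate'' precise, solve the resulting constrained problem in closed form, and read off $\hat{\rho}_{ij}$ from the $2\times 2$ precision matrix of $(y_i,y_j)$.

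\textbf{Step 1: posing the problem.} Unconstrained differential entropy $\log\det\Sigma_x$ is unbounded in directions orthogonal to the pair. Maximizing entropy only over the span of $\wVect_{\text{in},i}^{\text{eff}},\wVect_{\text{in},j}^{\text{eff}}$ is also unsuitable: it makes $y_i,y_j$ independent and gives $\hat{\rho}_{ij}=0$. I would therefore use maximum entropy relative to an isotropic standard reference $\calN(\zeroVect,\Imat)$. Equivalently, I would maximize $\log\det\Sigma-\mathrm{tr}\,\Sigma$ subject to $\wVect_k^T\Sigma\wVect_k=\gamma_k^2$ for $k\in\{i,j\}$, where $\wVect_k\triangleq\wVect_{\text{in},k}^{\text{eff}}$. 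These constraints hold because, as computed in the pre-activation subsection, $\Var(y_k)=\gamma_k^2$. The mean plays no role in a correlation and can be ignored.

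\textbf{Step 2: Lagrange conditions and reduction to $2\times 2$.} Write $\Wmat=[\wVect_i\mid\wVect_j]$ with Gram matrix $\Gmat=\Wmat^T\Wmat$, which is invertible when $|\rho_{\text{eff}}|<1$, and let $\Lambdamat=\diag(\lambda_i,\lambda_j)$ be the multipliers. Stationarity of the Lagrangian gives $\Sigma^{-1}=\Imat+\Wmat\Lambdamat\Wmat^T$. The covariance of $(y_i,y_j)$ is $\Cmat=\Wmat^T\Sigma\Wmat$. Applying the Woodbury identity yields $\Cmat=\Gmat-\Gmat(\Lambdamat^{-1}+\Gmat)^{-1}\Gmat$, and inverting this gives the key structural identity
\begin{equation*}
\Cmat^{-1}=\Gmat^{-1}+\Lambdamat .
\end{equation*}
So the multipliers only perturb the diagonal of the precision matrix. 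The off-diagonal precision of $(y_i,y_j)$ is therefore pinned to that of $\Gmat^{-1}$, namely $-\rho_{\text{eff}}/\big((1-\rho_{\text{eff}}^2)\norm{\wVect_i}_2\norm{\wVect_j}_2\big)$.

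\textbf{Step 3: matching and solving.} The diagonal of $\Cmat$ is fixed by the constraints at $\gamma_i^2,\gamma_j^2$. Hence $\Cmat$ has correlation $\hat\rho$, and its precision has off-diagonal entry $-\hat\rho/\big((1-\hat\rho^2)|\gamma_i||\gamma_j|\big)$. Equating this with the entry from Step 2 gives $\hat\rho/(1-\hat\rho^2)=\kappa$, that is, $\kappa\hat\rho^2+\hat\rho-\kappa=0$. The root lying in $(-1,1)$ is $\hat\rho=\big(-1+\sqrt{1+4\kappa^2}\big)/(2\kappa)$, which rationalizes to $2\kappa/(1+\sqrt{1+4\kappa^2})$. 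This expression also covers $\kappa=0$ continuously. The map $\hat\rho\mapsto\hat\rho/(1-\hat\rho^2)$ is a bijection $(-1,1)\to\R$, so the matching is unique.

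\textbf{Main obstacle.} The hardest part is not the algebra but justifying that the free diagonal entries $\lambda_i,\lambda_j$ can actually be chosen to hit the variance constraints, with $\Sigma\succ 0$, for every admissible $\gamma_i,\gamma_j$. I would handle this in the reverse direction. Take the $\Cmat$ with diagonal $\gamma_k^2$ and the $\hat\rho$ found above; it is positive definite since $|\hat\rho|<1$. Set $\Lambdamat\triangleq\Cmat^{-1}-\Gmat^{-1}$, which is diagonal by construction. Then verify that $\Imat+\Wmat\Lambdamat\Wmat^T\succ 0$, checking this on $\mathrm{span}(\Wmat)$ via $\Gmat^{1/2}\Cmat^{-1}\Gmat^{1/2}$. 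Negative multipliers are allowed, since the constraints are equalities. Finally, concavity of the objective shows that this stationary point is the unique maximizer.
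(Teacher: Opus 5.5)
Your proposal is correct and follows the paper's route: the Lagrangian form $\Sigma^{-1}=\Imat+\Wmat\Lambdamat\Wmat^T$, the Woodbury reduction to $\Cmat^{-1}=\Gmat^{-1}+\Lambdamat$, matching the off-diagonal precision entries, and solving $\kappa\hat\rho^2+\hat\rho-\kappa=0$. The differences are refinements. You work directly with the effective weights and variances $\gamma_k^2$, so the paper's raw-to-effective sign-cancellation step disappears. You justify the identity term through an isotropic reference measure, which the paper only asserts, and you select the root by its range rather than by a limit. You also supply the existence, positive-definiteness and uniqueness checks that the paper omits.
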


\begin{proof}
Let $\Wmat = [\wVect_{\text{raw},i}, \wVect_{\text{raw},j}] \in \R^{n \times 2}$ be the raw weights and $\Gmat = \Wmat^T \Wmat$ be their Gram matrix. The maximum entropy distribution constrained by the variances of these two projections has a precision matrix $\hat{\Sigma}_x^{-1} = \Imat + \Wmat \Lambdamat \Wmat^T$, where $\Lambdamat = \diag(\lambda_i, \lambda_j)$ contains the Lagrange multipliers. The output covariance is $\Cmat = \Wmat^T \hat{\Sigma}_x \Wmat = \Wmat^T (\Imat + \Wmat \Lambdamat \Wmat^T)^{-1} \Wmat$. 

Applying the Woodbury matrix identity to the inner inverse yields:
\begin{equation*}
    (\Imat + \Wmat \Lambdamat \Wmat^T)^{-1} = \Imat - \Wmat (\Lambdamat^{-1} + \Wmat^T \Wmat)^{-1} \Wmat^T = \Imat - \Wmat (\Lambdamat^{-1} + \Gmat)^{-1} \Wmat^T
\end{equation*}
Substituting this back into the expression for $\Cmat$:
\begin{align*}
    \Cmat &= \Wmat^T \left[ \Imat - \Wmat (\Lambdamat^{-1} + \Gmat)^{-1} \Wmat^T \right] \Wmat = \Gmat - \Gmat (\Lambdamat^{-1} + \Gmat)^{-1} \Gmat
\end{align*}
We factor out $\Gmat$ to simplify the expression:
\begin{align*}
    \Cmat &= \Gmat \left[ \Imat - (\Lambdamat^{-1} + \Gmat)^{-1} \Gmat \right] \\
    &= \Gmat \left[ (\Lambdamat^{-1} + \Gmat)^{-1} (\Lambdamat^{-1} + \Gmat) - (\Lambdamat^{-1} + \Gmat)^{-1} \Gmat \right] \\
    &= \Gmat (\Lambdamat^{-1} + \Gmat)^{-1} \Lambdamat^{-1}
\end{align*}
Inverting both sides yields the localized precision matrix $\Cmat^{-1}$:
\begin{equation*}
    \Cmat^{-1} = \Lambdamat (\Lambdamat^{-1} + \Gmat) \Gmat^{-1} = \Lambdamat \Lambdamat^{-1} \Gmat^{-1} + \Lambdamat \Gmat \Gmat^{-1} = \Gmat^{-1} + \Lambdamat
\end{equation*}
Because $\Lambdamat$ is diagonal, its addition only perturbs the diagonal entries of $\Gmat^{-1}$. Therefore, the off-diagonal entries are identical: $[\Cmat^{-1}]_{12} = [\Gmat^{-1}]_{12}$.

Let $\rho_{\text{raw}}$ be the correlation in $\Gmat$ and $r_{\text{raw}}$ be the target correlation in $\Cmat$. Evaluating the inverse off-diagonal elements of these $2 \times 2$ matrices yields:
\begin{equation*}
    [\Gmat^{-1}]_{12} = \frac{-\rho_{\text{raw}}}{\norm{\wVect_{\text{raw},i}}_2 \norm{\wVect_{\text{raw},j}}_2 (1 - \rho_{\text{raw}}^2)} \quad,\quad [\Cmat^{-1}]_{12} = \frac{-r_{\text{raw}}}{\sigma_i \sigma_j (1 - r_{\text{raw}}^2)}
\end{equation*}
Equating them gives:
\begin{equation*}
    \frac{r_{\text{raw}}}{1 - r_{\text{raw}}^2} = \left( \frac{\rho_{\text{raw}}}{1 - \rho_{\text{raw}}^2} \right) \left( \frac{\sigma_i}{\norm{\wVect_{\text{raw},i}}_2} \right) \left( \frac{\sigma_j}{\norm{\wVect_{\text{raw},j}}_2} \right)
\end{equation*}
By the definition of the effective weights, $\wVect_{\text{in},i}^{\text{eff}} = \frac{\gamma_i}{\sqrt{\sigma_i^2+\epsilon}} \wVect_{\text{raw},i}$. The effective correlations relate to the raw correlations via the signs of these scaling parameters: $\rho_{\text{raw}} = \sign(\gamma_i \gamma_j) \rho_{\text{eff}}$ and $r_{\text{raw}} = \sign(\gamma_i \gamma_j) \hat{\rho}_{ij}$. Substituting these into the odd function $f(x) = \frac{x}{1-x^2}$ causes the sign terms to cancel. Taking the limit as $\epsilon \to 0$, we substitute $\frac{\sigma_i}{\norm{\wVect_{\text{raw},i}}_2} = \frac{|\gamma_i|}{\norm{\wVect_{\text{in},i}^{\text{eff}}}_2}$, which yields the blended constant $\kappa$:
\begin{equation*}
    \frac{\hat{\rho}_{ij}}{1 - \hat{\rho}_{ij}^2} = \kappa \implies \kappa \hat{\rho}_{ij}^2 + \hat{\rho}_{ij} - \kappa = 0
\end{equation*}
Solving for $\hat{\rho}_{ij}$ via the quadratic formula and selecting the root that satisfies the boundary condition $\lim_{\rho_{\text{eff}} \to 0} \hat{\rho}_{ij} = 0$, we obtain $\frac{-1 + \sqrt{1 + 4\kappa^2}}{2\kappa}$. Multiplying the numerator and denominator by the conjugate $(\sqrt{1 + 4\kappa^2} + 1)$ ensures numerical stability as $\kappa \to 0$, producing the final formulation $\frac{2\kappa}{1 + \sqrt{1 + 4\kappa^2}}$.
\end{proof}

Consequently, the joint pre-activation distribution under the surrogate space is:
\begin{equation}
\label{eq:joint_pre-activation}
    \begin{pmatrix} y_i \\ y_j \end{pmatrix} \sim \calN \left( \begin{pmatrix} \beta_i \\ \beta_j \end{pmatrix}, \begin{pmatrix} \gamma_i^2 & |\gamma_i| |\gamma_j| \hat{\rho}_{ij} \\ |\gamma_i| |\gamma_j| \hat{\rho}_{ij} & \gamma_j^2 \end{pmatrix} \right)
\end{equation}

Regardless of the activation $\Psi$, any valid Cross-Kernel $K(i, j) = \E[\Psi(y_i)\Psi(y_j)]$ must satisfy three properties:
\begin{enumerate}
\item \textbf{Diagonal Consistency:} If $\hat{\rho}_{ij}=1$ and the marginals are identical ($\beta_i=\beta_j$, $|\gamma_i|=|\gamma_j|$), the interaction recovers the self-kernel: $K(i, j) = K(i, i)$.
\item \textbf{Cauchy-Schwarz Compliance:} The magnitude is bounded: $\abs{K(i, j)} \leq \sqrt{K(i, i) K(j, j)}$.
\item \textbf{Weight-Space Correlation Dependency:} The interaction is monotonic with respect to $\hat{\rho}_{ij}$.
\end{enumerate}

\subsubsection{Exact Bivariate Cross-Kernel for Biased ReLUs}
\label{sec:exact_bivar}
Evaluating $K(i, j) = \E[\max(0, y_i) \max(0, y_j)]$ requires integrating over the joint positive orthant. Defining $c_i = \frac{\beta_i}{|\gamma_i|}$ and $c_j = \frac{\beta_j}{|\gamma_j|}$, the exact cross-kernel evaluates to the closed-form moments of a truncated bivariate normal:
\begin{align}
    K(i, j) = |\gamma_i \gamma_j| \Bigg[ &(c_i c_j + \hat{\rho}_{ij}) \Phi_2(c_i, c_j; \hat{\rho}_{ij}) + c_i \normPDF(c_j) \normCDF(c_{i|j}) \nonumber \\
    &+ c_j \normPDF(c_i) \normCDF(c_{j|i}) + (1-\hat{\rho}_{ij}^2) \phi_2(c_i, c_j; \hat{\rho}_{ij}) \Bigg]
\end{align}
where $\Phi_2$ and $\phi_2$ are the standard bivariate normal CDF and PDF evaluated at $(c_i, c_j)$ with correlation $\hat{\rho}_{ij}$, $\normPDF$ and $\normCDF$ are the standard univariate normal PDF and CDF, and the conditional integration boundaries are $c_{i|j} = \frac{c_i - \hat{\rho}_{ij} c_j}{\sqrt{1-\hat{\rho}_{ij}^2}}$ and $c_{j|i} = \frac{c_j - \hat{\rho}_{ij} c_i}{\sqrt{1-\hat{\rho}_{ij}^2}}$. By centering the evaluation on $\beta_i$ rather than the weight bias $b_i$, we correctly evaluate active feature detectors in the function space.

\subsubsection{Zero-Bias Approximation for Large-Scale Networks}
Because calculating $\Phi_2$ for all neuron pairs is computationally intensive, we approximate the interaction by assuming bias shifts are negligible $\beta_i, \beta_j \approx 0$. Under this assumption, $K(i, j)$ factors into the geometric mean of the capacities scaled by a normalized interaction function $\calI(\hat{\rho}_{ij})$:
\begin{equation}
    K(i, j) \approx \calI(\hat{\rho}_{ij}) \sqrt{K(i, i) K(j, j)}
\end{equation}
For the ReLU activation, applying the Arc-Cosine kernel of order $n=1$ yields the approximate cross-kernel:
\begin{equation}
\label{eq:approx_kernel}
    K(i, j) \approx \frac{1}{\pi} \left( \sqrt{1 - \hat{\rho}_{ij}^2} + (\pi - \arccos \hat{\rho}_{ij}) \hat{\rho}_{ij} \right) \sqrt{K(i, i) K(j, j)}
\end{equation}
This formulation complies with the Cauchy-Schwarz inequality $|K(i,j)| \leq \sqrt{K(i,i)K(j,j)}$ and ensures diagonal consistency ($K(i,j) = K(i,i)$ when $\hat{\rho}_{ij}=1$).

\section{Derivations for Block Eviction}
\label{sec:app_block_eviction}

This appendix provides the mathematical framework and extended derivations for the block eviction operation introduced in Section~\ref{sec:block_eviction}.

\subsection{Generalization and Execution Degradation in Depleted Blocks}
A dedicated macro-level operation is necessary because standard granular pruning cannot fully deplete a residual block due to architectural constraints. Because the residual connection computes $X + F(X)$, the output dimension of $F(X)$ (and thus the final weight tensor $W_3$) must match that of $X$. While granular pruning can safely compress internal neurons within $W_1$ and $W_2$, pruning the output filters of $W_3$ would cause a dimensional mismatch in the element-wise addition $X + F(X)$. Consequently, granular compression leaves the output channels of $W_3$ intact. Leaving a residual pathway active with heavily depleted $W_1$ and $W_2$ degrades both model generalization and execution efficiency:

\begin{itemize}
    \item \textbf{Model Generalization.} The output of the final BN layer evaluates to $F(X) = \gammaVect \odot \frac{W_3 H_2 - \muVect}{\sqrt{\sigmaVect^2 + \epsilon}} + \betaVect$. When $H_2 \to \zeroVect$, this reduces to $F(X) = \betaVect - \gammaVect \odot \frac{\muVect}{\sqrt{\sigmaVect^2 + \epsilon}}$. As defined in (\ref{eq:eff_params}), this is the effective bias vector $\bVect$. Once the block is depleted, $\bVect$ loses its normalization purpose and instead acts as an uncalibrated bias injected into the skip connection, yielding $Y = X + \bVect$. This forcefully shifts downstream feature maps out of their calibrated domain. Since the block terminates with a ReLU activation $Z = \ReLU(X + \bVect)$, a negative $\bVect$ causes catastrophic clipping and irreversible information loss.
    \item \textbf{Execution Efficiency.} Even if $H_2$ partially survives, the shape of $W_3$ remains locked at its ambient output size. Retaining this massive parameter tensor simply to process a negligible, low-rank subspace is computationally wasteful.
\end{itemize}

Block eviction resolves these issues by projecting $F(X) \to \zeroVect$, yielding a pure identity mapping $Y = X$. Unlike $Y = X + \bVect$, residual architectures are natively robust to pure identity mappings. For instance, initializing $\gamma=0$ in the final layer of each residual branch \cite{goyal2017accurate, he2016identity} ensures blocks begin training as $F(X) = \zeroVect$.

\subsection{Derivation of the Unified Macro Cost $\calJ_{\text{evict}}$}
\label{sec:app_macro_derivation_unified}
Recall that a state tuple in layer $l$ has the form $\Phi^{(l)} = (f_1^{(l)}, \dots, f_{N_l}^{(l)}) \in \calH_l^{N_l}$. Let $\calI = (I_1, \dots, I_{d_{\text{amb}}}) \in \calH_A^{d_{\text{amb}}}$ be the tuple of identity mappings comprising the skip connection, where $\calH_A$ is the Hilbert space of operators over the ambient dimension $\mathbb{R}^{d_{\text{amb}}} \to \mathbb{R}^{d_{\text{amb}}}$. The aggregate mapping of any tuple is the sum of its constituent operators:
\begin{equation}
\calF(\Phi^{(l)}) \triangleq \sum_{i=1}^{N_l} f_i^{(l)} \qquad,\qquad \calF(\calI) = \sum_{k=1}^{d_{\text{amb}}} I_k \,.
\end{equation}
The composite block operator $\calB \in \calH_A$ is formalized as the addition of the residual pathway $\calF_{\text{res}} \in \calH_A$ and the skip connection:
\begin{equation}
    \calB = \calF_{\text{res}}\big( \Phi^{(1)}, \Phi^{(2)} \big) + \calF(\calI) \,.
\end{equation}
We define eviction iteratively: we project internal layers to null operators until the block is fully depleted. Evaluating the projection of an internal layer $\Phi^{(l)}$ over $t \in [0,1]$ induces a continuous trajectory $\Phi^{(l)}(t) \to (\zeroVect, \dots, \zeroVect)$. Let $\calF_{\text{res}}(t)$ denote the sequential pathway where $\Phi^{(l)}(t) \to \zeroVect$ while other layers are held constant. The composite block trajectory is:
\begin{equation}
\calB(t) = \calF_{\text{res}}(t) + \calF(\calI) \,.
\end{equation}

From Section~\ref{sec:layer_transition_costs}, the capacity cost for a microscopic layer transition is:
\begin{equation}
    \label{eq:app_fundamental_ode_micro}
    \calJ_{\text{capacity}}(\Phi_a, \Phi_b) = \int_{0}^{1} -c(\Phi(t)) \frac{\frac{d}{dt} E(\Phi(t))}{E(\Phi(t))} dt \,.
\end{equation}
To generalize this to macroscopic blocks, we expand the state to a composite tuple $\Omega \triangleq (T_1, \dots, T_n)$, where $T_i \in \calH_i^{N_i}$. By Proposition~\ref{prop:macro_capacity}, extending $E(\cdot)$ under constraints of symmetry, separability, and homogeneity yields:
\begin{equation}
\label{eq:app_macro_capacity}
E(\Omega) = \sum_{i=1}^n E(T_i) = \sum_{i=1}^n \sum_{j=1}^{N_i} \|f_{i,j}\|_{\calH_i} \,.
\end{equation}
For consistency, the constituent tuples of $\Omega$ must interact strictly additively within $\calB$. Grouping sequentially composed layers (e.g., $\Phi^{(1)}$ and $\Phi^{(2)}$) into the same state creates a contradiction: projecting $\Phi^{(1)} \to \zeroVect$ collapses the entire sequential pathway $\calF_{\text{res}} \to \zeroVect$, leaving $\calB = \calF(\calI)$ with a true capacity of $E(\calI)$. However, $E(\Omega)$ would erroneously evaluate to $E(\Phi^{(2)}) + E(\calI)$. To avoid this, we must restrict the macroscopic state strictly to additive components: a single internal layer $l$ and the skip connection $\calI$. 

Furthermore, because the mapping $\Phi^{(l)}$ and the skip connection $\calI$ operate in different dimensional spaces ($d_{\text{bottleneck}}$ versus $d_{\text{amb}}$), direct operator addition $\Phi^{(l)} + \calI$ is mathematically undefined. We bypass this by defining the state as the composite pair $\Omega^{(l)} \triangleq (\Phi^{(l)}, \calI)$. The capacity cost for this macroscopic state is:  
\begin{equation}
    \label{eq:app_fundamental_ode_macro}
    \calJ_{\text{capacity}}(\Omega_a^{(l)}, \Omega_b^{(l)}) = \int_{0}^{1} -c(\Omega^{(l)}(t)) \frac{\frac{d}{dt} E(\Phi^{(l)}(t)) + \frac{d}{dt} E(\calI)}{E(\Phi^{(l)}(t)) + E(\calI)} dt \,.
\end{equation}
To mitigate layer-width bias, $c(\cdot)$ acts as a counting measure of compressible operators. Being additive over disjoint sets, $c(\Omega^{(l)}) = c(\Phi^{(l)}) + c(\calI)$. Since the skip connection $\calI$ is fixed, it has no compressible operators $c(\calI) = 0$. Thus, $c(\Omega^{(l)}(t)) = c(\Phi^{(l)}(t))$. Combined with the stationarity of the skip connection $\frac{d}{dt} E(\calI) = 0$, the integral simplifies to:
\begin{equation}
    \calJ_{\text{capacity}}(\Omega_a^{(l)}, \Omega_b^{(l)}) = \int_{0}^{1} -c(\Phi^{(l)}(t)) \frac{\frac{d}{dt} E(\Phi^{(l)}(t))}{E(\Phi^{(l)}(t)) + E(\calI)} dt \,.
\end{equation}
During the projection $\Phi_a^{(l)} \to \zeroVect$, $c(\Phi^{(l)}(t))$ drops from $N_{\text{active}}^{(l)}$ to $0$. Bounding this density by its maximum pre-action value $N_{\text{active}}^{(l)}$ allows a closed-form integration. Defining $E(t) \triangleq E(\Phi^{(l)}(t))$, $\dot{E}(t) \triangleq \frac{d}{dt}E(t)$, and $E_{\text{identity}} \triangleq E(\calI)$:
\begin{equation}
    \calJ_{\text{capacity}}(\Omega_a^{(l)}, \Omega_b^{(l)}) \leq N_{\text{active}}^{(l)} \int_{0}^{1} \frac{-\dot{E}(t)}{E(t) + E_{\text{identity}}} \, dt = N_{\text{active}}^{(l)} \ln\left( 1 + \frac{E_{\text{active}}^{(l)}}{E_{\text{identity}}} \right) \,.
\end{equation}
This logarithmic formulation penalizes capacity reduction continuously. However, compression executes finite capacity removals $\Delta E = E_{\text{active}}^{(l)}$ in discrete leaps. Evaluating macroscopic evictions logarithmically while governing granular micro-actions (Section~\ref{sec:layer_transition_costs}) via linear ratios creates an inconsistent optimization hierarchy. Driven by this logarithmic discount, a greedy optimizer would view massive architectural deletions as artificially cheap. To align the macro-eviction cost with the linear micro-transition cost and ensure optimization stability, we apply the standard inequality $\ln(1+x) \le x$ for $x \ge 0$. Substituting $x = E_{\text{active}}^{(l)} / E_{\text{identity}}$ establishes a strict linear upper bound:
\begin{equation}
    \calJ_{\text{layer}}(\Omega_a^{(l)}, \Omega_b^{(l)}) \triangleq N_{\text{active}}^{(l)} \frac{E_{\text{active}}^{(l)}}{E_{\text{identity}}} \,.
\end{equation}

\subsubsection{Parallel Survival Capacity $E_{\text{identity}}$}
Flattening the tensor $X$ into a vector $\xVect \in \R^{d_{\text{amb}}}$, we decompose it into $d_{\text{amb}}$ independent identity operators $\calI_k(\xVect) = \langle \eVect_k, \xVect \rangle \eVect_k = x_k \eVect_k$. Over the input distribution $P_{\calX}$, the Hilbert-Schmidt capacity evaluates to the Root Mean Square (RMS) energy. Given $\|\eVect_k\|_2 = 1$:
\begin{equation}
    \|\calI_k\|_{\calH} = \left( \E_{\xVect \sim P_{\calX}} [ \|x_k \eVect_k\|_2^2 ] \right)^{1/2} = \sqrt{ \E_{\xVect \sim P_{\calX}} [ x_k^2 ] } \,.
\end{equation}
Assuming $\xVect$ is conditioned by a preceding BN layer with scale $\gamma_k$ and shift $\beta_k$, the expected energy evaluates to $\E[x_k^2] = \Var(x_k) + (\E[x_k])^2 = \gamma_k^2 + \beta_k^2$. The aggregate capacity of the skip connection is therefore:
\begin{equation}
    E_{\text{identity}} = \sum_{k=1}^{d_{\text{amb}}} \sqrt{\gamma_k^2 + \beta_k^2} \,.
\end{equation}
For a stably normalized network $\gamma_k \approx 1, \beta_k \approx 0$, this naturally evaluates to the ambient physical dimension: $E_{\text{identity}} \approx d_{\text{amb}}$.

\subsubsection{Resolving the Extinction Divergence}
Block eviction projects a layer $l$ to $\zeroVect$. Since PH-1 activations satisfy $\Psi(\zeroVect) = \zeroVect$, this ensures the entire sequential residual pathway vanishes $\calF_{\text{res}} \to \zeroVect$, reducing the composite block mapping to $\calB = \calF(\calI)$. Axiom 2 (Connectivity Preservation) requires the transition cost to diverge $\calJ \to \infty$ as the state capacity approaches zero, penalizing graph disconnections. 

If evaluated solely on the isolated layer $\Phi^{(l)}$, the terminal capacity would evaluate to zero, yielding an infinite penalty that would permanently prevent the optimizer from selecting an eviction operation. However, because the post-eviction mapping of the block is $\calF(\calI)$, the network retains a minimum capacity $E_{\text{identity}} > 0$. Evaluating the integral over the macroscopic state $\Omega^{(l)} = (\Phi^{(l)}, \calI)$ naturally tracks this non-vanishing capacity. Since the differential change is driven entirely by the active layer $dE(\Omega) = dE(\Phi^{(l)})$, we have:
\begin{equation}
    \calJ_{\text{layer}}^{(l)} = \int_{E_{\text{active}}^{(l)}}^{0} \frac{-N_{\text{active}}^{(l)} \, dE}{E + E_{\text{identity}}} \,.
\end{equation}
As $E \to 0$, the denominator is bounded by $\lim_{E \to 0} E(\Omega) = E_{\text{identity}} > 0$. The extinction divergence is thus mitigated.

\subsection{Generalization to Non-Residual Architectures}
While $E_{\text{identity}}$ structurally resolves the division-by-zero limit for residual networks, architectures without additive skip connections lack this architectural anchor. Examples include sequential VGG-style blocks, Inception modules, DenseNets, or intermediate expansion layers inside Transformer FFNs (prior to the global residual addition). In these architectures, evicting a block leaves no parallel identity mapping. This means the surviving capacity reaches zero.

To evaluate block eviction in these domains without encountering an infinite penalty $\calJ \to \infty$, we replace the dynamic composite denominator with a static historical constant: the block's initial, pre-pruning capacity $E_{\text{init}}^{(l)}$. The individual layer-wise cost becomes:
\begin{equation}
    \calJ_{\text{layer}}(\Omega_a^{(l)}, \Omega_b^{(l)}) = N_{\text{active}}^{(l)} \frac{E_{\text{active}}^{(l)}}{E_{\text{init}}^{(l)}}
\end{equation}
For a block containing multiple internal layers, the total macroscopic eviction cost is the linear sum of these bounds: $\calJ_{\text{evict}} = \sum_l \calJ_{\text{layer}}^{(l)}$. 

From the perspective of pure functional analysis, $E_{\text{init}}$ acts as a heuristic. It violates the Markov property of continuous state transitions (because the system must maintain memory of a pristine architectural state that no longer exists) and departs from Axiom 2 by freezing the denominator rather than tracking an architectural asymptote. However, it serves as an optimal proxy for three reasons:

\begin{itemize}
    \item \textbf{Scale Invariance:} Using an arbitrary absolute constant $C$ would yield $\calJ \propto E_{\text{active}}^{(l)}/C$, violating scale neutrality. The ratio $E_{\text{active}}^{(l)}/E_{\text{init}}^{(l)} \in [0, 1]$ normalizes the capacity reduction against the layer's own baseline. This ensures blocks of varying widths are penalized fairly based on the relative fraction of capacity destroyed.
    \item \textbf{Avoiding Normalization Collapse:} Normalizing by the current active capacity $E_{\text{active}}^{(l)}$ would collapse the cost to $\calJ_{\text{layer}} = N_{\text{active}}^{(l)}$. This artificially discounts wide, pristine layers, driving the greedy optimizer to evict them prematurely instead of targeting heavily pruned, redundant layers.
    \item \textbf{Dynamic Cost Decay:} The progressive encoder selects actions that minimize the Distortion Rate $\text{DR} = \calJ / \Delta P$. Under a mean-field assumption, active capacity scales linearly with surviving width $E_{\text{active}} \propto N_{\text{active}}$. By locking the denominator to $E_{\text{init}}$, the structural cost decays quadratically as the layer is pruned $\calJ \propto N_{\text{active}} E_{\text{active}} \propto N_{\text{active}}^2$. Because the physical parameter yield also scales linearly $\Delta P \propto N_{\text{active}}$, the overall distortion rate decreases linearly: $\text{DR} \propto N_{\text{active}}$. This guarantees that the relative cost of evicting a block naturally decreases as granular micro-actions deplete it.
\end{itemize}

Thus, while $E_{\text{identity}}$ is the rigorously derived anchor for residual networks, $E_{\text{init}}$ provides the necessary constraints to enable automated, progressive eviction in non-residual architectures.

\section{Reproducibility Protocols for Cross-Domain Transfer}
\label{app:transfer_details}

This section details the experimental setup, algorithmic formulations, and hyperparameter grids necessary to reproduce the cross-domain transfer results presented in Section~\ref{sec:deft_transfer}.

\subsection{Task Construction and Data Partitioning}
Tasks are dynamically constructed from the datasets. We perform $4$ independent trials, with each trial comprised of $5$ tasks to yield $20$ distinct cross-domain transfer scenarios.
\begin{itemize}
    \item \textbf{Source Tasks (CIFAR-100):} To ensure the source task requires learning cohesive structural features, the task comprises $20$ classes sampled by selecting $4$ random superclasses and utilizing all $5$ of their constituent fine classes.
    \item \textbf{Target Tasks (SVHN):} The target tasks consist of $10$-class classification utilizing all SVHN digits.
\end{itemize}
Both CIFAR-100 and SVHN datasets are partitioned into an $80\%$ training split and a $20\%$ validation split. The original test sets are preserved entirely for final evaluation. Prior to training, images are cast to floating-point tensors and scaled to $[0,1]$ by dividing by $255$. No data augmentation is applied during training.

\subsection{Network Architecture}
All experiments utilize a VGG-style 8-layer sequential baseline, modernized with BN. To prevent the final classification head from dominating the network's parameter footprint, we replace the classic flattened Dense layers with Global Average Pooling (GAP). This ensures that the model's overall capacity, and consequently $\Delta P$ evaluated by our compression algorithm, remains localized to the convolutional filters rather than spatial dense transitions. This prevents artificially skewed Distortion Rates (DR). The network consists of the following blocks:
\begin{enumerate}
    \item \textbf{Block 1:} Conv2D (128 filters, 3x3) $\to$ BN $\to$ ReLU $\to$ MaxPool (2x2)
    \item \textbf{Block 2:} Conv2D (256 filters, 3x3) $\to$ BN $\to$ ReLU $\to$ MaxPool (2x2)
    \item \textbf{Block 3:} Conv2D (512 filters, 3x3) $\to$ BN $\to$ ReLU $\to$ Conv2D (512 filters, 3x3) $\to$ BN $\to$ ReLU $\to$ MaxPool (2x2)
    \item \textbf{Block 4:} Conv2D (512 filters, 3x3) $\to$ BN $\to$ ReLU $\to$ Conv2D (512 filters, 3x3) $\to$ BN $\to$ ReLU $\to$ MaxPool (2x2)
    \item \textbf{Transition:} GlobalAveragePooling2D
    \item \textbf{Bottleneck:} Dense (512 neurons) $\to$ BN $\to$ ReLU
    \item \textbf{Classification Head:} Dense (10 or 20 neurons) for multi-class logit output.
\end{enumerate}
Note that BN layers are positioned \textit{between} the affine transformations (Conv2D/Dense) and the ReLU non-linearities. To prevent redundancy and ensure capacity accounting, biases in the Conv2D and Dense layers preceding a BN layer are disabled. 

\subsection{Base Training Regimen}
Both pre-training and fine-tuning utilize SGD with standard heavy-ball momentum of $0.9$, a fixed batch size of $16$, and a Sparse Categorical Cross-Entropy (CE) loss function. 
\begin{itemize}
    \item \textbf{Source Pre-training:} The model is trained from a random initialization for $100$ epochs. To suppress mini-batch noise and ensure the parameters in the slack space converge to zero, the learning rate follows a Cosine Decay schedule starting at $\eta = 0.05$ and decaying to a near-zero $\alpha = 0.001$. Early stopping is deliberately disabled. To actively clear out the network's peripheral slack during this phase, an $L_2$ regularization penalty of $5 \times 10^{-4}$ is applied to all multi-dimensional kernels as well as all 1D BN scale parameters $\gamma$.
    \item \textbf{Target Fine-Tuning:} All algorithms fine-tune for $30$ epochs and use a \textit{fixed} learning rate rather than a decay schedule. During target adaptation, the $L_2$ penalty is restricted to tensors with a rank greater than 1 to leave the calibrated 1D BN scale parameters immune to weight decay.
\end{itemize}

\subsection{EWC Exact Empirical Fisher Calculation}
To properly lock foundational source features for the EWC baseline, the empirical diagonal Fisher Information Matrix (FIM) is evaluated over the \textbf{source training dataset} (bypassing validation sets). 

Note that the mathematical formulation calculates the expected squared gradient of the source CE loss at the \textit{per-example} level before batch averaging:
\begin{equation}
    \text{FIM}_i = \E_{\xVect \sim \calX_{\text{source\_train}}} \left[ \left( \frac{\partial \calL_{\text{CE}}}{\partial \thetaVect_i} \right)^2 \right]
\end{equation}
Many naive EWC implementations incorrectly square the averaged mini-batch gradient, which misrepresents the true Fisher Information Matrix. By computing the per-example squared gradients directly, our evaluation avoids this theoretical pitfall. During target domain optimization, the target CE loss is augmented by the EWC penalty:
\begin{equation}
    \calL_{\text{total}} = \calL_{\text{target\_CE}} + \frac{\lambda}{2} \sum_{i \in \text{Backbone}} \text{FIM}_i (\thetaVect_i - \thetaVect_i^*)^2
\end{equation}
where $\lambda$ is a global regularization hyperparameter, and $\thetaVect_i^*$ represents the model parameters of the optimal pre-trained source. Note that the summation is restricted to the backbone parameters; the newly initialized target classification head is excluded from the penalty.

\subsection{Hyperparameter Tuning and Final Evaluation}
To ensure a fair comparison, all algorithms undergo a hyperparameter pre-sweep prior to final evaluation. Hyperparameters are selected by maximizing the \textbf{H-Score}, defined as the Harmonic Mean of the validation accuracies on the target and source tasks:
\begin{equation}
    \text{H-Score} = \frac{2 \cdot \text{Acc}_{\text{val}}^{\text{target}} \cdot \text{Acc}_{\text{val}}^{\text{source}}}{\text{Acc}_{\text{val}}^{\text{target}} + \text{Acc}_{\text{val}}^{\text{source}}}
\end{equation}
This evaluation uses the \textit{Validation Sets} of both SVHN (target) and CIFAR-100 (source) to prevent data leakage. To accelerate the pre-sweep, the target training set is artificially capped at $1,000$ samples during the tuning phase. 

Rather than re-evaluating the grid for all $20$ distinct transfer scenarios, the algorithm isolates the very first task (Task 0) as a representative sample, sweeps the grids to find the globally optimal parameters, and locks them in for the entire benchmark run. The sweep evaluates grids specific to each methodology:
\begin{itemize}
    \item \textbf{DEFT:} A grid search across the target percentile $P \in \{60, 40, 30, 20\}$, and fixed fine-tuning learning rate $\eta \in \{0.04, 0.02, 0.01, 0.005, 0.001\}$.
    \item \textbf{EWC:} A grid search across the regularization strength $\lambda \in \{0.1, 0.5, 1.0, 5.0, 10.0, 20.0, 50.0\}$ and learning rate $\eta \in \{0.01, 0.005, 0.001, 0.0005, 0.0001\}$.
    \item \textbf{PEFT:} Unfreezes all 1D tensors (including layer biases and BN affine parameters) alongside the target head, while freezing all 2D/4D kernels. Optimized via a line search for $\eta \in \{0.01, 0.005, 0.0025, 0.001, 0.0005, 0.0001\}$.
    \item \textbf{Standard Full FT:} Unfreezes the entire network architecture. Optimized via a line search for $\eta \in \{0.02, 0.01, 0.005, 0.001, 0.0005, 0.0001\}$.
    \item \textbf{Standard Head-Only FT:} Freezes the entire backbone, optimizing only the final classification head. Optimized via a line search for $\eta \in \{0.04, 0.02, 0.01, 0.005, 0.001, 0.0005\}$.
\end{itemize}

Once optimal hyperparameters are secured, models are fine-tuned on the \textit{full} SVHN training set for $30$ epochs. Performance is tracked epoch-by-epoch using the SVHN Validation Set, and the algorithm identifies the best epoch yielding peak target validation accuracy. The final reported metrics in the main text are extracted at this best epoch by running the model against the unseen \textbf{SVHN Test Set} and \textbf{CIFAR-100 Test Set}.

\textbf{Source Retention Protocol:} To evaluate source retention, the original pre-trained source classification head is temporarily grafted back onto the fine-tuned backbone. Furthermore, a universal mask is applied to the grafted source head. This zeros out connections originating from upstream ``slack'' features $E > 0$ to ensure target-adapted weights do not corrupt the source logits. \textbf{Critically, the BN moving statistics $\mu,\sigma^2$ are reverted}; they are unlocked from the target-adapted backbone and restored to their original source states to guarantee a fair evaluation.

\section{Theoretical Guarantees of DEFT}
\label{app:theoretical_justification}

Continual learning seeks a parameter update $\Delta \thetaVect$ that minimizes target risk while bounding the degradation of the source representation. Rather than analyzing non-convex loss landscapes, we abstract the network into continuous-functional operators and evaluate layer-wise distortion. Under HOPE, the capacity of neuron $i$ is its Hilbert-Schmidt norm over $P_\calX$:
\begin{equation}
\label{eq:hope_capacity}
    \|f_i\|_{\calH} = \|\wVect_{\text{out},i}\|_2 \sqrt{K(i, i)} \quad \text{where} \quad K(i, i) = \E_{\xVect \sim P_{\calX}} \left[ \Psi(y_i(\xVect))^2 \right] \,.
\end{equation}

\textbf{The Setup: Core vs. Slack.}
DEFT partitions the network into two disjoint sets based on a capacity threshold $\tau$:
\begin{itemize}
    \item \textbf{Universal Core $\|f_i\|_{\calH} > \tau$:} Highly active neurons carrying source knowledge. We freeze these $E_i=0$ to prevent forgetting.
    \item \textbf{Slack Subset $\|f_j\|_{\calH} \le \tau$:} Weak, inactive neurons. We make these fully plastic $E_j=1$ to learn the target task.
\end{itemize}
To prevent the target-driven updates of the slack subset from corrupting the core, DEFT applies a \textbf{structural mask} at initialization $t=0$, severing all connections from upstream slack neurons to downstream core neurons.

We establish the stability and plasticity of this setup through four main guarantees:

\textbf{1. Bounded Initialization Shock (Theorem \ref{thm:static_bound}).} 
Severing the slack-to-core connections at $t=0$ introduces a static error. \textbf{As visualized in Figure~\ref{fig:deft_proofs}(a)}, the severed core-directed weights $\wVect_{\text{core},j}$ of a slack neuron $j$ form a sub-vector of its total outgoing weights $\wVect_{\text{out},j}$. Thus, $\|\wVect_{\text{core},j}\|_2 \le \|\wVect_{\text{out},j}\|_2$. Because we only sever connections from slack neurons (where $\|f_j\|_{\calH} \le \tau$), the initial distortion injected into the core is strictly bounded by $|\calN_{\text{slack}}^{(l)}| \tau^{(l)}$.

\textbf{2. Dynamic Decoupling (Theorem \ref{thm:dynamic_decoupling}).} 
During training $t>0$, slack neurons drift to learn the target task. Because the structural mask severs cross-connections at initialization $\wVect_{\text{core},j}^{(0)} = \zeroVect$ and the zero elasticity prevents gradient updates $\partial \wVect_{\text{core},j} / \partial t = \zeroVect$, these weights remain zero. \textbf{As illustrated in Figure~\ref{fig:deft_proofs}(b)}, a changing signal multiplied by zero is zero; thus, target parameter drift cannot penetrate the core.

\textbf{3. Freeing Space Safely (Proposition \ref{prop:epsilon_tradeoff}).} 
Freezing the core might leave insufficient parameter space for the target task. Deep networks often fragment a feature across $M$ correlated neurons. Statically freezing them incorrectly locks redundant volume. Instead, DEFT compresses these $M$ neurons into a single rank-1 parent operator. This preserves the feature while releasing the remaining $M-1$ children to the slack subset, freeing the parameter space $\mathbb{R}^{(M-1) \times c}$ for optimization. Because the optimal parent minimizes the Hilbert-Schmidt projection error, this structural distortion is upper-bounded by the sum of the children's initial capacities: $\delta_k \le \sum_{m=1}^M \|f_m\|_{\calH^{(l)}}$. (While we structurally guarantee this capacity release, we do not formally prove gradient descent finds a global optimum in this non-convex space).

\textbf{4. Unified Cumulative Bound (Corollary \ref{cor:unified_bound}).} 
Cutting connections and merging neurons across layers introduces multiple distortions. However, because Theorem \ref{thm:dynamic_decoupling} guarantees zero interference during training, the network's total error does not compound exponentially. By the triangle inequality in $\calH^{(l)}$, the global degradation in the function space is static and bounded by the linear sum of the severed slack connections and the merge projection errors.

\begin{figure}[htbp]
\centering
% Left: Vector Geometry
\begin{subfigure}{0.48\textwidth}
    \centering
    \begin{tikzpicture}[font=\sffamily, >=Stealth, scale=0.85, transform shape]
        % Shaded Projection Background
        \fill[coreColor!5] (0,0) -- (3,0) -- (3,2) -- cycle;
        
        % Axes
        \draw[<->, thick, gray!80] (0, 3) node[above, text=black!70] {To Slack} -- (0,0) -- (4, 0) node[right, text=black!70] {To Core};
        
        % Vectors
        \draw[->, ultra thick, coreColor, drop shadow={opacity=0.2}] (0,0) -- (3, 0) node[midway, below] {Cut Component};
        \draw[->, ultra thick, slackColor, drop shadow={opacity=0.2}] (0,0) -- (0, 2) node[midway, left, align=right] {Kept \\ Component};
        
        % Total Vector
        \draw[->, ultra thick, black!70, drop shadow={opacity=0.3}] (0,0) -- (3, 2) node[midway, sloped, above] {Total $\mathbf{w}_{\text{out}}$};
        
        % Dashed projections
        \draw[dashed, thick, gray!60] (3,0) -- (3,2);
        \draw[dashed, thick, gray!60] (0,2) -- (3,2);
        
        % Right angle
        \draw[gray!80, thick] (0.3, 0) -- (0.3, 0.3) -- (0, 0.3);
    \end{tikzpicture}
    \caption{Vector Geometry (Theorem \ref{thm:static_bound})}
\end{subfigure}\hfill
% Right: Dynamic Signal Blocking
\begin{subfigure}{0.48\textwidth}
    \centering
    \begin{tikzpicture}[font=\sffamily, >=Stealth, scale=0.6, transform shape]
        % Nodes with gradients and shadows
        \node[draw=black, thick, rounded corners, fill=slackColor!20, drop shadow={opacity=0.3, shadow xshift=2pt, shadow yshift=-2pt}, minimum height=3cm, text width=2.5cm, align=center] (drift) at (0,0) {Preceding Layer's\\Slack\\Drift Signal\\($\Delta a$)\\[1ex]\textit{\small(Highly chaotic)}};
        
        \node[draw=danger, ultra thick, circle, fill=white, drop shadow={opacity=0.2}, inner sep=5pt] (mult) at (4.5,0) {\Large $\times$};
        \node[above=0.2cm of mult, font=\bfseries, danger, align=center] {Cut Connection \\ $\color{black}\boldsymbol{w}_{\text{core}} = \boldsymbol{0}$};
        
        \node[draw=black, thick, rounded corners, fill=coreColor!20, drop shadow={opacity=0.3, shadow xshift=2pt, shadow yshift=-2pt}, minimum height=3cm, text width=2.5cm, align=center] (core) at (9,0) {Current Layer's\\Core Update\\[1ex]($\Delta \mathbf{s} = \mathbf{0}$)};

        % Paths
        \draw[->, ultra thick, decorate, decoration={snake, amplitude=1.2mm, segment length=3mm, post length=3mm}, slackColor] (drift) -- (mult) node[midway, above=3mm, font=\bfseries] {$\color{black}\Delta \neq 0$};
        \draw[->, ultra thick, coreColor] (mult) -- (core) node[midway, above, font=\bfseries] {$\color{black}0$};
    \end{tikzpicture}
    \caption{Dynamic Signal Blocking (Theorem \ref{thm:dynamic_decoupling})}
\end{subfigure}
\caption{Visual intuition for the theoretical bounds of DEFT. (a) Vector decomposition demonstrates the initialization shock is bounded by the slack capacities. (b) A changing signal multiplied by the zeroed structural mask ensures dynamic decoupling.}
\label{fig:deft_proofs}
\end{figure}

\subsection{Algorithmic Axioms and Partitioning of Neurons}

Here we formally define how DEFT partitions neurons into the peripheral slack and universal core. We also present a few axioms that merely mirror the mechanisms designated in the DEFT algorithm itself. These \textit{axioms do not impose any extra assumptions or restrictions on DEFT beyond the entirety of the algorithm itself}.

\begin{definition}[Neurons Partition]
\label{def:subset_partition}
For any intermediate layer $l$ containing a set of active neurons $\calI^{(l)}$, a continuous capacity threshold $\tau^{(l)} > 0$ divides the layer into two disjoint functional subsets:
\begin{align}
    \calN^{(l)}_{\text{core}}   &= \{i \in \calI^{(l)} \mid \|f_i\|_{\calH^{(l)}} > \tau^{(l)}\} \\
    \calN^{(l)}_{\text{slack}}  &= \{j \in \calI^{(l)} \mid \|f_j\|_{\calH^{(l)}} \le \tau^{(l)}\}
\end{align}
\end{definition}

\begin{axiom}[The Algorithmic Structural Mask]
\label{ax:structural_mask}
At initialization $t=0$, DEFT prevents cross-subset interference by severing the projections from the upstream slack subset to the downstream core. For any weight $w_{u,j}$ from an upstream slack neuron $j \in \calN^{(l)}_{\text{slack}}$ to a downstream core neuron $u \in \calN^{(l+1)}_{\text{core}}$, the mask $\Mmat$ enforces $w_{u,j} = 0$.
\end{axiom}

\begin{axiom}[The Gradient Elasticity]
\label{ax:gradient_scaling}
During target optimization $t>0$, the parameter update for any parameter vector $\thetaVect_k$ belonging to a destination neuron $k$ is governed by:
\begin{equation}
    \frac{\partial \thetaVect_k}{\partial t} = E_k \cdot \nabla_{\thetaVect_k} \calL_{\text{target}}
\end{equation}
where $E_k \in \{0, 1\}$ is the elasticity multiplier. Slack neurons are fully plastic $E_j = 1 \, \forall j \in \calN_{\text{slack}}$ and core neurons are strictly frozen $E_u = 0 \, \forall u \in \calN_{\text{core}}$.
\end{axiom}

\begin{axiom}[The Merged-Vessel Release]
\label{ax:merge_constraint}
When the continuous encoder collapses redundant neurons into a rank-1 parent operator, DEFT assigns full plasticity $E_j = 1$ to the structurally released child neurons.
\end{axiom}

\subsection{Layer-to-Layer Bounding Framework}
We now establish a rigorous bound on the structural distortion, decoupled entirely from target parameter drift.

\begin{assumption}[Bipartite Separability]
\label{assump:topology}
Let the pre-activation $y_u$ of a downstream core neuron $u \in \calN^{(l+1)}_{\text{core}}$ be a linear combination of upstream activations $g_k(\xVect) = \Psi(y_k(\xVect))$ for $k \in \calI^{(l)}$, governed by parameterized weights $w_{u,k}$ and unparameterized identity connections denoted by an adjacency indicator $\mathbb{I}_{u,k} \in \{0, 1\}$. We assume that for all upstream slack neurons $j \in \calN^{(l)}_{\text{slack}}$, the network identity routing is strictly zero:
\begin{equation}
    \mathbb{I}_{u,j} = 0 \quad \forall j \in \calN^{(l)}_{\text{slack}}, \forall u \in \calN^{(l+1)}_{\text{core}}
\end{equation}
This ensures that the mask strictly dictates inter-subset signal flow.
\end{assumption}

\paragraph*{Remark on Feed-Forward Architectures:} For purely sequential architectures (e.g., standard MLPs or VGG), $\mathbb{I}_{u,k} \equiv 0$ globally. This assumption only becomes non-trivial in architectures with unparameterized residual connections (e.g., ResNets), where an identity mapping could bypass the mask if the architecture cross-wired the subsets.

\begin{theorem}[The Static Initialization Shock Bound]
\label{thm:static_bound}
Let $g_k(\xVect) \triangleq \Psi(y_k(\xVect))$, and let $\sVect_{\text{core}}(\xVect) \triangleq \sum_{k \in \calI^{(l)}} \wVect_{\text{core},k} \, g_k(\xVect)$ be the input to the downstream core. The initialization mask $\Mmat$ introduces a distortion $\Delta \sVect_{\text{core}}^{\text{init}}(\xVect) \triangleq \sum_{j \in \calN_{\text{slack}}^{(l)}} \wVect_{\text{core},j} g_j(\xVect)$ bounded over $P_\calX$ by:
\begin{equation}
    \norm{\Delta \sVect_{\text{core}}^{\text{init}}}_{\calH^{(l)}} \le |\calN_{\text{slack}}^{(l)}| \, \tau^{(l)}
\end{equation}
\end{theorem}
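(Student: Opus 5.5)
The approach is to apply the triangle inequality in the Hilbert space and then bound each slack neuron's term by its capacity.

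First, I would identify $\Delta \sVect_{\text{core}}^{\text{init}}$ as a finite sum of rank-1 operators. For each $j \in \calN_{\text{slack}}^{(l)}$, define the truncated operator $h_j \triangleq g_j \otimes \wVect_{\text{core},j}$. Here $\wVect_{\text{core},j}$ is the sub-vector of $\wVect_{\text{out},j}$ restricted to the coordinates feeding downstream core neurons. Zero-padding it to $\R^c$ gives an element of $\calH_{\text{out}}$. Assumption~\ref{assump:topology} guarantees that no identity routing carries $g_j$ into the core, so the whole contribution of slack neuron $j$ to $\sVect_{\text{core}}$ is the parameterized term that Axiom~\ref{ax:structural_mask} removes. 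The distortion is therefore exactly $\Delta \sVect_{\text{core}}^{\text{init}} = \sum_{j \in \calN_{\text{slack}}^{(l)}} h_j$.

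Next, the triangle inequality for $\norm{\cdot}_{\calH^{(l)}}$ gives $\norm{\Delta \sVect_{\text{core}}^{\text{init}}}_{\calH^{(l)}} \le \sum_{j} \norm{h_j}_{\calH^{(l)}}$. Each term then needs to be compared to the capacity $\norm{f_j}_{\calH^{(l)}}$. Using the factorization of the tensor inner product from Section~\ref{sec:capacity}, I would write $\norm{h_j}_{\calH^{(l)}} = \sqrt{K(j,j)}\,\norm{\wVect_{\text{core},j}}_2$. Because $\wVect_{\text{core},j}$ is a coordinate restriction of $\wVect_{\text{out},j}$, we have $\norm{\wVect_{\text{core},j}}_2 \le \norm{\wVect_{\text{out},j}}_2$. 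Hence $\norm{h_j}_{\calH^{(l)}} \le \sqrt{K(j,j)}\,\norm{\wVect_{\text{out},j}}_2 = \norm{f_j}_{\calH^{(l)}} \le \tau^{(l)}$, where the last step is Definition~\ref{def:subset_partition}. Summing over the $|\calN_{\text{slack}}^{(l)}|$ slack neurons gives the claimed bound.

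The main obstacle is bookkeeping rather than analysis: $\Delta \sVect_{\text{core}}^{\text{init}}$ and $f_j$ must be measured in one common space $\calH^{(l)} = L_2(\calX, P_{\calX}; \R^c)$. I would handle this by zero-padding the core-restricted output vectors, so that the norm comparison reduces to the monotonicity of the Euclidean norm under coordinate restriction. I would also note explicitly that the inequality is applied to the source-state weights at $t=0$, which are the weights that define the partition.
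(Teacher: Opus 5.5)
Your proposal is correct and follows essentially the same route as the paper. Both arguments apply the triangle inequality in $\calH^{(l)}$, use the rank-1 factorization $\norm{g_j \otimes \wVect_{\text{core},j}}_{\calH^{(l)}} = \sqrt{K(j,j)}\,\norm{\wVect_{\text{core},j}}_2$, invoke the sub-vector bound $\norm{\wVect_{\text{core},j}}_2 \le \norm{\wVect_{\text{out},j}}_2$, and finish with the slack threshold $\norm{f_j}_{\calH^{(l)}} \le \tau^{(l)}$; your zero-padding bookkeeping is just an explicit version of the paper's decomposition $\wVect_{\text{out},j} = \wVect_{\text{core},j} \oplus \wVect_{\text{slack},j}$.
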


\begin{proof}
By Assumption \ref{assump:topology}, $\sVect_{\text{core}}(\xVect)$ decomposes strictly over the parameterized upstream subsets:
\begin{equation}
    \sVect_{\text{core}}(\xVect) = \sum_{i \in \calN_{\text{core}}^{(l)}} g_i(\xVect) \, \wVect_{\text{core},i} + \sum_{j \in \calN_{\text{slack}}^{(l)}} g_j(\xVect) \, \wVect_{\text{core},j}
\end{equation}
where $\wVect_{\text{core},k}$ is the sub-vector of structural weights connecting upstream neuron $k$ to the downstream core. By Axiom \ref{ax:structural_mask}, DEFT enforces $\wVect_{\text{core},j} = \zeroVect \, \forall j \in \calN_{\text{slack}}^{(l)}$. The resulting functional distortion evaluates to the severed projections:
\begin{equation}
    \Delta \sVect_{\text{core}}^{\text{init}}(\xVect) = \sum_{j \in \calN_{\text{slack}}^{(l)}} g_j(\xVect) \, \wVect_{\text{core},j}
\end{equation}
Applying the triangle inequality in $\calH^{(l)}$ yields:
\begin{equation}
    \norm{\Delta \sVect_{\text{core}}^{\text{init}}}_{\calH^{(l)}} \le \sum_{j \in \calN_{\text{slack}}^{(l)}} \norm{\wVect_{\text{core},j}}_2 \norm{g_j}_{\calH_{\text{in}}^{(l)}}
\end{equation}
By Definition \ref{eq:hope_capacity}, $\norm{g_j}_{\calH_{\text{in}}^{(l)}} = \sqrt{K(j,j)}$. Furthermore, because $\wVect_{\text{out},j} = \wVect_{\text{core},j} \oplus \wVect_{\text{slack},j}$, its Euclidean norm bounds its sub-vectors: $\norm{\wVect_{\text{core},j}}_2 \le \norm{\wVect_{\text{out},j}}_2$. Substituting these properties gives:
\begin{equation}
    \norm{\Delta \sVect_{\text{core}}^{\text{init}}}_{\calH^{(l)}} \le \sum_{j \in \calN_{\text{slack}}^{(l)}} \norm{\wVect_{\text{out},j}}_2 \sqrt{K(j,j)} = \sum_{j \in \calN_{\text{slack}}^{(l)}} \|f_j\|_{\calH^{(l)}}
\end{equation}
By Definition \ref{def:subset_partition}, the capacity of every slack neuron satisfies $\|f_j\|_{\calH^{(l)}} \le \tau^{(l)}$. Therefore, the functional distortion is strictly bounded by:
\begin{equation}
    \norm{\Delta \sVect_{\text{core}}^{\text{init}}}_{\calH^{(l)}} \le \sum_{j \in \calN_{\text{slack}}^{(l)}} \tau^{(l)} = \tau^{(l)} \, |\calN_{\text{slack}}^{(l)}|
\end{equation}
\end{proof}

\subsubsection{Dynamic Decoupling}
The dynamic decoupling of the core emerges as a direct consequence of Axiom \ref{ax:gradient_scaling}.

\begin{theorem}[Dynamic Decoupling]
\label{thm:dynamic_decoupling}
During target fine-tuning $t>0$, the operators $f_i^{(l)}$ of all neurons within the core experience zero dynamic deviation:
\begin{equation}
\forall \, l \,,\, i \in \calN^{(l)}_{\text{core}} \,;\, \norm{f_i^{(l), (t)} - f_i^{(l), (0)}}_{\calH^{(l)}} = 0
\end{equation}
\end{theorem}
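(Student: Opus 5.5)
The plan is an induction on the layer index $l$ from input to output. The inductive claim is that every core neuron's operator $f_i^{(l)} = g_i^{(l)} \otimes \wVect_{\text{out},i}$ is frozen in time. Because the operator is rank-1, it suffices to show that both factors are constant in $t$.

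\textbf{Step 1: parameters owned by a core neuron stay fixed.} By Axiom~\ref{ax:gradient_scaling}, every parameter vector $\thetaVect_i$ belonging to a core destination neuron $i$ obeys $\partial \thetaVect_i/\partial t = E_i \nabla_{\thetaVect_i}\calL_{\text{target}} = \zeroVect$, since $E_i=0$. These parameters are the incoming weights $\wVect_{\text{raw},i}$ and the BN parameters $(\gamma_i,\beta_i)$. The running statistics $(\mu_i,\sigma_i)$ are frozen as part of the core, and are in any case restored at evaluation per the retention protocol. Hence $\thetaVect_i^{(t)} = \thetaVect_i^{(0)}$, and the effective parameters $\wVect_{\text{in},i}^{\text{eff}}$ and $b_i$ of (\ref{eq:eff_params}) do not change.

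\textbf{Step 2: the input seen by a core neuron stays fixed.} For a core neuron $u$ in layer $l+1$, Assumption~\ref{assump:topology} lets us decompose its pre-activation as
\[
y_u = \sum_{i\in\calN_{\text{core}}^{(l)}} w_{u,i}\, g_i + \sum_{j\in\calN_{\text{slack}}^{(l)}} w_{u,j}\, g_j,
\]
with no identity routing from slack to core. By Axiom~\ref{ax:structural_mask}, $w_{u,j}^{(0)}=0$ for every slack $j$. These weights belong to the destination neuron $u$, so Step~1 gives $w_{u,j}^{(t)}=0$ for all $t$. Therefore the slack sum vanishes identically, however much $g_j^{(t)}$ drifts; this is the "signal times zero" argument of Figure~\ref{fig:deft_proofs}(b). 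The remaining weights $w_{u,i}$ are frozen by Step~1, and the core activations $g_i^{(l),(t)}$ are frozen by the inductive hypothesis.

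\textbf{Step 3: base case and conclusion.} The base case is the first hidden layer, whose input is the raw data and is therefore independent of $t$. Combining Steps~1 and~2, $g_u^{(l+1),(t)} = g_u^{(l+1),(0)}$ pointwise in $\xVect$, and hence in $\calH_{\text{in}}$.

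\textbf{Main obstacle: the output factor $\wVect_{\text{out},i}$.} These are the outgoing weights of a core neuron, which belong to downstream neurons, and those downstream neurons may be slack and therefore plastic. So $\wVect_{\text{out},i}$ need not be constant as literally written. I will handle this in two parts. First, read the capacity at the core boundary: restrict $\wVect_{\text{out},i}$ to its core-directed component $\wVect_{\text{core},i}$, which is frozen by Step~1, mirroring the split used in Theorem~\ref{thm:static_bound}. Second, observe that the components routed into the slack only feed plastic neurons, which by Step~2 never re-enter the core. A related subtlety is that the surrogate $P_\calX$ and the kernel $K$ depend only on the frozen BN statistics of the core neurons, so the Hilbert norm itself does not move.

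With these points settled, I conclude $\norm{f_i^{(l),(t)} - f_i^{(l),(0)}}_{\calH^{(l)}}=0$ for every layer $l$ and every core neuron $i$.
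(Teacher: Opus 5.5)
Your argument is the paper's own induction. Core parameters are frozen because $E_u=0$. The severed slack-to-core weights stay at zero because they belong to frozen destination neurons. The static input anchors the base case. Your Steps 1--3 reproduce the paper's inductive step essentially verbatim.

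The obstacle you isolate is real, and the paper's proof does not handle it either: it concludes only $g_u^{(t)}=g_u^{(0)}$ and never mentions $\wVect_{\text{out},u}$. In fact the statement as literally written is false in general. A weight $w_{u,i}$ from a core neuron $i$ to a slack neuron $u$ is left unmasked, since the mask cuts only slack-to-core edges. Its gradient is scaled by the downstream elasticity $E_u=1$, so it moves during fine-tuning. Because $g_i$ is fixed, $\norm{f_i^{(t)}-f_i^{(0)}}_{\calH}^2 = K(i,i)\,\norm{\wVect_{\text{out},i}^{(t)}-\wVect_{\text{out},i}^{(0)}}_2^2$, which is positive as soon as any such weight changes and $K(i,i)>0$.

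Your handling of the output factor should therefore be presented as a correction of the statement, not as a step that ``settles'' it. Restricting to $\wVect_{\text{core},i}$ changes the operator being measured. Your second point, that slack-directed components feed only plastic neurons which never re-enter the core, shows the deviation is harmless to the core, not that it vanishes. The honest conclusion of your argument, and of the paper's, is that $g_i^{(t)}=g_i^{(0)}$ for every core neuron, equivalently that the restricted operator $g_i\otimes\wVect_{\text{core},i}$ is constant in $t$. Your closing sentence should claim that, not the literal equality for $f_i$.

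Two smaller points. First, the shared layer surrogate $P_{\calX}$ is constrained by the BN statistics and raw weights of every neuron in the layer, slack ones included, so it is not a function of core statistics alone. You do not need that claim: pointwise equality in $\xVect$ gives zero distance under any fixed measure, and the paper simply holds $P_{\calX}$ fixed as the source anchor. Second, the running statistics $(\mu_i,\sigma_i)$ are not frozen by the gradient axiom, since they are not gradient-updated. Their constancy rests on the retention protocol, as you note in passing. The paper makes the same assumption silently by listing $b_u$ among the frozen parameters.
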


\begin{proof}
We proceed by structural induction over the network layers.

\textbf{Base Case:} The input $\xVect \sim P_\calX$ is a static anchor unmodified by optimization $\Delta \xVect = \zeroVect$.

\textbf{Inductive Step:} Assume that for layer $l$, the core functional mappings exhibit zero drift: $g_i^{(t)}(\xVect) = g_i^{(0)}(\xVect)$ for all $i \in \calN_{\text{core}}^{(l)}$. 

The dynamic deviation of the signal injected into the downstream core of layer $l+1$ at time $t$ is defined by $\Delta \sVect_{\text{core}}(\xVect) = \sVect_{\text{core}}^{(t)}(\xVect) - \sVect_{\text{core}}^{(0)}(\xVect)$. Expanding this difference over the core and slack subsets yields:
\begin{equation}
\Delta \sVect_{\text{core}}(\xVect) = \sum_{i \in \calN_{\text{core}}^{(l)}} \left( \wVect_{\text{core},i}^{(t)} g_i^{(t)}(\xVect) - \wVect_{\text{core},i}^{(0)} g_i^{(0)}(\xVect) \right) + \sum_{j \in \calN_{\text{slack}}^{(l)}} \left( \wVect_{\text{core},j}^{(t)} g_j^{(t)}(\xVect) - \wVect_{\text{core},j}^{(0)} g_j^{(0)}(\xVect) \right)
\end{equation}

For the first summation, Axiom \ref{ax:gradient_scaling} freezes downstream core weights $E_u = 0 \implies \wVect_{\text{core},i}^{(t)} = \wVect_{\text{core},i}^{(0)}$, and the inductive hypothesis guarantees $g_i^{(t)}(\xVect) = g_i^{(0)}(\xVect)$. The first summation is therefore zero. 

For the second summation, Axiom \ref{ax:structural_mask} severs cross-subset weights at initialization $\wVect_{\text{core},j}^{(0)} = \zeroVect$. Because $E_u = 0$, these structural weights receive zero gradient updates and remain zero $\wVect_{\text{core},j}^{(t)} = \zeroVect$. Thus, the second summation evaluates to zero irrespective of the slack drift $g_j^{(t)}$.

Consequently, $\Delta \sVect_{\text{core}}(\xVect) \equiv \zeroVect$. Since the downstream core parameters $\thetaVect_u \in \{\wVect_{\text{in},u}, b_u, \gamma_u, \beta_u\}$ are also frozen $E_u = 0$, the downstream core mappings experience zero drift $g_u^{(t)}(\xVect) = g_u^{(0)}(\xVect)$. By induction, the target learning is decoupled from the core representation across all layers.
\end{proof}

\paragraph*{Remark on Covariate Shift:}
Theorem \ref{thm:dynamic_decoupling} guarantees the immutability of the core operators relative to the source distribution anchor $P_\calX$. In practice, the network processes novel target data $\calD_T$ during fine-tuning. Consequently, empirical activations flowing through the core will inevitably shift due to standard covariate shift, not parameter degradation. DEFT ensures that any shift in downstream representations is driven entirely by the target data itself, free from the compounding noise of parameter degradation. This preserves foundational knowledge as an uncorrupted lens for processing new domains.

\begin{corollary}[Cumulative Masking-Induced Bound]
\label{cor:mask_bound}
Assuming a masking-based reduction without feature merging, the global structural distortion $D_{\text{global}}^{\text{mask}}$ evaluates as the cumulative sum of normalized layer-wise distortions. Normalizing the initialization shock at each layer by the downstream core capacity $E_{\text{core}}^{(l+1)} \triangleq \sum_{u \in \calN_{\text{core}}^{(l+1)}} \|f_u\|_{\calH^{(l+1)}}$, the total distortion is bounded by:
\begin{equation}
    D_{\text{global}}^{\text{mask}} \triangleq \sum_{l=1}^{L-1} \frac{\norm{\Delta \sVect_{\text{core}}^{\text{init}, (l+1)}}_{\calH^{(l)}}}{E_{\text{core}}^{(l+1)}} \le \sum_{l=1}^{L-1} \left( \frac{|\calN_{\text{slack}}^{(l)}| \tau^{(l)}}{E_{\text{core}}^{(l+1)}} \right)
\end{equation}
This confirms that source task degradation is strictly bounded by the chosen capacity thresholds, circumventing the exponential scaling issues of global Lipschitz constants.
\end{corollary}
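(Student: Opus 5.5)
The bound follows by applying Theorem~\ref{thm:static_bound} layer by layer and summing the resulting normalized terms. Since $D_{\text{global}}^{\text{mask}}$ is \emph{defined} as the sum over $l=1,\dots,L-1$ of the layer-wise shocks $\norm{\Delta \sVect_{\text{core}}^{\text{init},(l+1)}}$ divided by the downstream core capacity $E_{\text{core}}^{(l+1)}$, no operator composition across layers has to be controlled. The cumulative statement reduces to a termwise inequality between nonnegative summands, and termwise bounds can then be added.

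\textbf{Step 1.} Fix a layer $l$. Apply Theorem~\ref{thm:static_bound} to the transition from layer $l$ to layer $l+1$. Its hypotheses are Assumption~\ref{assump:topology} (no identity routing from slack to core) and Axiom~\ref{ax:structural_mask}. These give
\[
\norm{\Delta \sVect_{\text{core}}^{\text{init},(l+1)}} \le |\calN_{\text{slack}}^{(l)}|\,\tau^{(l)}.
\]
Here I would check that the indexing in the corollary matches the theorem. The shock entering layer $l+1$ is generated by the slack neurons of layer $l$, so the bound should carry $\tau^{(l)}$ and $|\calN^{(l)}_{\text{slack}}|$, exactly as the corollary states. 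The norm is the one induced by the surrogate $P_\calX$ at the input of layer $l+1$.

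\textbf{Step 2.} Divide the Step~1 inequality by $E_{\text{core}}^{(l+1)}$. This requires $E_{\text{core}}^{(l+1)}>0$, which holds whenever the downstream core is nonempty, because each core neuron has $\|f_u\|>\tau^{(l+1)}>0$. If some core is empty, that term should be excluded or treated by convention; I would state this as an implicit nondegeneracy assumption. Since $E_{\text{core}}^{(l+1)}$ is a positive constant independent of the mask, the inequality survives the division.

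\textbf{Step 3.} Sum over $l=1,\dots,L-1$ to obtain the displayed bound. The assumption that there is no merging ensures that every layer's distortion comes only from masking, so Theorem~\ref{thm:static_bound} accounts for all of it.

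For the closing remark about avoiding exponential Lipschitz blow-up, I would point out that the bound is additive rather than multiplicative. Theorem~\ref{thm:dynamic_decoupling} supports this by guaranteeing that core operators do not drift during training. Each layer's shock is therefore measured against the fixed source anchor $P_\calX$ and is not propagated through later layers. Absorbing upstream errors into the downstream inputs is a modeling choice built into the definition of $D_{\text{global}}^{\text{mask}}$, so it needs no separate proof.

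\textbf{Main obstacle.} The only delicate point is the mismatch of norm spaces in Step~1. The corollary writes $\calH^{(l)}$, while the shock lives at the input of layer $l+1$. I would resolve this by noting that Theorem~\ref{thm:static_bound} measures the shock in the same Hilbert space in which the slack capacities $\|f_j\|_{\calH^{(l)}}$ are defined, so the two sides of the inequality are consistent.
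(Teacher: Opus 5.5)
Your proposal is correct and matches the intended argument. The paper gives no separate proof: it treats the corollary as an immediate consequence of Theorem~\ref{thm:static_bound} applied layer by layer, divided by the positive normalizer $E_{\text{core}}^{(l+1)}$, and summed, which is the same termwise reasoning it sketches for Corollary~\ref{cor:unified_bound}. Your explicit nondegeneracy caveat $E_{\text{core}}^{(l+1)}>0$ is a sensible addition that the paper leaves implicit. One small correction to Step~1: the shock's norm is taken over the surrogate at the input of layer $l$, not layer $l+1$. That is exactly why it lives in $\calH^{(l)}$, as your final paragraph correctly concludes.
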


\subsection{Dynamic Resolution of Redundancy via Bounded Trade-off}
A static capacity threshold evaluates functional volume in isolation, which ignores the \textit{redundancy trap} where networks fragment a feature across multiple correlated neurons. To compress this redundant volume and avoid incorrectly locking it, DEFT employs HOPE's \texttt{MERGE} operation.

\begin{proposition}[Bounded Distortion of Merging]
\label{prop:epsilon_tradeoff}
Consolidating $M$ correlated neurons into a rank-1 parent neuron guarantees the release of target optimization volume while bounding the distortion of the source representation.
\end{proposition}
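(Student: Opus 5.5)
The plan has two parts, matching the two claims in the statement: the merge releases optimization volume, and it keeps the distortion of the source representation bounded. I make the second claim precise as the bound $\delta_k \le \sum_{m=1}^M \|f_m\|_{\calH^{(l)}}$ announced in the overview of Appendix~\ref{app:theoretical_justification}. Here $\delta_k$ is the Hilbert--Schmidt distortion of the aggregate signal sent downstream when the children $f_1,\dots,f_M$ are replaced by a single parent $f_p^*$.

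\textbf{Volume release.} This part is structural. After the merge, one realizable parent neuron in $\calN$ carries the output; following the rank-1 construction of Section~\ref{sec:parent_params}, I take it to occupy the slot of one child. The remaining $M-1$ children are then no longer needed to reproduce the source mapping. By Axiom~\ref{ax:merge_constraint} they receive $E_j=1$. Their outgoing weight vectors, each in $\R^c$, therefore become trainable, which frees a parameter block isomorphic to $\R^{(M-1)\times c}$. Because these neurons are now in the slack, Axiom~\ref{ax:structural_mask} severs their connections into the downstream core. Theorem~\ref{thm:dynamic_decoupling} then shows that their later drift cannot reach the core, so the release does not contribute to dynamic distortion.

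\textbf{Distortion bound.} I define $\delta_k \triangleq \norm{\sum_{m=1}^M f_m - f_p^*}_{\calH^{(l)}}$, the change in the downstream signal. The key step is a comparison argument based on optimality. The trivial parent $f_p=\zeroVect$, which is realizable with zero weights, corresponds to pruning every child. Its distortion is $\norm{\sum_m f_m}_{\calH} \le \sum_m \|f_m\|_{\calH}$ by the triangle inequality. I then want to show that $f_p^*$ does no worse than this competitor. For fixed scale $s$, Theorem~\ref{thm:optimal_parent} chooses $\psi^*$ to maximize $\inner{\psi}{\sum_m f_m}_{\calH}$ over the unit sphere of $\calN$, so $b=\inner{\psi^*}{\sum_m f_m}_{\calH}\ge 0$. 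This holds because the sign check can flip $\vVect$, and if $b$ were negative the zero competitor would be better. Writing $F=\sum_m f_m$, we get $\norm{F-s\psi^*}^2=\norm{F}^2-2sb+s^2$. This is at most $\norm{F}^2$ whenever $0\le s\le 2b$. A final triangle inequality then gives $\delta_k\le\norm{F}_{\calH}\le\sum_m\|f_m\|_{\calH}$.

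\textbf{Main obstacle.} The difficulty is that $s^*$ minimizes the capacity-normalized cost $\calJ_{\text{merge}}$, not the raw residual. So I must check directly that $s^*\le 2b$. For $M=2$ the closed form $s^*=(a+bE_{\text{rem}})/(2E_{\text{rem}}+b)$ reduces this to $a+bE_{\text{rem}}\le 2b(2E_{\text{rem}}+b)$, i.e.\ $a\le 3bE_{\text{rem}}+2b^2$. This does not hold for arbitrary geometry. I would therefore fall back on the regime the algorithm actually uses: correlated children, $\rho\ge\rho^*$ as in Lemma~\ref{lemma:discontinuity_capacity_bound}. In the collinear limit $a=x^2+y^2$ and $b=x+y$, so $a\le b^2$ and the inequality holds. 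Continuity in $\rho$ then extends it to a neighborhood, exactly as in that lemma's proof. If this step fails, I would instead state the proposition for the residual-optimal scale $s=b$, giving $\delta_k=\sqrt{\norm{F}^2-b^2}\le\sum_m\|f_m\|_{\calH}$ unconditionally. I would also note that $M>2$ follows by iterating pairwise merges and summing the bounds with the triangle inequality.
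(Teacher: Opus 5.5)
Your volume-release paragraph matches the paper. Your distortion argument is sound in its main line, but it proves a different inequality by a different mechanism.

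\textbf{Comparison with the paper's route.} The paper never touches the deployed scale. It \emph{defines} $\delta_k \triangleq \min_{f_p}\bigl(\sum_{m=1}^M \norm{f_m - f_p}_{\calH}^2\bigr)^{1/2}$, the stacked error of $[f_1,\dots,f_M]$ against $[f_p,\dots,f_p]$. It then plugs in the competitor $f_p=\zeroVect$ and closes with $\norm{\cdot}_2\le\norm{\cdot}_1$. That is two lines, unconditional, and uniform in $M$.

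It does, however, sidestep the point you flagged. HOPE's $f_p^*$ minimizes the normalized $\calJ_{\text{merge}}$, not this numerator. So the paper's $\delta_k$ belongs to a hypothetical residual-optimal parent, not the deployed one. For the deployed $M=2$ parent, even the stacked error beats $f_p=\zeroVect$ only when $s^*\le b$, i.e.\ $a\le bE_{\text{rem}}+b^2$, which again needs correlation.

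You instead bound $\norm{\sum_m f_m - f_p^*}_{\calH}$. This is the perturbation the next layer actually receives once the children are replaced by one parent, and it is the quantity that enters $\Delta\sVect_{\text{core}}$ in Corollary~\ref{cor:unified_bound}. The price is the condition $s^*\le 2b$. That in turn requires the correlation hypothesis and a continuity step at the same level of rigor as Lemma~\ref{lemma:discontinuity_capacity_bound}. Your $s=b$ fallback is essentially the paper's ``optimal beats zero'' argument transplanted to your measure.

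\textbf{A small fix on $b\ge 0$.} For the deployed parent, $b\ge 0$ should come from the choice of $\vVect^*$. It is the normalized $\zVect(\uVect_{\text{c}})=\sum_{k}K(\uVect_{\text{c}},\tilde{\wVect}^k_{\text{in}})\wVect^k_{\text{out}}$. This gives $b=\norm{\zVect(\uVect_{\text{c}})}_2/\sqrt{K(\uVect_{\text{c}},\uVect_{\text{c}})}\ge 0$ for any $\uVect_{\text{c}}$. Exact maximality over the unit sphere of $\calN$ is not available, because $\uVect_{\text{c}}$ comes from the linearized eigenvector problem.

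\textbf{The step that fails: extending to $M>2$.} Iterating pairwise merges and summing the per-step bounds does not give $\sum_m\norm{f_m}_{\calH}$. The $k$-th step's bound is $\norm{p_{k-1}+f_{k+1}}_{\calH}$, which contains the intermediate parent. So the sum overshoots: already for $M=3$ in the collinear case, the two step bounds add to $\sum_m\norm{f_m}_{\calH}+\norm{p_1}_{\calH}$.

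You do not need the iteration. Your identity $\norm{F-s\psi^*}_{\calH}^2=\norm{F}_{\calH}^2-2sb+s^2$ holds for $F=\sum_m f_m$ with any $M$. The $M$-way scale minimizing the same normalized cost is $s^*=(a+bE_{\text{rem}})/(ME_{\text{rem}}+b)$, with $a=\sum_m\norm{f_m}_{\calH}^2$. Then $s^*\le 2b$ becomes $a\le(2M-1)bE_{\text{rem}}+2b^2$. The collinear limit $a=\sum_m x_m^2\le(\sum_m x_m)^2=b^2$ again secures this.

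If you insist on iterated pairwise merges, bound $\norm{F-p_{M-1}}_{\calH}$ directly rather than summing step bounds. In the collinear limit each step has $s_k\le b_k$, so $\norm{p_{M-1}}_{\calH}\le\norm{F}_{\calH}$.
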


\begin{proof}
Suppose $M$ correlated neurons $\{f_m\}_{m=1}^M$ are assigned to the core $E_m = 0$. By generating an optimal parent $f_p \in \calH^{(l)}$, DEFT collapses $M$ copies into $1$ and frees $M-1$ child vessels. By Axiom \ref{ax:merge_constraint}, DEFT assigns these children full plasticity $E_j = 1$, successfully releasing the parameter space $\mathbb{R}^{(M-1) \times c}$ for optimization.

The distortion $\delta_k$ introduced by this operation is the composite projection error between the parent $f_p$ and its children $\{f_m\}_{m=1}^M$. Because the optimal parent minimizes this distance in $\calH^{(l)}$, the error is upper-bounded by the sub-optimal projection to the null operator $f_p \equiv \zeroVect$:
\begin{equation}
    \delta_k \triangleq \min_{f_p} \left( \sum_{m=1}^M \|f_m - f_p\|_{\calH^{(l)}}^2 \right)^{\frac{1}{2}} \le \left( \sum_{m=1}^M \|f_m - \zeroVect\|_{\calH^{(l)}}^2 \right)^{\frac{1}{2}}
\end{equation}
Applying the standard $\ell_p$-norm inequality $\|\xVect\|_2 \le \|\xVect\|_1$, we bound this sub-optimal projection by the linear sum of the child capacities:
\begin{equation}
    \delta_k \le \sum_{m=1}^M \|f_m\|_{\calH^{(l)}}
\end{equation}
Thus, the algorithm safely releases massive target parameter volume while the distortion remains strictly bounded by the initial capacities.
\end{proof}

\begin{corollary}[The Unified Cumulative Bound]
\label{cor:unified_bound}
By the triangle inequality in $\calH^{(l)}$, the total perturbation injected into the core of layer $l+1$ is bounded by the sum of the missing slack projections (Theorem \ref{thm:static_bound}) and the structural projection errors from $K^{(l)}$ merge operations. Because dynamic interference evaluates to zero via structural induction (Theorem \ref{thm:dynamic_decoupling}), the global normalized functional distortion is bounded as:
\begin{equation}
    D_{\text{global}}^{\text{total}} \triangleq \sum_{l=1}^{L-1} \frac{\norm{\Delta \sVect_{\text{core}}^{(l+1)}}_{\calH^{(l)}}}{E_{\text{core}}^{(l+1)}} \le \sum_{l=1}^{L-1} \left( \frac{|\calN_{\text{slack}}^{(l)}| \tau^{(l)} + \sum_{k=1}^{K^{(l)}} \delta_{k}}{E_{\text{core}}^{(l+1)}} \right)
\end{equation}
where $\delta_k$ is the bounded projection error of the $k$-th merge operation. Therefore, DEFT releases target parameter volume while bounding cumulative source degradation to an algorithmically verifiable constant.
\end{corollary}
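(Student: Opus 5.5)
The corollary follows from two ingredients applied layer by layer and then summed: a perturbation decomposition inside each layer, and the observation that only the static, initialization-time error needs to be counted. For a fixed layer $l$, I write the perturbation injected into the downstream core of layer $l+1$ as
\[
\Delta \sVect_{\text{core}}^{(l+1)} = \Delta \sVect_{\text{core}}^{\text{init}} + \Delta \sVect_{\text{core}}^{\text{merge}} + \Delta \sVect_{\text{core}}^{\text{dyn}}.
\]
Here the first term collects the severed slack-to-core projections. The second collects the functional change caused by replacing each merged group of children by its parent. The third is the drift accumulated during fine-tuning. Because $\calH^{(l)}$ is a normed (Hilbert) space, the triangle inequality bounds $\norm{\Delta \sVect_{\text{core}}^{(l+1)}}_{\calH^{(l)}}$ by the sum of the three norms.

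Each term is then bounded by a result already established.
\begin{enumerate}
\item The initialization term is at most $|\calN_{\text{slack}}^{(l)}|\,\tau^{(l)}$ by Theorem~\ref{thm:static_bound}, which relies on Assumption~\ref{assump:topology} so that no identity path bypasses the mask.
\item The dynamic term vanishes identically by Theorem~\ref{thm:dynamic_decoupling}. That theorem shows by induction that every core operator, and hence every signal $\sVect_{\text{core}}$, is unchanged for $t>0$.
\item For the merge term, I split it as a sum over the $K^{(l)}$ merge operations in layer $l$. The contribution of the $k$-th merge is the image under the core-directed output weights of the replaced block, $\sum_m f_m - f_p$ restricted to core outputs. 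I bound its norm by the projection error $\delta_k$ and then invoke Proposition~\ref{prop:epsilon_tradeoff}, which gives $\delta_k\le\sum_m\|f_m\|$.
\end{enumerate}
Dividing the per-layer inequality by the positive constant $E_{\text{core}}^{(l+1)}$ and summing over $l=1,\dots,L-1$ gives the claimed bound on $D_{\text{global}}^{\text{total}}$, since the definition of $D_{\text{global}}^{\text{total}}$ is itself a sum of normalized per-layer quantities. No Lipschitz propagation across layers is needed, because the corollary measures each layer's injected perturbation separately.

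The main obstacle is the merge step. $\delta_k$ is defined as the Frobenius-type tuple error $\big(\sum_m\|f_m-f_p\|^2\big)^{1/2}$, whereas the perturbation actually injected downstream is the aggregate $\|\sum_m f_m - f_p\|$ after masking to core outputs. These are not the same quantity, and a collapsed parent is duplicated only $[f_p,\dots,f_p]$ in the tuple picture while physically appearing once. I would first note that restricting to the core sub-vector of the output weights is a coordinate projection, so it cannot increase $\calH$-norms, exactly as in the proof of Theorem~\ref{thm:static_bound}. Then I would bound $\|\sum_m f_m - f_p\|\le\sum_m\|f_m\|+\|f_p\|$ by the triangle inequality.

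If this does not reduce cleanly to $\delta_k$, I would redefine $\delta_k$ in the corollary as the aggregate injected error and reuse the argument of Proposition~\ref{prop:epsilon_tradeoff} with the null parent as comparator. That argument bounds the aggregate error by the children's capacities, and from that bound the corollary follows verbatim.
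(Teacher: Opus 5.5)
Your skeleton matches the paper's. The paper's whole justification is the two sentences inside the corollary: the triangle inequality in $\calH^{(l)}$, Theorem~\ref{thm:static_bound} for the severed slack projections, Theorem~\ref{thm:dynamic_decoupling} to remove drift, then normalization by $E_{\text{core}}^{(l+1)}$ and summation. Those parts of your plan are fine. The paper simply asserts that the $k$-th merge injects at most $\delta_k$, and you have correctly isolated this as the weak link. Your first repair cannot work, however. With $\delta_k$ as defined in Proposition~\ref{prop:epsilon_tradeoff}, the required inequality $\normshort{(\sum_m f_m - f_p)|_{\text{core}}}_{\calH}\le\delta_k$ is false. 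Take two identical core neurons $f_1=f_2=f$. The tuple error is minimized at $f_p=f$, so $\delta_k=0$. HOPE deploys exactly this parent: here $a=2\normshort{f}_{\calH}^2$ and $b=2\normshort{f}_{\calH}$, so $s^*=\normshort{f}_{\calH}$ for every $E_{\text{rem}}$. Yet the pair's contribution to $\sVect_{\text{core}}$ drops from $2f$ to $f$ on the core coordinates. Suppose all of $f$'s outputs feed core neurons and the emptied child is the only slack neuron of layer $l$. Then, with both sides carrying the same factor $1/E_{\text{core}}^{(l+1)}$, the left side of the per-layer bound is $\normshort{f}_{\calH}>\tau^{(l)}$ while the right side is $\tau^{(l)}+0$. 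So the displayed inequality itself fails whenever the other layers contribute nothing. As you suspected, the tuple picture $[f_p,f_p]$ hides what is lost when the parent physically appears only once.

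Your fallback does not rescue this, because whichever way you redefine $\delta_k$, one link breaks.

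\emph{Minimal aggregate error.} Set $\delta_k=\min_{f_p}\normshort{\sum_m f_m-f_p}_{\calH}$. The null-parent comparison gives $\delta_k\le\sum_m\normshort{f_m}_{\calH}$, but $\delta_k$ no longer bounds what is injected. In the example the minimizer is $2f$ with error $0$, while HOPE deploys $f$.

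\emph{Realized error of the deployed parent.} With this definition the displayed inequality is an immediate triangle-inequality consequence of your decomposition. But the comparison argument of Proposition~\ref{prop:epsilon_tradeoff} no longer applies: it bounds a minimum, and HOPE's parent minimizes $\calJ_{\text{merge}}$, a ratio built on the tuple error, not $\normshort{\sum_m f_m-f_p}_{\calH}$. With $S=f_i+f_j$ one has $\normshort{S-f_p}_{\calH}^2=\normshort{S}_{\calH}^2-2s^*b+(s^*)^2$. Hence the realized error is at most $\normshort{S}_{\calH}$ only when $s^*\le 2b$. For $E_{\text{rem}}\to 0$ the formula gives $s^*=a/b$. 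Then $\normshort{S-f_p}_{\calH}\ge a/b-\normshort{S}_{\calH}$, which exceeds $\normshort{f_i}_{\calH}+\normshort{f_j}_{\calH}$ once the alignment $b$ is small.

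A correct version must therefore either take $\delta_k$ to be the realized injected error and bound its size separately (e.g.\ under the extra hypothesis $s^*\le 2b$), or change how the merge is accounted for. It cannot be derived from Proposition~\ref{prop:epsilon_tradeoff} as stated.
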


\section{Algorithms}

\begin{algorithm}[tb]
\caption{HOPE Progressive Encoding Loop}
\label{alg:hope_encoder}
\begin{algorithmic}[1]
\Require Pre-trained model $\calM$, Target density $\rho_{\text{target}}$
\State \textbf{Initialize} \textproc{CostManager}()

\Statex \Comment{\textbf{PHASE 1: $\calO(N^2)$ Initialization}}
\For{each layer $L \in \calM$}
    \State \textproc{InitializeCaches}($L$) \Comment{Computes initial capacities and $\calO(N^2)$ geometry}
    \State Anchor initial uncompressed capacity $E_0^{(L)}$
\EndFor

\Statex \Comment{\textbf{PHASE 2: True $\calO(1)$ Greedy Scan}}
\While{$\text{Density}(\calM) > \rho_{\text{target}}$}
    \State $a^* \gets \text{null}$, $\text{DR}_{\min} \gets \infty$
    
    \Statex \quad \Comment{\textit{Evaluate all cached actions using live residual capacities}}
    \For{each active layer $L$}
        \State $E_{\text{rem}} \gets \textproc{GetResidualCapacity}(L)$ \Comment{Dynamic tracking of layer shrinkage}
        \State $N_{\text{live}} \gets L.\text{ActiveCount}()$ \Comment{Live layer width for unbiased normalization}
        
        \For{each candidate $c \in L.\text{ActiveActions}()$} \Comment{Prunes, Merges, Evicts}
            \State $\Delta P_{\text{static}} \gets \textproc{GetStaticPayoff}(c)$ \Comment{Action-specific static payoff}
            \State $\calJ_{\text{cost}} \gets \textproc{ComputeDistortion}(c, E_{\text{rem}}, N_{\text{live}})$ \Comment{$\calO(1)$ scalar arithmetic}
            \State $\text{DR} \gets \calJ_{\text{cost}} / \Delta P_{\text{static}}$
            
            \If{$\text{DR} < \text{DR}_{\min}$}
                \State $\text{DR}_{\min} \gets \text{DR}$
                \State $a^* \gets c$
            \EndIf
        \EndFor
    \EndFor
    
    \Statex \quad \Comment{\textit{Execution and Local Synthesis}}
    \State \textproc{ExecutePhysicalReduction}($\calM$, $a^*$)
    \State \textproc{UpdateResidualCapacity}($a^*.L$, $a^*.\calJ_{\text{cost}}$)
    
    \Statex \quad \Comment{\textit{$\calO(1)$ Local Recalculation (Strict intra-layer isolation)}}
    \If{$a^*.\text{type} == \text{MERGE}$}
        \State $f_p \gets a^*.\text{parent}$
        \State $\text{Cache}.\text{UpdatePruneCapacity}(f_p)$ \Comment{Update solitary capacity for new parent}
        \For{each $f_{\text{neighbor}} \in a^*.L.\text{ActiveNeurons}()$}
            \State $\text{geo}_{\text{new}} \gets \textproc{PrecomputePairGeometry}(f_p, f_{\text{neighbor}})$
            \State $\text{Cache}.\text{Insert}((f_p, f_{\text{neighbor}}), \text{geo}_{\text{new}})$
        \EndFor
    \EndIf
    \Statex \quad \Comment{ZERO downstream recalculation required due to the static $\Delta P$ approximation}
\EndWhile

\State \Return $\calM$
\end{algorithmic}
\end{algorithm}

\begin{algorithm}[tb]
\caption{HOPE Subroutine: \textproc{UpdateResidualCapacity}}
\label{alg:update_capacity}
\begin{algorithmic}[1]
\Require Executed compression action $a^*$ (contains action type, target layer(s) $L$, and targeted neurons)
\State $\epsilon \gets 10^{-12}$ \Comment{Numerical stability floor for capacity tracking}

\If{$a^*.\text{type} == \text{PRUNE}$}
    \State $L \gets a^*.L$
    \State $f_{\text{vic}} \gets a^*.\text{victim}$
    
    \State $\Delta E \gets \text{Cache}.\text{GetPruneCapacity}(f_{\text{vic}})$ 
    \State $L.E_{\text{rem}} \gets \max(L.E_{\text{rem}} - \Delta E, \epsilon)$
    \State $L.\text{RemoveNeuron}(f_{\text{vic}})$
    
\ElsIf{$a^*.\text{type} == \text{MERGE}$}
    \State $L \gets a^*.L$
    \State $f_i, f_j \gets a^*.\text{children}$
    \State $f_p \gets a^*.\text{parent}$
    
    \Statex \quad \Comment{Net flux: capacity of the extinguished children minus the new parent}
    \State $\Delta E \gets \text{Cache}.\text{GetPruneCapacity}(f_i) + \text{Cache}.\text{GetPruneCapacity}(f_j) - \textproc{ComputeCapacity}(f_p)$
    
    \State $L.E_{\text{rem}} \gets \max(L.E_{\text{rem}} - \Delta E, \epsilon)$
    \State $L.\text{RemoveNeuron}(f_j)$ \Comment{Child $j$ is purged; Child $i$ serves as the vessel for $f_p$}

\ElsIf{$a^*.\text{type} == \text{EVICT}$}
    \Statex \quad \Comment{Eviction targets all internal reduction layers within the residual pathway}
    \For{each internal layer $L \in a^*.\text{block}$}
        \State $L.E_{\text{rem}} \gets \epsilon$ \Comment{Pathway capacity forcibly collapsed}
        \State $L.\text{ClearAllActiveNeurons}()$ \Comment{All internal operators are projected to $\zeroVect$}
    \EndFor
\EndIf

\end{algorithmic}
\end{algorithm}

\begin{algorithm}[tb]
\caption{Dispersed Elastic Fine-Tuning (DEFT)}
\label{alg:deft_pipeline}
\begin{algorithmic}[1]
\Require Pre-trained source parameters $\theta_{\text{src}}$, Source validation data $\calD_s^{\text{val}}$, Target data $\calD_t$, Percentile threshold $P$

\Statex \Comment{\textbf{PHASE 1: Capacity Evaluation \& Core/Slack Partitioning}}
\State \textbf{Merge Redundancies:} Use HOPE to compress highly correlated neurons into parent operators, freeing child vessels.
\State \textbf{Compute Capacities:} Calculate the functional capacity $\|f_i\|_{\calH}$ for all active neurons.
\State \textbf{Thresholding:} Determine global capacity threshold $\tau$ corresponding to the $P$-th percentile.
\For{each neuron $i$ in the network}
    \If{$\|f_i\|_{\calH} > \tau$ \textbf{and} neuron $i$ is active}
        \State $E_i \gets 0$ \Comment{\textbf{Universal Core:} Freeze to protect source knowledge}
    \Else
        \State $E_i \gets 1$ \Comment{\textbf{Plastic Slack:} Freed vessels and weak neurons learn target}
    \EndIf
\EndFor

\Statex \Comment{\textbf{PHASE 2: Consistency at Initialization (The Structural Mask)}}
\State Initialize target weights $\theta_0 \gets \theta_{\text{src}}$.
\For{each weight $w_{u,j}$ connecting an upstream neuron $j$ to a downstream neuron $u$}
    \If{$E_j = 1$ \textbf{and} $E_u = 0$}
        \State $M_{u,j} \gets 0$ \Comment{Sever cross-connections to prevent Slack drift from entering Core}
    \Else
        \State $M_{u,j} \gets 1$
    \EndIf
\EndFor
\State \textbf{Apply Mask:} $\theta_0 \gets \Mmat \odot \theta_0$ 

\Statex \Comment{\textbf{PHASE 3: Elastic Target Fine-Tuning (Dynamic Decoupling)}}
\State $\theta_t \gets \theta_0$, $\text{best\_h\_score} \gets -1$
\For{$\text{epoch} \in \{1, \dots, \text{FINETUNE\_EPOCHS}\}$}
    \For{mini-batch $(\xVect, \yVect) \in \calD_t^{\text{train}}$}
        \State $\gVect \gets \nabla_{\theta_t} \calL_{\text{target}}(\theta_t; \xVect, \yVect)$ \Comment{Compute raw target gradients}
        \State $\gVect_{\text{mod}} \gets \Emat \odot \gVect$ \Comment{Nullify gradients for frozen Core ($E_i=0$)}
        \State $\theta_t \gets \textproc{OptimizerStep}(\theta_t, \gVect_{\text{mod}})$ \Comment{Only plastic Slack updates}
    \EndFor
    
    \Statex \quad \textit{// Dual-Domain Evaluation (Harmonic Score Optimization)}
    \State $\text{Acc}_{\text{tgt}} \gets \textproc{Evaluate}(\theta_t, \calD_t^{\text{val}})$
    \State $\theta_{\text{src\_eval}} \gets \textproc{MaskSlackDrift}(\theta_t, \theta_{\text{src}}, \Emat)$ \Comment{Restore pristine Core / Mask out Slack drift}
    \State $\text{Acc}_{\text{src}} \gets \textproc{Evaluate}(\theta_{\text{src\_eval}}, \calD_s^{\text{val}})$
    \State $\text{H-Score} \gets \frac{2 \cdot \text{Acc}_{\text{tgt}} \cdot \text{Acc}_{\text{src}}}{\text{Acc}_{\text{tgt}} + \text{Acc}_{\text{src}}}$ 
    
    \If{$\text{H-Score} > \text{best\_h\_score}$}
        \State $\text{best\_h\_score} \gets \text{H-Score}$
        \State $\theta_{\text{best}} \gets \theta_t$
    \EndIf
\EndFor

\State \Return $\theta_{\text{best}}$
\end{algorithmic}
\end{algorithm}

\end{document}